\documentclass[10pt,journal,compsoc]{IEEEtran}
%

%
\ifCLASSOPTIONcompsoc
  \usepackage[nocompress]{cite}
\else
  \usepackage{cite}
\fi

\usepackage{graphicx}

\graphicspath{{../pdf/}{../jpeg/}}
\usepackage{amsthm}
\usepackage{amsmath}
\usepackage{amsfonts}
\usepackage{amssymb}
\usepackage{algorithmic}
\usepackage{bbm}
\usepackage{mathtools}
\usepackage[ruled,linesnumbered]{algorithm2e}
\usepackage{array}
\usepackage{url}
\usepackage{booktabs}
\usepackage{cases}

\newtheorem{lemma}{Lemma}
\newtheorem{proposition}{Proposition}
\usepackage{verbatim}
\usepackage{bm}
\usepackage{epstopdf}

\begin{document}

\title{Globally Optimal Vertical Direction Estimation \\in Atlanta World}

\author{Yinlong~Liu, Guang~Chen  and~Alois~Knoll

\IEEEcompsocitemizethanks{\IEEEcompsocthanksitem Yinlong~Liu and Alois~Knoll are with the Department of Informatics, Technische Universit\"at M\"unchen, M\"unchen,
Germany, 85748.\protect\\
E-mail: Yinlong.Liu@tum.de and knoll@in.tum.de
\IEEEcompsocthanksitem Guang Chen is with Tongji University, Shanghai, China, and with Technische Universit\"at M\"unchen, M\"unchen,
Germany. \protect\\
E-mail: guang@in.tum.de
}
\thanks{Manuscript received August 19, 20**; revised August 26, 20**.}}

\markboth{Journal of \LaTeX\ Class Files~20**}%
{Shell \MakeLowercase{\textit{et al.}}: Globally Optimal Vertical Direction Estimation in Atlanta World}
%


\IEEEtitleabstractindextext{%
\begin{abstract}
In man-made environments, such as indoor and urban scenes, most of the objects and structures are organized in the form of orthogonal and parallel planes. These planes can be approximated by the Atlanta world assumption, in which the normals of planes can be represented by the Atlanta frames. Atlanta world assumption, which can be considered as a generalized Manhattan world assumption, has one vertical frame and multiple horizontal frames. Conventionally, given a set of inputs such as surface normals, the Atlanta frame estimation problem can be solved in one-time by branch-and-bound (BnB). However, the runtime of the BnB algorithm will increase greatly when the dimensionality (i.e., the number of horizontal frames) increases. In this paper, we estimate only the vertical direction instead of all Atlanta frames at once.  Accordingly, we propose a vertical direction estimation method by considering the relationship between the vertical frame and horizontal frames. Concretely, our approach employs a BnB algorithm to search the vertical direction guaranteeing global optimality without requiring prior knowledge of the number of Atlanta frames. Four novel bounds by mapping 3D-hemisphere to a 2D region are investigated to guarantee convergence. We verify the validity of the proposed method in various challenging synthetic and real-world data.

\end{abstract}

\begin{IEEEkeywords}
global optimization, branch-and-bound, rotation search, imaging geometry.
\end{IEEEkeywords}}

\maketitle

\IEEEdisplaynontitleabstractindextext

%
\IEEEpeerreviewmaketitle

\IEEEraisesectionheading{\section{Introduction}\label{sec:introduction}}

%
%
%
%
\IEEEPARstart{I}{n} man-made environments, scenes usually have structural forms (e.g., the layout of buildings and many indoor objects such as furniture), which can be represented by a set of parallel and orthogonal planes~\cite{straub2018manhattan}. Atlanta world makes an assumption that
the man-made scene can be modeled by a horizontal plane (e.g., ground plane) and many vertical planes (e.g., buildings and walls), then the normals of the planes, which are called world frames, can describe the scenes abstractly. In other words, one vertical frame and multiple horizontal frames could represent Atlanta world~\cite{schindler2004atlanta,joo2018globally}. Therefore, it is a crucial step to estimate these vertical and horizontal frame directions in computer vision applications, which is named Atlanta frame estimation~\cite{joo2018globally,joo2019globally}. More specifically, structural world frame estimation could be utilized as key modules for various high-level vision applications such as scene understanding~\cite{hedau2009recovering,straub2018manhattan} and SLAM~\cite{sunderhauf2012switchable,zhou2015structslam}.

Mathematically, an orientation in 3D Euclidean space corresponds to a point in the 3D unit sphere (i.e., $\mathbb{S}^2$). This means that the Atlanta frame estimation which estimates multiple orientations is a multiple-clustering (also multi-model fitting) problem in $\mathbb{S}^2$. There have been lots of general multiple-clustering algorithms~\cite{magri2016multiple,barath2018multi,amayo2018geometric} and some of them have been applied in structural world frame estimation~\cite{kim2017multi,tardif2009non}. However, Atlanta frame estimation is not exactly the same as the general multiple-clustering problem. It has some special constraints that all horizontal frames are in a plane and the vertical frame is parallel to the normal of the plane. These special constraints reflect essential properties of the Atlanta world assumption. If these constraints are omitted, it will not only lead to a significant decrease in accuracy but also increase the dimensionality of the problem. Furthermore, most of the multiple-clustering algorithms cannot guarantee global optimality when there are lots of outliers in observations~\cite{bazin2012globally-ACCV,bazin2012globally-CVPR}. Therefore, recent developments in structural world frame estimation highlight the imminent need for robust and globally optimal methods by considering the above special orthogonal constraints~\cite{joo2019robust,joo2019globally}.

Recently, Manhattan frame estimation~\cite{straub2018manhattan}, which is a special case of the Atlanta frame estimation, is solved efficiently by a branch-and-bound (BnB) method with the orthogonal constraints~\cite{joo2019robust}. 
However, when the BnB method is extended to the Atlanta world~\cite{joo2018globally,joo2019globally}, two problems appear,
\begin{enumerate}

\item The algorithm requires the number of Atlanta frames to be specified, which can rarely be known in advance. Although an automatic method  is proposed to estimate the number of horizontal directions in \cite{joo2019globally}, if it is over- or under-estimated, the global optimum may not occur in the correct direction.
\item It will suffer the curse of dimensionality. There are a considerable number of horizontal directions, whose relationships are unknown which is different from Manhattan world assumption. Consequently, the dimensionality of the problem will increase with the number of  horizontal frames, and thus the runtime of the BnB algorithm will increase greatly. 
\end{enumerate}

In this paper, we focus on estimating the unique vertical direction instead of all directions in Atlanta world at once. There are two advantages in comparison with the one-time solving all directions methods as follows:
\begin{enumerate}
\item More flexible. The vertical direction is unique in Atlanta frames, and we can estimate the vertical direction even though we don't know the total number of the horizontal directions. Additionally, we can also estimate the vertical direction from some irregular Atlanta world scenes (e.g., cylindrical buildings in Atlanta, whose horizontal directions number $\rightarrow \infty$).

\item More efficient. Vertical direction estimation is solved in a closed two dimensional space $\mathbb{S}^2$, which is a low-dimensional problem. In other words, only estimating vertical direction can significantly avoid the curse of dimensionality in Atlanta world.


\end{enumerate}

Furthermore, estimating the vertical direction first is always favorable to following operations in practical applications (e.g., scene classification~\cite{gupta2013perceptual}, parsing indoor scenes~\cite{taylor2013parsing} and point set registration~\cite{cai2019practical}). Specially, it is also helpful for estimating other horizontal Atlanta frames, because given the vertical direction, all other horizontal directions will be in a plane, and estimating the other horizontal directions will be a one-dimensional clustering problem in angular space~\cite{joo2019globally}.  In other words, given the vertical direction in Atlanta world, it is easy to estimate other horizontal directions with or without knowing the number of  horizontal frames ~\cite{bishop2006pattern,joo2019globally}.

\subsection{Related Work}

There is a large body of literature concerned with structural world frame estimation~\cite{joo2019globally,straub2018manhattan,joo2019robust,lee2017line}. Since it is a clustering problem in $\mathbb{S}^2$ with some orthogonal constraints, we first review the works that apply the classical clustering or fitting method. With the definition of Atlanta world, \textit{Expectation Maximization} (EM) type algorithms, which are popular for solving the chicken-and-egg problems~\cite{li20073d}, are applied in direction estimation~\cite{schindler2004atlanta}. However, the EM-type algorithms are local methods and have no guarantee of the global optimality. Therefore, there is an evident risk of local minima, and their performances rely heavily on a good initialization~\cite{antunes2013global}. Besides, the \textit{RANdom SAmple Consensus} (RANSAC)~\cite{fischler1981random,choi1997performance,raguram2013usac} based multi-structure estimation algorithms (e.g., T-linkage~\cite{magri2014t} and J-linkage~\cite{toldo2008robust}) are applied in structural direction estimation~\cite{tardif2009non,kim2017multi}. These RANSAC-type methods are fast, accurate and have the best performances in many cases, but the their solution is sub-optimal due to their obvious heuristic nature~\cite{joo2019globally}. More recently, Straub  \textit{et al}.~\cite{straub2018manhattan} propose a real-time capable inference algorithm by considering the orthogonal constraints, which uses an adaptive Markov-Chain Monte-Carlo sampling algorithm.


To assure global optimality, J. Bazin \textit{et al}. propose globally optimal methods~\cite{joo2019globally,bazin2012globally-CVPR,bazin2012globally-ACCV,joo2018globally,joo2019robust} by applying branch-and-bound algorithm to solve a consensus set maximization problem. The fundamental theory of these global methods is rotation search~\cite{hartley2009global,parra2016_thesis}. Specifically, the problem is solved by combining \textit{Interval Analysis} theory with BnB algorithm in~\cite{bazin2012globally-CVPR}. By contrast, the method in~\cite{bazin2012globally-ACCV} is a natural application of Hartley and Kahl's rotation search theory in $SO(3)$~\cite{hartley2009global}. Furthermore, 2D-EGI (Extended Gaussian Image) and its integral image are applied in~\cite{joo2019robust} to accelerate the calculation of the bounds in rotation search. Besides, rotation search theory is also extended to Atlanta frame estimation in~\cite{joo2018globally,joo2019globally}. 

However, Atlanta world is more complex than Manhattan world geometrically, since it has more than three frames. Consequently, the globally searching method in~\cite{joo2018globally} requires the number of horizontal directions to be hand-tuned according to the scene, which seems unrealistic in practical applications. Therefore, an automatic two-stage method (meta-BnB) is proposed in~\cite{joo2019globally} to estimate the number of directions. 
Concretely, it first searches the vertical direction and the horizontal plane in $SO(3)$, then it estimates the horizontal directions in one dimensional angle space. It is worth noting that the meta-BnB is also based on rotation search theory in $SO(3)$. However, searching vertical directions is inherently optimized in $\mathbb{S}^2$, whose dimensionality is less than that of $SO(3)$.

Since the rotation search theory is closely related to our work, we then briefly review the rotation search theory in computer vision filed. 
The rotation search theory has achieved great success in geometric vision problems, for example, point set registration~\cite{yang2016go,campbell2018globally}, camera calibration~\cite{seo2009branch,heller2016globally} and relative pose estimation~\cite{yang2014optimal,hartley2009global}. Because of the great success of rotation search, there have been several works focusing on improving the efficiency of the algorithm~\cite{bustos2016fast,Straub_2017_CVPR,joo2019robust,campbell2018globally}. 

More specifically, most of the rotation search methods rely on the two following lemmas~\cite{hartley2009global}:
\begin{lemma}\label{Lemma 1}
For $\forall\bm{x}\in \mathbb{S}^2$, $R_a, R_b \in SO(3)$, then
\begin{equation}
\angle(R_a\bm{x},R_b\bm{x})\leq d_\angle(R_a,R_b)
\end{equation}
\end{lemma}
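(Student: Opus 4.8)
The plan is to reduce the two-rotation statement to a one-rotation statement and then verify the resulting scalar inequality by an explicit computation in a well-chosen frame. Since multiplying both arguments of $\angle(\cdot,\cdot)$ by the same rotation $R_a^{-1}$ preserves the angle between vectors, I would first write
\begin{equation}
\angle(R_a\bm{x},R_b\bm{x})=\angle(\bm{x},R\bm{x}),\qquad R:=R_a^{-1}R_b.
\end{equation}
By definition of the geodesic distance on $SO(3)$, the quantity $d_\angle(R_a,R_b)$ is exactly the rotation angle $\theta\in[0,\pi]$ of the composite rotation $R$. Hence the lemma is equivalent to the claim that a rotation displaces any point of $\mathbb{S}^2$ by an angle no larger than the rotation's own angle, i.e. $\angle(\bm{x},R\bm{x})\le\theta$.

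Next I would exploit the axis--angle representation of $R$: it rotates by $\theta$ about some unit axis $\bm{n}$. Choosing coordinates with $\bm{n}$ aligned to the $z$-axis and letting $\phi$ denote the (fixed) angle between $\bm{x}$ and $\bm{n}$, the point $\bm{x}$ and its image $R\bm{x}$ both lie on the circle of colatitude $\phi$, differing only by a longitudinal shift of $\theta$. A direct dot-product computation then gives
\begin{equation}
\cos\angle(\bm{x},R\bm{x})=\cos^2\phi+\sin^2\phi\,\cos\theta.
\end{equation}

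Because $\cos$ is decreasing on $[0,\pi]$, proving $\angle(\bm{x},R\bm{x})\le\theta$ is the same as proving $\cos\angle(\bm{x},R\bm{x})\ge\cos\theta$. Substituting the expression above and rearranging, this reduces to
\begin{equation}
\cos^2\phi\,(1-\cos\theta)\ge 0,
\end{equation}
which holds trivially since $\cos^2\phi\ge 0$ and $\theta\in[0,\pi]$ forces $1-\cos\theta\ge 0$. This closes the argument, with equality precisely when $\bm{x}\perp\bm{n}$ (so $\phi=\pi/2$) or $\theta=0$.

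I expect the only subtle point to be the reduction in the first step: one must verify that $d_\angle$ is genuinely the rotation angle of $R_a^{-1}R_b$ rather than some other invariant, and that the angle-invariance of $\angle(\cdot,\cdot)$ under a common left rotation is applied correctly; everything after that is routine. An alternative, coordinate-free route would be to note that the $SO(3)$ geodesic from $R_a$ to $R_b$ pushes forward to a curve $t\mapsto R(t)\bm{x}$ on $\mathbb{S}^2$ of length $\theta\sin\phi\le\theta$; since the spherical geodesic distance between its endpoints equals $\angle(R_a\bm{x},R_b\bm{x})$ and never exceeds the length of a connecting curve, the bound follows at once.
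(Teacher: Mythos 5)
Your proof is correct. The paper itself gives no proof of Lemma~1---it is imported from Hartley and Kahl's rotation search theory \cite{hartley2009global}---and your argument (reduce to a single rotation $R=R_a^{-1}R_b$ via angle-invariance under a common rotation, then check $\cos^2\phi+\sin^2\phi\cos\theta\geq\cos\theta$ in the axis-aligned frame) is essentially the standard one from that reference; the subtlety you flag about $R_aR_b^{-1}$ versus $R_a^{-1}R_b$ is harmless, since the two are related by conjugation and inversion, both of which preserve the rotation angle.
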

\noindent where $d_\angle(R_a,R_b)$ is the angle lying in the range $[0,\pi] $ of the rotation $R_a R_b^{-1}$ and $\angle(\cdot,\cdot) $ denotes the angular distance between vectors.
\begin{lemma}\label{Lemma 2}
For $\forall R_a,R_b\in SO(3)$, then
\begin{equation}
d_\angle(R_a,R_b)\leq\Vert \bm{r}_a-\bm{r}_b \Vert
\end{equation}
\end{lemma}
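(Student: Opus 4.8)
The plan is to read $\bm{r}_a,\bm{r}_b$ as the angle--axis (rotation-vector) encodings of $R_a,R_b$, i.e. $\bm{r}_a=\theta_a\hat{\bm{n}}_a$ with $\theta_a=\|\bm{r}_a\|\in[0,\pi]$ the rotation angle and $\hat{\bm{n}}_a$ the unit axis, and similarly for $b$. With this convention $d_\angle(R_a,R_b)$ is exactly the norm of the angle--axis vector of $R_aR_b^{-1}$, so the claim compares the rotation angle of the composite rotation with the Euclidean distance between the two rotation vectors. The guiding idea is that $d_\angle$ is the \emph{geodesic} (path-length) distance on $SO(3)$ for the bi-invariant metric normalised so that a rotation by angle $\phi$ sits at distance $\phi$ from the identity; under the identification $\bm{v}\mapsto[\bm{v}]_\times$ of $\mathbb{R}^3$ with the Lie algebra this metric restricts at the identity to the ordinary Euclidean norm. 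Hence it suffices to exhibit a \emph{single} path from $R_b$ to $R_a$ whose length is at most $\|\bm{r}_a-\bm{r}_b\|$, since $d_\angle$ is the infimum of lengths over all connecting paths.

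First I would take the straight segment $\bm{r}(t)=(1-t)\bm{r}_b+t\bm{r}_a$, $t\in[0,1]$, in rotation-vector space. Because the ball of radius $\pi$ is convex the segment stays inside it, so its image $R(t)=\exp\!\big([\bm{r}(t)]_\times\big)$ is a well-defined curve in $SO(3)$ joining $R_b$ to $R_a$, and the Euclidean length of the segment is exactly $\|\bm{r}_a-\bm{r}_b\|$. The crux is the pointwise estimate
\begin{equation}
\big\|\dot R(t)\,R(t)^{-1}\big\|\le\|\dot{\bm{r}}(t)\|,
\end{equation}
which, after integrating over $[0,1]$, shows that the length of $R(\cdot)$ in $SO(3)$ is at most $\|\bm{r}_a-\bm{r}_b\|$ and therefore $d_\angle(R_a,R_b)\le\|\bm{r}_a-\bm{r}_b\|$.

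The main obstacle is establishing this non-expansiveness of the exponential map, and this is where the real work lies. I would reduce it to bounding the operator norm of the differential: writing the right-trivialised derivative as $\dot R R^{-1}=[A(\bm{r})\dot{\bm{r}}]_\times$ with
\begin{equation}
A(\bm{r})=I+\frac{1-\cos\theta}{\theta^2}[\bm{r}]_\times+\frac{\theta-\sin\theta}{\theta^3}[\bm{r}]_\times^2,\qquad\theta=\|\bm{r}\|,
\end{equation}
inequality $(1)$ is equivalent to $\|A(\bm{r})\|\le1$. Since $[\bm{r}]_\times$ is skew-symmetric, $A(\bm{r})$ is normal and its singular values equal the moduli of its eigenvalues; a short computation gives the eigenvalue $1$ along the axis $\bm{r}$ and $\tfrac{\sin(\theta/2)}{\theta/2}$ (with multiplicity two) in the orthogonal plane. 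The elementary inequality $\sin x\le x$ then yields $\|A(\bm{r})\|\le1$ for every $\theta\in[0,\pi]$, closing the argument.

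Finally I would record the sanity check that the bound is sharp: when $\hat{\bm{n}}_a=\hat{\bm{n}}_b$ the two rotations share an axis, the segment $\bm{r}(t)$ remains radial, the perpendicular singular values never come into play, and both sides collapse to $|\theta_a-\theta_b|$. This confirms that no constant smaller than $1$ is admissible and pinpoints the collinear-axis case as the extremal configuration.
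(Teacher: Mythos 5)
Your proof is correct, but it takes a genuinely different route from the paper's. The paper never argues on $SO(3)$ directly: it splits Lemma~\ref{Lemma 2} into Lemma~\ref{Lemma 3}, $d_\angle(R_a,R_b)=2\angle(\bm{q}_a,\bm{q}_b)$, and Lemma~\ref{Lemma 4}, $2\angle(\bm{q}_a,\bm{q}_b)\leq\Vert\bm{r}_a-\bm{r}_b\Vert$, passing through the unit-quaternion double cover of Eq.~\eqref{Eq:3D-Exp-mapping} and outsourcing both proofs to \cite{hartley2009global,hartley2013rotation}. You instead stay on $SO(3)$: you read $d_\angle$ as the bi-invariant geodesic distance, push the straight segment in rotation-vector space through the exponential map, and prove non-expansiveness by bounding the right-trivialised differential, $\Vert A(\bm{r})\Vert\leq 1$. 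Your spectral computation is sound --- $A(\bm{r})$ is a polynomial in the normal matrix $[\bm{r}]_\times$, hence normal, with eigenvalue $1$ on the axis and the conjugate pair $\frac{\sin\theta}{\theta}\pm i\,\frac{1-\cos\theta}{\theta}$ on the orthogonal plane, whose common modulus is $\sin(\theta/2)/(\theta/2)\leq 1$; your wording slightly conflates these complex eigenvalues with their modulus, but normality makes the singular-value conclusion exact. What each approach buys: the paper's factorization yields the intermediate equality of Lemma~\ref{Lemma 3} and the quaternion picture that it then reuses to motivate its 2D exponential parametrization of $\mathbb{S}^2_\ominus$ (Proposition~\ref{2D-exp-inequation} is the hemisphere analogue of Lemma~\ref{Lemma 4}), whereas your argument is self-contained on $SO(3)$, exhibits the exact local contraction factor $\sin(\theta/2)/(\theta/2)$ of the exponential chart, and pinpoints the collinear-axis case as extremal, showing the constant $1$ is sharp. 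The one ingredient you assert rather than prove is the identification of $d_\angle$ with the infimum of path lengths for the bi-invariant metric; this is true, but it is of comparable depth to the quaternion lemmas the paper cites, so strictly speaking your proof trades one outsourced fact for another. Indeed the two routes are the same inequality in disguise: $\bm{r}\mapsto\bm{q}$ in Eq.~\eqref{Eq:3D-Exp-mapping} is, up to the factor $1/2$, the Riemannian exponential map of $\mathbb{S}^3$, so Lemma~\ref{Lemma 4} is precisely non-expansiveness of a sphere's exponential map --- the $\mathbb{S}^3$ avatar of your bound $\Vert A(\bm{r})\Vert\leq 1$.
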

\noindent where $\bm{r}_a$ and $\bm{r}_b$ are their corresponding angle-axis representations. In Lemma~\ref{Lemma 2}, there is a clear indication that the angle distance of two rotations is less than the Euclidean distance in their angle-axis representation. These two lemmas are the basis for the success of the rotation search theory.

Additionally, it is also worth noting that the rotation search usually means optimization in $SO(3)$, which is closely related to $\mathbb{S}^3$. Precisely, the homomorphism from a unit quaternion sphere (i.e., $\mathbb{S}^3$) to $SO(3)$ is a two-to-one mapping, and then the searching domain $SO(3)$ may be expressed as a hemisphere (including equator) of the unit quaternion sphere~\cite{Straub_2017_CVPR,hartley2013rotation}. However, the estimation of directions in three-dimensional Euclidean space (i.e., Manhattan or Atlanta frame) is inherently optimized in $\mathbb{S}^2$. Unfortunately, there is still a lack of rigid theories regarding globally optimal optimization in $\mathbb{S}^2$. In order to estimate the vertical directions in Atlanta world, we originally propose some new and solid mathematical conclusions about searching in $\mathbb{S}^2$.


\subsection{Our Contribution}

In this paper, to overcome the curse of the dimensionality and avoid the difficulty of requiring the user to specify the number of Atlanta frames,
we propose a novel  method for vertical direction estimation in Atlanta world.
The contributions of this work are mainly as follows:
\begin{itemize}
 
 \item We propose a global searching method for estimating vertical direction, which is different from conventional rotation search in $SO(3)$~\cite{joo2019globally}. Since the domain of the vertical directions is inherently in $\mathbb{S}^2$, then our searching method is more efficient in vertical direction estimation. 
 
 \item Four new different bounds for BnB algorithm are investigated.  In contrast to rotation search theory in $SO(3)$, more parametrizations for hemisphere are considered, including exponential mapping, stereographic projection and sphere coordinate system.  To  the best of our knowledge, it is the first to propose such bounds in $\mathbb{S}^2$ to solve structural world frame estimation problem.
\end{itemize} 
 
 \section{Methods}
 \subsection{Problem Formulation}
In this paper, we estimate the vertical direction from the surface normals in Atlanta world. We denote the input normal set as $\mathcal{N}=\lbrace \bm{n}_j\rbrace_{j=1}^N$,  where $\bm{n}_j \in \mathbb{S}^2$ is the  $j$-th effective unit normal, and $N$ is the number of input normals. 
In addition, the unknown-but-sought vertical direction is denoted as $\bm{v}$. It is 
in a hemisphere ($\mathbb{S}^2_\ominus$), which is defined as:
\begin{equation}
\mathbb{S}^2_\ominus=\lbrace \bm{x}\in\mathbb{S}^2\vert x_3\geq0\rbrace
\end{equation}
where $\bm{x}=[x_1,x_2,x_3]^T$ is a unit vector in $\mathbb{R}^3$.
Accordingly, the angle of vertical direction and one of the surface normals is lying in range $[0,\pi]$.

To estimate vertical direction robustly, we then apply the inlier maximisation approach to formulate the objective function as
\begin{subequations}
\begin{equation}
\max_{\bm{v}\in \mathbb{S}^2_\ominus}\sum_{j=1}^{N} \mathbbm{1}\Big(\mathcal{S}^\perp_j \vee\mathcal{S}_j^{\parallel^+} \vee\mathcal{S}_j^{\parallel^-} \Big)
\end{equation}
\begin{align}
\mathcal{S}^{\parallel^+}_j &=\mathbbm{1}\big(\angle(\bm{v},\bm{n}_j)\leq\tau\big) 
\label{Eq:inlier_par_0}
\\
\mathcal{S}^{\parallel^-}_j &=\mathbbm{1}\big( \angle(\bm{v},\bm{n}_j)\geq\pi-\tau \big)
\label{Eq:inlier_par_pi}
\\
\mathcal{S}^\perp_j&=\mathbbm{1}\big( \vert\angle(\bm{v},\bm{n}_j)-\dfrac{\pi}{2}\vert\leq\tau\big) \label{Eq:inlier_perp}
\end{align}	
\end{subequations}

\noindent where $\mathbbm{1}(\cdot)$ is an indicator function which returns 1 if the condition · is true and 0 otherwise and $\vee$ is the logical \textit{OR}   operation.  $\vert\cdot\vert$ is abs function and $0<\tau<\pi/2$ is the inlier threshold.  Eq.~\eqref{Eq:inlier_par_0}, \eqref{Eq:inlier_par_pi} and \eqref{Eq:inlier_perp}
mean that only the surface normals, which are parallel or perpendicular to vertical direction, are inliers. Additionally, because $\bm{n}_j^T\bm{v}=\cos(\angle(\bm{v},\bm{n}_j))$,  and when $x\in[0,\pi]$, $\cos(x)$ is a monotonically decreasing function, then
an equivalent formulation can be given by
\begin{subequations}\label{obj-sum}
\begin{equation}
\max_{\bm{v}\in \mathbb{S}^2_\ominus}\sum_{j=1}^N\mathbbm{1}\Big ( Q_j^{\parallel}\vee Q_j^{\perp}
\Big )
\end{equation}
\begin{align}
Q_j^{\parallel}&= \mathbbm{1}\big(\vert\bm{n}_j^T\bm{v}\vert\geq\cos(\tau) \big)
\\
Q_j^{\perp}&=\mathbbm{1}\big( \vert\bm{n}_j^T\bm{v}\vert\leq\sin(\tau) \big)
\end{align}	
\end{subequations}

\noindent Since there is no $\arccos$ operation to solve angle in the reformulations, it is more efficient than operating angle inequations.

In rotation search~\cite{joo2019globally}, it finds an optimal rotated motion $R$ rather than the optimal direction vector $\bm{v}$ directly. Concretely, given a initial direction vector $\bm{v}_0=[0,0,1]^T$ and because $R\in SO(3)$, then $\bm{v}=R\bm{v}_0\in\mathbb{S}^2$. For estimating vertical direction, it is sufficient to search the entire rotation domain and find the optimal $R$ to satisfy that $R\bm{v}_0$ is the optimal vertical direction.

\subsection{Branch-and-Bound}

Finding the optimal $\bm{v} \in \mathbb{S}^2_\ominus$ to maximize the cardinality of the inlier set is by no mean a trivial problem~\cite{fisher1993statistical,Johnson1977The}. Additionaly, the outlier observations, which are unavoidable in the real applications, increase the “hardness” of the estimation problem. Because it is well known that a general robust estimation with outlier observations is an NP-hard problem~\cite{Chin2018RobustFI}. 

To obtain the robust optimal vertical direction, we then use the BnB algorithm. The BnB algorithm is one of the most commonly used tools for solving NP-hard optimization problems, and it is widely applied in many global optimization problems~\cite{Morrison2016BranchandboundAA}. Briefly, the BnB algorithm recursively divides the searching space into smaller spaces and estimates the upper bound and lower bound of the optimum in each subspace. Then, it removes the sub-spaces which cannot produce a better solution than the best one found so far by the algorithm. The above process is repeated until the best optimum is found within the desired accuracy. The BnB algorithm for estimating vertical direction globally in Atlanta world is outlined in \textbf{Algorithm~\ref{algorithm}}. It is worth noting that the algorithm only needs the surface normals and the inlier threshold as the inputs without the prior knowledge of the number of horizontal frames.

\begin{algorithm}
	\caption{Estimating vertical direction globally }\label{algorithm}
	\KwIn{surface normal set $\lbrace \bm{n}_j\rbrace_{j=1}^N$, inlier threshold $\tau$}
	\KwOut{optimal vertical direction $\bm{v}^*$}
	
	Initialize the searching domain $\mathbb{D}$,  upper bound $U\leftarrow N$, lower bound $L\leftarrow0$, the best branch $\mathbb{B}\leftarrow\mathbb{D}$ and a queue $q\leftarrow\emptyset$\;
	
	\While{$\vert U-L \vert\geq1$}
	{
		Divide the best branch $\mathbb{B}$ into sub-branches\;
		Add the sub-branches with their bounds into $q$\;
		Update $L\leftarrow\max\{L_i\}$, $U\leftarrow\max\{U_i\}$\ for all branches in $q$\;
		Remove the branch that $U_i<L$ in $q$ \;
		Update the best branch $\mathbb{B}$, which has the maximum upper bound in $q$\;
		Remove the best branch from $q$\;
	}
	$\bm{v}^*\leftarrow$ center point in best branch $\mathbb{B}$ 
\end{algorithm}

The key of the BnB algorithm is estimating the upper and lower bounds of the optimum in each subspace tightly and efficiently. In this paper, two general bounds are proposed as follows:

\begin{proposition}[General bounds-1]\label{Pro:General bounds-1}
	Given a branch $\mathbb{B}$, if $\exists\bm{v}_c\in\mathbb{B}$, $\forall\bm{v}\in\mathbb{B},  \overline{\phi}_j\triangleq\max\angle(\bm{v},\bm{n}_j)$, and $\underline{\phi}_j\triangleq\min\angle(\bm{v},\bm{n}_j)$ then the upper bound  can be: 
	
	\begin{subequations}\label{Eq:general upper bound-1}
		
		\begin{equation}
		U_{s}(\mathbb{B})= \sum_{j=1}^N \mathbbm{1}\Big( 
		\overline{\mathcal{S}}_j^{\perp }
		\vee		\overline{\mathcal{S}}_j^{\parallel^+}
		\vee
		\overline{\mathcal{S}}_j^{\parallel^-}
		 \Big)
		\end{equation}
		
		\begin{align}
		\overline{\mathcal{S}}_j^{\parallel^+}&=\mathbbm{1}\big(\underline{\phi}_j\leq\tau\big)
		\\
		\overline{\mathcal{S}}_j^{\parallel^-}&=\mathbbm{1}\big(\overline{\phi}_j\geq\pi-\tau\big)
		\\
		\overline{\mathcal{S}}_j^{\perp}&=\mathbbm{1}\big(\underline{\phi}_j-\tau\leq\frac{\pi}{2}\leq\overline{\phi}_j+\tau\big)
		\end{align}
	\end{subequations}

	\noindent  the lower bound can be: 
	\begin{subequations}\label{Eq:general lower bound-1}
		\begin{equation}
		L_{s}(\mathbb{B})= \sum_{j=1}^N 
		\mathbbm{1}\Big(
		\underline{\mathcal{S}}_j^{\perp }
		\vee
		\underline{\mathcal{S}}_j^{\parallel^+}
		\vee
		\underline{\mathcal{S}}_j^{\parallel^-}
		\Big)
		\end{equation}
		
		\begin{align}
		\underline{\mathcal{S}}_j^{\parallel^+}&=\mathbbm{1}\big(\angle(\bm{v}_c,\bm{n}_j)\leq\tau\big)
		\\
		\underline{\mathcal{S}}_j^{\parallel^-}&=\mathbbm{1}\big(\angle(\bm{v}_c,\bm{n}_j)\geq\pi-\tau\big)
		\\
		\underline{\mathcal{S}}_j^{\perp}&=\mathbbm{1}\big(|\angle(\bm{v}_c,\bm{n}_j)-\frac{\pi}{2}|\leq\tau\big)
		\end{align}
	\end{subequations}

\end{proposition}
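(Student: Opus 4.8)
The plan is to treat the lower bound as essentially immediate and to prove the upper bound by a pointwise, term-by-term domination argument, exploiting that the over-approximating indicators on the right-hand side of Eq.~\eqref{Eq:general upper bound-1} are \emph{constants} on the whole branch $\mathbb{B}$: they depend only on the branch extrema $\overline{\phi}_j,\underline{\phi}_j$ and not on any individual $\bm{v}$. This constancy is what lets a per-point inequality be upgraded to a bound on the maximum over $\mathbb{B}$ without any max-of-sum subtleties.

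For the lower bound I would simply note that $\bm{v}_c\in\mathbb{B}$ is a feasible candidate and that $L_s(\mathbb{B})$ is nothing but the objective of Eq.~\eqref{obj-sum} evaluated at $\bm{v}=\bm{v}_c$: the indicators $\underline{\mathcal{S}}_j^{\parallel^+},\underline{\mathcal{S}}_j^{\parallel^-},\underline{\mathcal{S}}_j^{\perp}$ are literally $\mathcal{S}_j^{\parallel^+},\mathcal{S}_j^{\parallel^-},\mathcal{S}_j^{\perp}$ with $\bm{v}$ specialized to $\bm{v}_c$. An objective value attained at a feasible point cannot exceed the maximum over $\mathbb{B}$, so $L_s(\mathbb{B})\le\max_{\bm{v}\in\mathbb{B}}(\cdots)$ with no further argument.

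For the upper bound I would fix an arbitrary $\bm{v}\in\mathbb{B}$ and show, for each index $j$, that activation of the inner disjunction $\mathcal{S}_j^\perp\vee\mathcal{S}_j^{\parallel^+}\vee\mathcal{S}_j^{\parallel^-}$ at $\bm{v}$ forces the constant over-approximating disjunction $\overline{\mathcal{S}}_j^\perp\vee\overline{\mathcal{S}}_j^{\parallel^+}\vee\overline{\mathcal{S}}_j^{\parallel^-}$ to be active. The only facts used are the defining extremal inequalities $\underline{\phi}_j\le\angle(\bm{v},\bm{n}_j)\le\overline{\phi}_j$. The parallel cases are one-sided and direct: $\angle(\bm{v},\bm{n}_j)\le\tau$ gives $\underline{\phi}_j\le\tau$ by minimality, and $\angle(\bm{v},\bm{n}_j)\ge\pi-\tau$ gives $\overline{\phi}_j\ge\pi-\tau$ by maximality. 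Summing the resulting pointwise implication over $j$ yields $\sum_j\mathbbm{1}(\cdots)(\bm{v})\le U_s(\mathbb{B})$ for that $\bm{v}$; since the right-hand side is independent of $\bm{v}$, taking the maximum over $\mathbb{B}$ closes the argument.

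The step I expect to require the most care — and the genuinely design-sensitive part of the statement — is the perpendicular case, because its over-approximation $\overline{\mathcal{S}}_j^\perp$ couples \emph{both} extrema $\underline{\phi}_j$ and $\overline{\phi}_j$, in contrast to the one-sided parallel conditions. Here $\mathcal{S}_j^\perp=1$ means $\tfrac{\pi}{2}-\tau\le\angle(\bm{v},\bm{n}_j)\le\tfrac{\pi}{2}+\tau$, and I would relax each half of this band against the appropriate extremum: minimality applied to the upper half gives $\underline{\phi}_j-\tau\le\tfrac{\pi}{2}$, while maximality applied to the lower half gives $\tfrac{\pi}{2}\le\overline{\phi}_j+\tau$, and the two together are exactly $\overline{\mathcal{S}}_j^\perp=1$. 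Matching each inequality to the correct extremum is precisely where a careless argument would break, and it is also the reason the bound is sound yet generically loose (the condition can hold even when the interval $[\underline{\phi}_j,\overline{\phi}_j]$ is wider than $2\tau$). I would finish with the minor well-definedness remark that $\overline{\phi}_j$ and $\underline{\phi}_j$ are genuinely attained, since $\angle(\cdot,\bm{n}_j)$ is continuous and $\mathbb{B}$ is compact, which legitimizes writing $\max$ and $\min$ and the extremal inequalities used throughout.
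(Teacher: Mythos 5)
Your proof is correct and follows the same route as the paper's appendix argument: the lower bound is the objective of Eq.~\eqref{obj-sum} evaluated at the feasible point $\bm{v}_c$, and the upper bound follows from the per-index implications $\mathcal{S}_j^{\parallel^+}\Rightarrow\overline{\mathcal{S}}_j^{\parallel^+}$, $\mathcal{S}_j^{\parallel^-}\Rightarrow\overline{\mathcal{S}}_j^{\parallel^-}$, $\mathcal{S}_j^{\perp}\Rightarrow\overline{\mathcal{S}}_j^{\perp}$ via $\underline{\phi}_j\leq\angle(\bm{v},\bm{n}_j)\leq\overline{\phi}_j$, with the branch-constant right-hand side allowing the maximum over $\mathbb{B}$ to be taken at the end. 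Your handling of the two-sided perpendicular case (pairing each half of the band with the opposite extremum) and the attainment remark via compactness are exactly the points that need care, and you got both right.
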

\begin{proof}
	The rigorous proof can be found in appendix~A.
\end{proof}
\begin{proposition}[General bounds-2]\label{Pro:General bounds-2}
	Given a branch $\mathbb{B}$, if $\exists\bm{v}_c\in\mathbb{B}$, $\forall\bm{v}\in\mathbb{B},  \angle(\bm{v},\bm{v}_c)\leq\max\angle(\bm{v},\bm{v}_c)\triangleq\psi$, then the upper bound  can be: 
	
	\begin{subequations}\label{Eq:general upper bound-2}
		
		\begin{equation}
		U_{q}(\mathbb{B})= \sum_{j=1}^N \mathbbm{1}\Big(
			\overline{Q}_j^{\parallel}
		\vee
		\overline{Q}_j^{\perp }	
		\Big)
		\end{equation}
		
		\begin{align}
		\overline{Q}_j^{\parallel}&=\mathbbm{1}\big(\vert\bm{n}_j^T\bm{v}_c\vert\geq\cos(\lfloor\tau+\psi\rfloor)\big)
		\\
		\overline{Q}_j^{\perp}&=\mathbbm{1}\big(\vert\bm{n}_j^T\bm{v}_c\vert\leq\sin(\lfloor\tau+\psi\rfloor)\big)
		\end{align}
	\end{subequations}
where 
\begin{equation}
\lfloor\tau+\psi\rfloor=\begin{cases}
\tau+\psi,& \tau+\psi<\pi/2 \\
\pi/2, &\tau+\psi\geq\pi/2 
\end{cases}
\end{equation}
\noindent  the lower bound can be: 
	\begin{subequations}\label{Eq:general lower bound-2}
		\begin{equation}
		L_{q}(\mathbb{B})= \sum_{j=1}^N 
		\mathbbm{1}\Big(
		\underline{Q}_j^{\parallel}
		\vee
		\underline{Q}_j^{\perp }
		\Big)
		\end{equation}
		
		\begin{align}
		\underline{Q}_j^{\parallel}&=\mathbbm{1}\big(\vert\bm{n}_j^T\bm{v}_c\vert\geq\cos(\tau)\big)
		\\
		\underline{Q}_j^{\perp}&=\mathbbm{1}\big(\vert\bm{n}_j^T\bm{v}_c\vert\leq\sin(\tau)\big)
		\end{align}
	\end{subequations}

\end{proposition}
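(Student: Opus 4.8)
The plan is to treat the two bounds separately, both resting on a single geometric fact: since $\angle(\cdot,\cdot)$ is the geodesic (great-circle) metric on $\mathbb{S}^2$, the triangle inequality gives, for every $\bm{v}\in\mathbb{B}$ and every normal $\bm{n}_j$,
\begin{equation}
|\angle(\bm{v},\bm{n}_j)-\angle(\bm{v}_c,\bm{n}_j)|\leq\angle(\bm{v},\bm{v}_c)\leq\psi .
\end{equation}
Writing $\phi\triangleq\angle(\bm{v},\bm{n}_j)$ and $\phi_c\triangleq\angle(\bm{v}_c,\bm{n}_j)$, this pins the center angle to the window $\phi_c\in[\phi-\psi,\phi+\psi]$. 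I would also record the elementary translation between the dot-product tests and angle tests: because $\bm{n}_j^T\bm{v}=\cos\phi$ with $\phi\in[0,\pi]$ and $\cos$ is monotone decreasing there, the parallel test $Q_j^{\parallel}$ is equivalent to $\phi\leq\tau$ or $\phi\geq\pi-\tau$, and the perpendicular test $Q_j^{\perp}$ is equivalent to $\tfrac{\pi}{2}-\tau\leq\phi\leq\tfrac{\pi}{2}+\tau$ (using $\sin\tau=\cos(\tfrac{\pi}{2}-\tau)$ and $0<\tau<\pi/2$).

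For the \textbf{lower bound}, I observe that $\underline{Q}_j^{\parallel}$ and $\underline{Q}_j^{\perp}$ are exactly the objective's inlier indicators evaluated at the feasible point $\bm{v}_c$. Hence $L_q(\mathbb{B})$ is the objective value attained by a specific member of $\mathbb{B}$, so it cannot exceed $\max_{\bm{v}\in\mathbb{B}}$; no further work is needed here.

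For the \textbf{upper bound}, the goal is to establish the set inclusion $\{j:Q_j^{\parallel}(\bm{v})\vee Q_j^{\perp}(\bm{v})\}\subseteq\{j:\overline{Q}_j^{\parallel}\vee\overline{Q}_j^{\perp}\}$ for \emph{every} $\bm{v}\in\mathbb{B}$; since the right-hand set depends only on $\bm{v}_c$ and $\psi$, the counting inequality follows for each $\bm{v}$, and taking the maximum over $\bm{v}\in\mathbb{B}$ yields the stated bound. I would verify the inclusion by a three-way case analysis on why $j$ is an inlier, using the angle window above with $\tau'\triangleq\lfloor\tau+\psi\rfloor$. If $\phi\leq\tau$ then $\phi_c\leq\tau+\psi$, and if $\phi\geq\pi-\tau$ then $\phi_c\geq\pi-(\tau+\psi)$; either way $|\cos\phi_c|\geq\cos\tau'$, so $\overline{Q}_j^{\parallel}$ fires. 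If instead $\tfrac{\pi}{2}-\tau\leq\phi\leq\tfrac{\pi}{2}+\tau$, then $\phi_c\in[\tfrac{\pi}{2}-(\tau+\psi),\tfrac{\pi}{2}+(\tau+\psi)]$, giving $|\cos\phi_c|\leq\sin\tau'$, so $\overline{Q}_j^{\perp}$ fires.

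The delicate point — and the step I expect to require the most care — is the clamping of $\lfloor\tau+\psi\rfloor$ at $\pi/2$. When $\tau+\psi\geq\pi/2$ the raw quantity leaves the interval where $\cos$ and $\sin$ are monotone on $[0,\pi/2]$, so using it literally could make $\sin(\tau+\psi)$ \emph{shrink} and turn the perpendicular relaxation into something strictly tighter than intended, which would be invalid. Replacing it by $\pi/2$ forces $\cos\tau'=0$ and $\sin\tau'=1$, so both relaxed tests become trivially satisfiable and $U_q(\mathbb{B})=N$, a correct (if loose) bound. I would therefore split into the case $\tau+\psi<\pi/2$, where the window argument above applies verbatim, and the case $\tau+\psi\geq\pi/2$, where the inclusion is automatic. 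The other spot to check carefully is the parallel branch near $\phi=\pi$, where $\cos\phi_c$ is negative and one must track the absolute value: there $\phi_c\geq\pi-\tau'$ gives $\cos\phi_c\leq-\cos\tau'$, hence $|\cos\phi_c|\geq\cos\tau'$, which is exactly the sign bookkeeping that makes $\overline{Q}_j^{\parallel}$ come out right.
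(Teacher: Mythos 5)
Your proof is correct and follows the same route the paper takes: the lower bound is the objective evaluated at the feasible point $\bm{v}_c$, and the upper bound follows from the triangle inequality for angular distance on $\mathbb{S}^2$ (giving $|\angle(\bm{v},\bm{n}_j)-\angle(\bm{v}_c,\bm{n}_j)|\leq\psi$) combined with monotonicity of $\cos$ on $[0,\pi]$, exactly as in the paper's appendix argument underlying Proposition~\ref{Pro:General bounds-2}. Your treatment of the clamp $\lfloor\tau+\psi\rfloor$ — splitting into $\tau+\psi<\pi/2$, where the window argument applies, and $\tau+\psi\geq\pi/2$, where $\cos\tau'=0$ makes the bound trivially $N$ — is precisely the role the paper assigns to that definition, so nothing is missing.
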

\begin{proof}
	The completed proof can be found in appendix~B.
\end{proof}

Actually, if they have the same $\bm{v}_c$ in both general bounds, then $L_s=L_q$. The main difference between general bounds-1 and general bounds-2 is the calculation of the upper bound. More specifically, given a subspace $\mathbb{B}$, $U_s(\mathbb{B})\leq U_q(\mathbb{B})$, which means general bounds-1 is tighter than general bounds-2 (Rigorous mathematical proof can be found in appendix~C). 
In the next sections, we introduce how to calculate the upper bound in detail.


\subsection{Parametrizing the Searching Domain }

Before estimating the bounds in BnB algorithm, we must first parametrize the searching space. In this section, we first recall the parametrization of $SO(3)$ in rotation search theory~\cite{hartley2009global,joo2019globally}, and introduce three different parametrizations of $\mathbb{S}^2_\ominus$. Furthermore, we analysis the similarities and differences of the parametrizations between $SO(3)$ and $\mathbb{S}^2_\ominus$.
\subsubsection{Parametrization of $SO(3)$}

It is well known that rotation space $SO(3)$ can be minimally parametrized with the angle-axis vector, whose norm is the angle of rotation and direction is the axis of the rotation. Therefore, the space of all 3D rotations can be represented by a solid ball of radius ${\pi}$ in $\mathbb{R}^3$~\cite{yang2016go}. Furthermore, the $\pi$-ball is usually relaxed to a 3D cube for ease of manipulation in the BnB algorithm. Thus Lemma~\ref{Lemma 1} and~\ref{Lemma 2} can be used to efficiently estimate the bounds of rotation search theory.

The Lemma~\ref{Lemma 2} may seem like one of the most fundamental parts in rotation search theory. Let us get down to the details of Lemma~\ref{Lemma 2}, and introduce the quaternion to build the connection with the parametrization of $\mathbb{S}^2_\ominus$. Geometrically, the mapping from  quaternions  ($ \mathbb{S}^3$) to rotations ($ SO(3)$) is a two-to-one mapping. We then denote a hyper-hemisphere as follows:
\begin{equation}
\mathbb{S}^3_\ominus=\lbrace{\bm{q}\in \mathbb{S}^3\vert q_1\geq0}\rbrace
\end{equation}
where $\bm{q}=[q_1,q_2,q_3,q_4]^T$ is a unit vector in $\mathbb{R}^4$.
Thus the ``upper'' hemisphere $\mathbb{S}^3_\ominus$ of the unit quaternion sphere is in one-to-one correspondence with the rotation $\pi$-ball, except at the boundary, where the correspondence is two-to-one~\cite{hartley2009global}. Therefore a conclusion follows as
\begin{lemma} \label{Lemma 3}
For $\forall\bm{q}_a,\bm{q}_b\in\mathbb{S}^3_\ominus$, then
\begin{equation}\label{Eq:Lemma 3}
d_\angle(R_a,R_b)=2\angle(\bm{q}_a,\bm{q}_b)
 \end{equation} 
\end{lemma}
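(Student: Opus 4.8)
The plan is to establish the identity $d_\angle(R_a,R_b)=2\angle(\bm{q}_a,\bm{q}_b)$ by reducing it to the standard relation between the rotation angle of a single rotation and the scalar part of its representing unit quaternion, then applying this to the composite rotation $R_aR_b^{-1}$. First I would recall that for any rotation $R\in SO(3)$ with angle-axis representation $(\theta,\hat{\bm{u}})$, the corresponding unit quaternion is $\bm{q}=[\cos(\theta/2),\,\sin(\theta/2)\hat{\bm{u}}]^T$, so that its scalar part satisfies $q_1=\cos(\theta/2)$. Since $d_\angle(R_a,R_b)$ is defined in the excerpt as the rotation angle of $R_aR_b^{-1}$ lying in $[0,\pi]$, the key subgoal is to show that this angle equals $2\angle(\bm{q}_a,\bm{q}_b)$, where the right-hand angle is the angular (geodesic) distance between the two quaternions on $\mathbb{S}^3$.

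Next I would use the quaternion composition rule: the unit quaternion representing $R_aR_b^{-1}$ is $\bm{q}_a\bm{q}_b^{-1}=\bm{q}_a\bar{\bm{q}}_b$, whose scalar part is exactly the quaternion inner product $\bm{q}_a^T\bm{q}_b=\langle\bm{q}_a,\bm{q}_b\rangle$. Writing $\delta=d_\angle(R_a,R_b)$ for the angle of this composite rotation, the single-rotation identity gives $\cos(\delta/2)=\langle\bm{q}_a,\bm{q}_b\rangle$. On the other hand, the angular distance between two unit vectors on $\mathbb{S}^3$ is by definition $\angle(\bm{q}_a,\bm{q}_b)=\arccos\langle\bm{q}_a,\bm{q}_b\rangle$, so $\cos\angle(\bm{q}_a,\bm{q}_b)=\langle\bm{q}_a,\bm{q}_b\rangle$. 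Combining the two yields $\cos(\delta/2)=\cos\angle(\bm{q}_a,\bm{q}_b)$, and since both $\delta/2\in[0,\pi/2]$ and $\angle(\bm{q}_a,\bm{q}_b)\in[0,\pi]$ lie in ranges where this forces equality of the arguments (once we confine the quaternion angle appropriately), we conclude $\delta=2\angle(\bm{q}_a,\bm{q}_b)$.

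The main obstacle will be handling the two-to-one ambiguity of the quaternion-to-rotation map and ensuring the inner product is taken with the correct sign. Because $\bm{q}$ and $-\bm{q}$ represent the same rotation, the naive geodesic distance on $\mathbb{S}^3$ is ambiguous up to the antipodal choice; this is precisely why the statement restricts to the hemisphere $\mathbb{S}^3_\ominus=\{\bm{q}\mid q_1\geq0\}$. I would argue that confining both $\bm{q}_a$ and $\bm{q}_b$ to this hemisphere guarantees $\langle\bm{q}_a,\bm{q}_b\rangle\geq0$, hence $\angle(\bm{q}_a,\bm{q}_b)\in[0,\pi/2]$, which matches $\delta/2\in[0,\pi/2]$ and makes the final inversion of $\cos$ unambiguous. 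The one delicate point to verify is that the scalar part of $\bm{q}_a\bar{\bm{q}}_b$ remains nonnegative (equivalently $d_\angle\leq\pi$) under this restriction, so that taking the representative in $\mathbb{S}^3_\ominus$ is consistent with measuring the rotation angle in $[0,\pi]$; this is exactly the near one-to-one correspondence between $\mathbb{S}^3_\ominus$ and the rotation $\pi$-ball noted just before the statement, which I would invoke to close the argument.
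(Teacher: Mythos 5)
Your route is the standard one underlying this result, and it is worth noting that the paper itself offers no argument at all here: it defers entirely to the cited references for the proofs of Lemmas~3 and~4, so your proposal has to be judged as a reconstruction of that classical argument. Most of it is sound: the scalar part of $\bm{q}_a\bar{\bm{q}}_b$ is indeed the 4D inner product $\langle\bm{q}_a,\bm{q}_b\rangle$, and the rotation angle of $R_aR_b^{-1}$ is recovered from that scalar part. The genuine gap is the claim you use to close: restricting both quaternions to $\mathbb{S}^3_\ominus=\{q_1\geq 0\}$ does \emph{not} guarantee $\langle\bm{q}_a,\bm{q}_b\rangle\geq 0$, because the hemisphere condition constrains only the first coordinate, and two quaternions near the equator can be nearly antipodal. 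Concretely, for $0<\epsilon<1/\sqrt{2}$ take
\begin{equation*}
\bm{q}_a=\big[\epsilon,\sqrt{1-\epsilon^2},0,0\big]^T,\qquad
\bm{q}_b=\big[\epsilon,-\sqrt{1-\epsilon^2},0,0\big]^T,
\end{equation*}
both in the (open) hemisphere, with $\langle\bm{q}_a,\bm{q}_b\rangle=2\epsilon^2-1<0$. Here $R_aR_b^{-1}$ is a rotation about the $x$-axis by $4\arccos\epsilon\approx 2\pi-4\epsilon$, so $d_\angle(R_a,R_b)\approx 4\epsilon$, while $2\angle(\bm{q}_a,\bm{q}_b)=4\arccos\epsilon\approx 2\pi-4\epsilon$; the two sides disagree. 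The ``delicate point'' you flagged---nonnegativity of the scalar part of $\bm{q}_a\bar{\bm{q}}_b$---is precisely where the argument fails, and the hemisphere/$\pi$-ball correspondence you invoke cannot repair it: that correspondence fixes a representative for each \emph{single} rotation, but says nothing about the product $\bm{q}_a\bar{\bm{q}}_b$, which can leave the hemisphere.

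What is actually provable (and is how the cited literature states the result) is $d_\angle(R_a,R_b)=2\arccos\vert\langle\bm{q}_a,\bm{q}_b\rangle\vert=2\min\{\angle(\bm{q}_a,\bm{q}_b),\,\pi-\angle(\bm{q}_a,\bm{q}_b)\}$: the quaternion distance must be taken modulo the antipodal map, equivalently after flipping the sign of one quaternion whenever the inner product is negative. With that correction your cosine-inversion step goes through, since both arguments then lie in $[0,\pi/2]$. To be fair, the flaw is inherited from the lemma as printed---quantified over all of $\mathbb{S}^3_\ominus$, the stated identity is false on near-antipodal equatorial pairs---and it is harmless downstream: for the bounds only the inequality direction matters, and $d_\angle(R_a,R_b)\leq 2\angle(\bm{q}_a,\bm{q}_b)$ does hold unconditionally (when $\langle\bm{q}_a,\bm{q}_b\rangle<0$ one has $d_\angle\leq\pi\leq 2\angle(\bm{q}_a,\bm{q}_b)$), so the chain yielding Lemma~2 survives. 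But as a proof of the statement as written, your closing assertion is false and needs the antipodal minimum to be made rigorous.
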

\noindent where $R_a,R_b \in SO(3)$ are corresponding rotations of $\bm{q}_a$ and $\bm{q}_b$.
This lemma implies that the angle of two rotations is twice the angle between their corresponding quaternions. 

Additionally, a unit quaternion $\bm{q}\in \mathbb{S}^3_\ominus$ can be represented by an angle-axis vector $\bm{r}$ as follows:
\begin{equation}\label{Eq:3D-Exp-mapping}
\bm{q}^T=\Big[\cos(\frac{\Vert\bm{r}\Vert}{2}),\sin(\frac{\Vert\bm{r}\Vert}{2})\hat{\bm{r}}^T\Big]
\end{equation}
where $\hat{\bm{r}}=\bm{r}/ \Vert\bm{r}\Vert$ is a unit vector representing the axis of the rotation, and $\Vert\bm{r}\Vert$ is the angle of rotation. It is an exponential mapping from the upper quaternion hemisphere to the solid $\pi$-ball. Therefore, there is an important inequation as follows:
\begin{lemma}\label{Lemma 4}
For $\forall\bm{q}_a,\bm{q}_b\in\mathbb{S}^3_\ominus$, then
\begin{equation}\label{Eq:Lemma 4}
2\angle(\bm{q}_a,\bm{q}_b)\leq\Vert\bm{r}_a-\bm{r}_b\Vert
\end{equation}
\end{lemma}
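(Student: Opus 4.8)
The plan is to obtain the inequality almost for free by chaining the two lemmas already established, and then, because the paper will later need the analogous argument directly in $\mathbb{S}^2$, to record a self-contained geometric proof explaining \emph{why} the bound holds. For the quick route, observe that the exponential mapping \eqref{Eq:3D-Exp-mapping} identifies $\bm{r}_a$ and $\bm{r}_b$ with the angle-axis representations of the very rotations $R_a$ and $R_b$ that correspond to $\bm{q}_a$ and $\bm{q}_b$. Hence Lemma~\ref{Lemma 3} rewrites $d_\angle(R_a,R_b)$ as $2\angle(\bm{q}_a,\bm{q}_b)$, while Lemma~\ref{Lemma 2} bounds the same quantity by $\Vert\bm{r}_a-\bm{r}_b\Vert$; combining the two yields \eqref{Eq:Lemma 4} immediately. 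The only point requiring care is that $\bm{q}_a,\bm{q}_b\in\mathbb{S}^3_\ominus$ places us in the one-to-one region of the quaternion-to-rotation map, so that the angle-axis vectors are well defined with $\Vert\bm{r}\Vert\le\pi$ and the identification is unambiguous.

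For the self-contained proof I would read \eqref{Eq:3D-Exp-mapping} intrinsically. Writing $\bm{e}=[1,0,0,0]^T$ for the north pole of $\mathbb{S}^3$ and setting $\bm{w}=\bm{r}/2$, formula \eqref{Eq:3D-Exp-mapping} says exactly that $\bm{q}=\cos(\Vert\bm{w}\Vert)\bm{e}+\sin(\Vert\bm{w}\Vert)\hat{\bm{w}}$, i.e. $\bm{q}$ is the image of the tangent vector $\bm{w}$ under the Riemannian exponential map $\exp_{\bm{e}}$ of the unit sphere at $\bm{e}$. Since $\angle(\bm{q}_a,\bm{q}_b)$ is precisely the geodesic (great-circle) distance on $\mathbb{S}^3$ between $\exp_{\bm{e}}(\bm{w}_a)$ and $\exp_{\bm{e}}(\bm{w}_b)$, the claim $2\angle(\bm{q}_a,\bm{q}_b)\le\Vert\bm{r}_a-\bm{r}_b\Vert$ is equivalent to
\begin{equation}
d_{\mathbb{S}^3}\big(\exp_{\bm{e}}(\bm{w}_a),\exp_{\bm{e}}(\bm{w}_b)\big)\le\Vert\bm{w}_a-\bm{w}_b\Vert,
\end{equation}
that is, to the assertion that $\exp_{\bm{e}}$ is $1$-Lipschitz from the flat tangent space into the geodesic metric.

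To verify the Lipschitz bound I would compute the differential of $\exp_{\bm{e}}$ explicitly. Decomposing a variation into its radial and tangential parts, the map preserves length along the radial direction (unit-speed geodesics, by Gauss's lemma) and rescales lengths in the orthogonal directions by the factor $\sin(\Vert\bm{w}\Vert)/\Vert\bm{w}\Vert$. Because $\sin t/t\le 1$ for all $t\ge0$, the operator norm of $d(\exp_{\bm{e}})$ is at most $1$ throughout the relevant region. Integrating this bound along the straight segment joining $\bm{w}_a$ to $\bm{w}_b$ --- which stays in the closed ball of radius $\pi/2$ by convexity, hence inside the injectivity radius $\pi$, since $q_1\ge0$ forces $\Vert\bm{w}\Vert=\Vert\bm{r}\Vert/2\le\pi/2$ --- shows that the image curve has length at most $\Vert\bm{w}_a-\bm{w}_b\Vert$, and the geodesic distance between its endpoints is no larger.

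The main obstacle is the clean justification of the $1$-Lipschitz property: one must either invoke the Rauch comparison theorem (the sphere has positive curvature, so $\exp$ is distance non-increasing relative to the flat model) or carry out the explicit differential computation and the $\sin t/t\le1$ estimate as above. The elementary chaining of Lemmas~\ref{Lemma 2} and~\ref{Lemma 3} sidesteps this entirely, so there the only real subtlety is confirming that the hypotheses line up on $\mathbb{S}^3_\ominus$. I would present that short chaining argument as the formal proof and keep the geometric picture as the conceptual explanation that motivates the forthcoming $\mathbb{S}^2$ analogue.
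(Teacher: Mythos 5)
Your proposal actually contains two proofs, and their roles should be swapped. The chaining route that you designate as ``the formal proof'' is circular relative to this paper's own development: the paper explicitly states that Lemma~\ref{Lemma 2} is \emph{obtained from} Lemmas~\ref{Lemma 3} and~\ref{Lemma 4} (``Lemma~\ref{Lemma 2} is separated into two parts''), and the references it cites for these facts (\cite{hartley2009global}, \cite{hartley2013rotation}) likewise prove the angle-axis metric inequality by passing to the quaternion hemisphere and establishing precisely the statement of Lemma~\ref{Lemma 4}. So invoking Lemma~\ref{Lemma 2} to deduce Lemma~\ref{Lemma 4} assumes the conclusion, unless you first supply an independent proof of Lemma~\ref{Lemma 2} that avoids quaternions --- which neither the paper nor your write-up does. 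You cannot treat Lemma~\ref{Lemma 2} as a free black box here.

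The good news is that your ``conceptual explanation'' is in fact a complete and correct proof, and it is essentially the argument in the sources the paper defers to (the paper itself gives no in-text proof, only the citations). The details check out: $q_1=\cos(\Vert\bm{r}\Vert/2)\geq0$ forces $\Vert\bm{w}\Vert=\Vert\bm{r}\Vert/2\leq\pi/2$, so the representation is unambiguous; $\angle(\bm{q}_a,\bm{q}_b)$ is the great-circle distance on $\mathbb{S}^3$; by Gauss's lemma the differential of $\exp_{\bm{e}}$ is an isometry radially and scales orthogonal directions by $\sin t/t\leq1$, so $\exp_{\bm{e}}$ is $1$-Lipschitz, and integrating along the segment from $\bm{w}_a$ to $\bm{w}_b$ gives $\angle(\bm{q}_a,\bm{q}_b)\leq\Vert\bm{w}_a-\bm{w}_b\Vert=\frac{1}{2}\Vert\bm{r}_a-\bm{r}_b\Vert$. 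One minor remark: the injectivity-radius discussion is unnecessary for the inequality itself, since $\vert\sin t\vert/t\leq1$ holds globally and $\exp_{\bm{e}}$ is defined on all of the tangent space; the restriction to $\mathbb{S}^3_\ominus$ matters only for the well-definedness of the correspondence $\bm{q}\rightleftharpoons\bm{r}$. Present the Riemannian argument as the proof and, if you wish, note the consistency with Lemmas~\ref{Lemma 2} and~\ref{Lemma 3} as a corollary-style remark --- that ordering also matches the paper's stated purpose of making Lemma~\ref{Lemma 4} the template for the $\mathbb{S}^2$ analogue in Proposition~\ref{2D-exp-inequation}.
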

\noindent where $\bm{r}_a$ and $\bm{r}_b$ are angle-axis representations of $\bm{q}_a$ and $\bm{q}_b$. The complete proofs of Lemma~\ref{Lemma 3} and Lemma~\ref{Lemma 4}  can be found in~\cite{hartley2009global} and~\cite{hartley2013rotation}. 

According to Eq.~(\ref{Eq:Lemma 3}) and~(\ref{Eq:Lemma 4}), we can easily obtain Lemma~\ref{Lemma 2}. In other words, Lemma~\ref{Lemma 2} is  separated into two parts, and Lemma~\ref{Lemma 4} inspires us to parametrize the $\mathbb{S}^2_\ominus$ by exponential mapping.

\subsubsection{Parametrization of $\mathbb{S}^2_\ominus$: Exponential Mapping}
Geometrically, $\mathbb{S}^2_\ominus$ is a hemisphere in three-dimensional Euclidean space, and it is inherently a two-dimensional closed space. In order to parametrize $\mathbb{S}^2_\ominus$ minimally, we are inspired from Lemma~\ref{Lemma 4} and propose an exponential mapping method to map the hemisphere to a 2D-disk. 

Concretely, let  $\bm{v}=[v_1,v_2,v_3]^T\in\mathbb{S}^2_\ominus $, then it can be represented by a corresponding point $\bm{d}\in\mathbb{R}^2$ in the disk,
\begin{equation}\label{Eq:2D-Exp-mapping}
\bm{v}^T=\Big[\sin(\theta)\hat{\bm{d}}^T,\cos(\theta)\Big]
\end{equation}
where $\theta\in[0,\pi/2]$, $\hat{\bm{d}}$ is a unit vector in $\mathbb{R}^2$ and $\bm{d}=\theta \hat{\bm{d}}$. Note that the domain of $\theta$ is corresponding to $v_3\geqslant 0$, and geometrically, $\theta$ is the radius of the disk. In BnB algorithm, a square (side=$\pi$) circumscribing the mapped disk area is used as the vertical direction domain for ease of manipulation. 

The mapping $\bm{v}\rightleftharpoons\bm{d}$ is similar to the mapping from $\mathbb{S}^3_\ominus $ to the 3D solid $\pi$-ball. Similarly, the mapping is one-to-one except the boundary (i.e., the equator) where it is two-to-one. More specifically, the exponential mapping is closely related to  Lie theory~\cite{abbaspour2007basic,sola2018micro}. However, in this paper we will not rely on any knowledge of the Lie groups theory without distracting readers' attention and focus on the direction estimation problem.

Because of the similarity between Eq.~(\ref{Eq:2D-Exp-mapping}) and Eq.~(\ref{Eq:3D-Exp-mapping}), we then propose a similar inequation as follows:

\begin{figure}
\centering
\includegraphics[width=0.99\linewidth]{./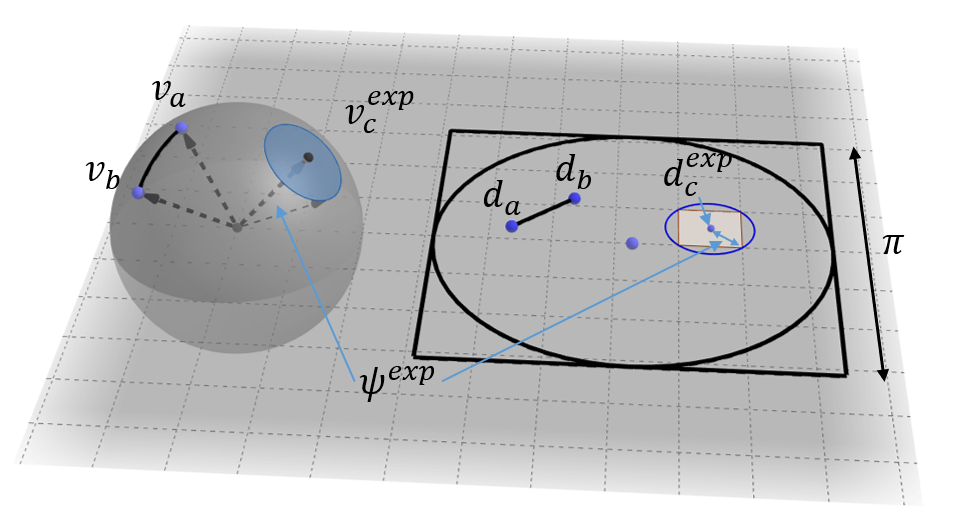}
\caption{Visualization of exponential mapping. Two points $\bm{v}_a,\bm{v}_b\in\mathbb{S}_{\ominus}^2  $ are corresponding to two points $\bm{d}_a,\bm{d}_b\in\mathbb{R}^2 $ and $\angle(\bm{v}_a,\bm{v}_b)\leq\Vert\bm{d}_a-\bm{d}_b\Vert$. A divided square-shaped branch, whose center is $\bm{d}_c^{exp}$, is relaxed into a circle in $\mathbb{R}^2$. Then the \textit{preimage} of the circle is relaxed into a spherical patch, whose center is $\bm{v}_c^{exp}$, in $\mathbb{S}^2$. $\psi^{exp} $ is the radius of the relaxed circle in 2D plane.}
\label{Fig:Visualization of Exp-Mapping}
\end{figure}

\begin{proposition}\label{2D-exp-inequation}
For $\forall\bm{v}_a,\bm{v}_b\in\mathbb{S}^2_\ominus$, then
\begin{equation}\label{Eq:2D-exp-inequation}
\angle(\bm{v}_a,\bm{v}_b)\leq\Vert\bm{d}_a-\bm{d}_b\Vert
\end{equation}
\end{proposition}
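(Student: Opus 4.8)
The plan is to prove that the exponential mapping $E:\bm{d}\mapsto\bm{v}$ defined by Eq.~(\ref{Eq:2D-Exp-mapping}) is distance-nonincreasing, i.e.\ it never stretches lengths. Once this is established, the inequality follows at once: the angular distance $\angle(\bm{v}_a,\bm{v}_b)$ is exactly the geodesic (great-circle) distance on $\mathbb{S}^2$, and a geodesic realizes the shortest connecting path, so $\angle(\bm{v}_a,\bm{v}_b)$ is bounded above by the length of the image under $E$ of the straight segment $\bm{d}(s)=(1-s)\bm{d}_a+s\bm{d}_b$, $s\in[0,1]$. Since the disk domain (radius $\pi/2$) is convex, this segment stays inside the domain, and if its image curve has length no larger than the segment length $\Vert\bm{d}_a-\bm{d}_b\Vert$, we are done.

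First I would reduce the global statement to a pointwise one: if $\Vert\tfrac{d}{ds}E(\bm{d}(s))\Vert\le\Vert\dot{\bm{d}}(s)\Vert$ holds for every $s$, then integrating over $[0,1]$ gives that the length of the image curve is at most $\int_0^1\Vert\dot{\bm{d}}(s)\Vert\,ds=\Vert\bm{d}_a-\bm{d}_b\Vert$. Thus everything rests on showing that the differential of $E$ does not increase norms.

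To verify this, I would write $\bm{d}$ in polar coordinates $(\theta,\phi)$ with $\theta=\Vert\bm{d}\Vert\in[0,\pi/2]$, so that $\bm{v}=(\sin\theta\cos\phi,\sin\theta\sin\phi,\cos\theta)$. Differentiating, $\partial_\theta\bm{v}$ is a unit vector, $\partial_\phi\bm{v}$ has norm $\sin\theta$, and the two are orthogonal; hence the metric pulled back to the disk is $d\theta^2+\sin^2\theta\,d\phi^2$, while the flat Euclidean metric of the disk is $d\theta^2+\theta^2\,d\phi^2$. Because $\sin\theta\le\theta$ for all $\theta\ge0$, the pulled-back metric is pointwise dominated by the flat metric, which is exactly the required norm-nonincreasing property of $dE$.

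The main obstacle is a technical one: the polar-coordinate computation degenerates at the origin $\bm{d}=\bm{0}$, where $\phi$ is undefined and where the segment may happen to pass; I would dispose of this by noting that $dE$ equals the identity there (so no stretching), and that the arc-length functional is continuous, so the bound extends across this isolated parameter value. As a cross-check I would note the equivalent closed-form statement obtained from the spherical law of cosines: writing $\alpha$ for the angle between $\hat{\bm{d}}_a$ and $\hat{\bm{d}}_b$, one has $\cos\angle(\bm{v}_a,\bm{v}_b)=\cos\theta_a\cos\theta_b+\sin\theta_a\sin\theta_b\cos\alpha$ and $\Vert\bm{d}_a-\bm{d}_b\Vert^2=\theta_a^2+\theta_b^2-2\theta_a\theta_b\cos\alpha$, so the claim is precisely the classical hinge comparison that a spherical triangle has a shorter third side than the Euclidean triangle sharing the same two sides and included angle.
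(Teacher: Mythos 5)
Your proof is correct, but it takes a genuinely different route from the paper's. The paper's Appendix~D proof, modeled on Hartley--Kahl's proof of Lemma~\ref{Lemma 4} (the paper explicitly introduces Proposition~\ref{2D-exp-inequation} by analogy with Eq.~\eqref{Eq:3D-Exp-mapping}), works with the closed-form trigonometry: writing $\theta_a=\Vert\bm{d}_a\Vert$, $\theta_b=\Vert\bm{d}_b\Vert$ and $\alpha=\angle(\hat{\bm{d}}_a,\hat{\bm{d}}_b)$, it compares $\cos\angle(\bm{v}_a,\bm{v}_b)=\cos\theta_a\cos\theta_b+\sin\theta_a\sin\theta_b\cos\alpha$ with $\Vert\bm{d}_a-\bm{d}_b\Vert^2=\theta_a^2+\theta_b^2-2\theta_a\theta_b\cos\alpha$ and verifies the hinge inequality directly --- which is precisely the identity you relegate to your final ``cross-check.'' Your main argument instead establishes the stronger structural fact that the exponential mapping of Eq.~\eqref{Eq:2D-Exp-mapping} is infinitesimally distance-nonincreasing: the pulled-back round metric $d\theta^2+\sin^2\theta\,d\phi^2$ is dominated by the flat metric $d\theta^2+\theta^2\,d\phi^2$ because $\sin\theta\leq\theta$, and integrating along the straight segment (which stays in the convex disk) and using that $\angle(\bm{v}_a,\bm{v}_b)$ is the geodesic distance on $\mathbb{S}^2$ yields the claim; your treatment of the coordinate degeneracy at $\bm{d}=\bm{0}$ (where $dE$ is an isometry and the pointwise bound persists by continuity) closes the only technical hole in that computation. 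Note that your Lipschitz argument in fact \emph{proves} the ``classical hinge comparison'' you cite, so the cross-check need not lean on Toponogov-type results as an external ingredient. As for what each approach buys: the paper's computation is elementary and self-contained, needing only one trigonometric manipulation and no differential geometry, and so fits a short appendix; your argument explains \emph{why} the inequality holds (nonnegative curvature makes the exponential map $1$-Lipschitz), holds verbatim for any disk radius rather than just $\theta\leq\pi/2$, and transfers immediately to the $\mathbb{S}^3_\ominus$ setting, recovering Lemma~\ref{Lemma 4} (with its factor of $2$ absorbed into the half-angle parametrization) by the same metric-domination computation.
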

\noindent where $\bm{d}_a,\bm{d}_b$ are corresponding points of $\bm{v}_a,\bm{v}_b$ in the 2D disk.

\begin{proof}
The complete proof is in the appendix~D, and the visualization can be found in Fig.~\ref{Fig:Visualization of Exp-Mapping}.
\end{proof}

The exponential parametrization obtains great success in $SO(3)$ and builds the foundation of Lemma~\ref{Lemma 2}. In this paper, we extend exponential mapping to $\mathbb{S}^2_\ominus$  and apply Proposition~\ref{2D-exp-inequation} as one of the fundamental parts in our globally optimal vertical estimation method.

\subsubsection{Parametrization of $\mathbb{S}^2_\ominus$: Stereographic Projection}

\begin{figure}
\centering

\includegraphics[width=0.9\linewidth]{./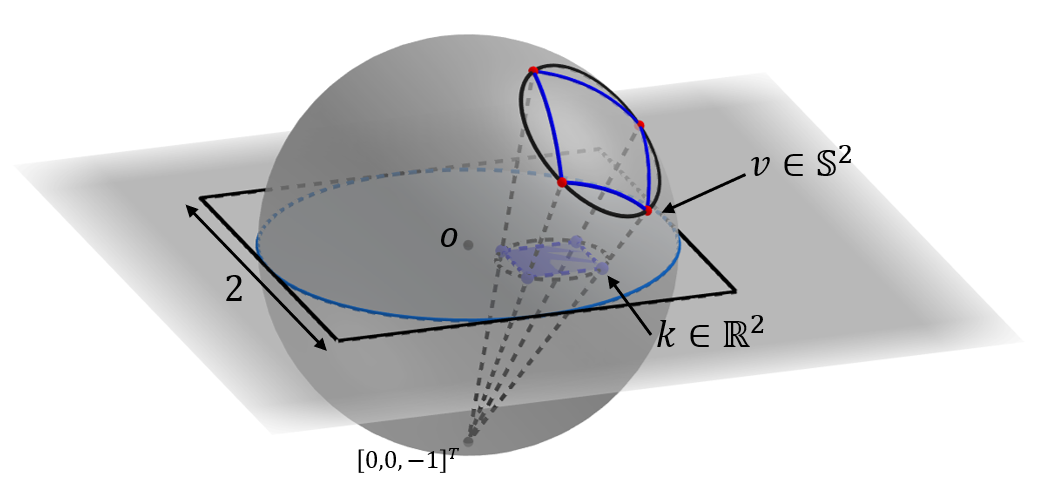}

\caption{Visualization of stereographic projection.  A point $\bm{v}\in\mathbb{S}_{\ominus}^2  $ is corresponding to a point $\bm{k}\in\mathbb{R}^2 $ and the divided square-shaped branch is relaxed to a circle, which is corresponding to an umbrella-shaped region in $\mathbb{S}_{\ominus}^2$.}
\label{Fig:Visualization of Stereo-Mapping}
\end{figure}

In geometry, the stereographic projection is a particular mapping that could project a hemisphere to a disk in plane, which  means we can also represent the $\mathbb{S}^2_\ominus$ minimally by applying stereographic projection.

The stereographic projection  is described in Fig.~\ref{Fig:Visualization of Stereo-Mapping}. We denote a point $\bm{k}=[k_1,k_2]^T\in\mathbb{R}^2$ in the equatorial plane and its corresponding point $\bm{v}=[v_1,v_2.v_3]^T\in \mathbb{S}^2_\ominus$, and if the projection pole is at $[0,0,-1]^T$ (South Pole)(see~\cite{stereo-formu}), then we have: 
\begin{align}
\bm{k}&=\Big[\frac{{v}_1}{1+{v}_3},\frac{{v}_2}{1+{v}_3}\Big]^T\\
\bm{v}&=\Big[\frac{2{k}_1}{1+{k}_1^2+{k}_2^2},\frac{2{k}_2}{1+{k}_1^2+{k}_2^2},\frac{1-{k}_1^2-{k}_2^2}{1+{k}_1^2+{k}_2^2}\Big]^T
\end{align}
Clearly, we can parametrize $\mathbb{S}^2_\ominus$ minimally by stereographic projection. 
Similarly, a square (side=$2$) circumscribing the mapped disk area is used as the vertical direction domain in the BnB algorithm. 
It worth noting that the stereographic projection was also applied to accelerate the calculation in rotation search~\cite{bustos2016fast}, which inspires our work.

\subsubsection{Parametrization of $\mathbb{S}^2_\ominus$: Spherical Coordinate System}

The $\mathbb{S}^2_\ominus$ can also be parameterized by spherical coordinate system (Wikipedia: ``spherical coordinate system"). Geometrically, the hemisphere is flattened to a rectangle (see Fig.~\ref{Fig:scs}). In the BnB algorithm, the rectangle region can be set as the initial searching domain.

For $\forall \bm{v}=[v_1,v_2.v_3]^T\in\mathbb{S}^2_{\ominus}$ and its corresponding point $\bm{h}=[h_1,h_2]$, the mapping from three-dimensional Cartesian coordinate system to spherical coordinate system is~\cite{cart2sph}
\begin{align}
h_1&= \arctan({v_2},{v_1})
\\
h_2 &= \arctan({v_3},{\sqrt{v_1^2 + v_2^2}})
\end{align}
where $\arctan(\cdot,\cdot)$ is the four-quadrant inverse tangent function\footnote{https://ww2.mathworks.cn/help/matlab/ref/atan2.html}. Conversely~\cite{sph2cart},
\begin{align}
v_1 &=  \cos(h_2) \cos(h_1)\\
v_2 &= \cos(h_2) \sin(h_1)\\
v_3 &=  \sin(h_2)
\end{align}
where $-\pi\leq h_1\leq \pi$ and $0\leq h_2\leq\pi/2$ are azimuth angle  and elevation angle, respectively.

\begin{figure}
	\centering
	\includegraphics[width=\linewidth]{./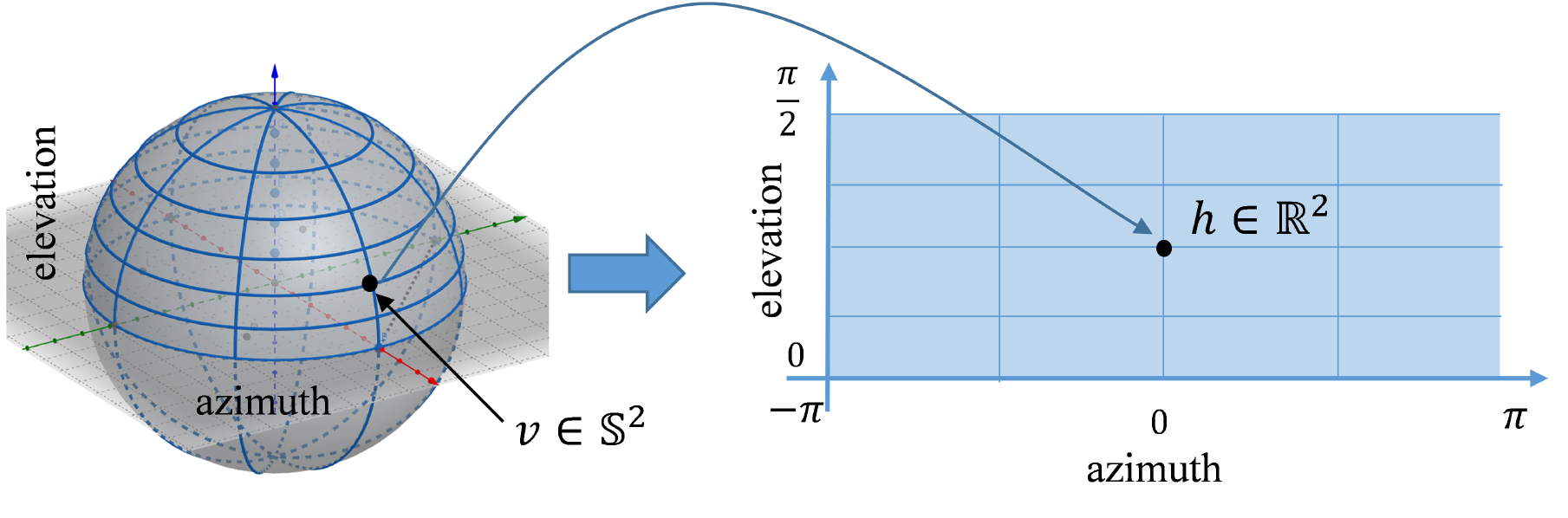}
	\caption{Visualization of spherical coordinate system. The hemisphere is flattened to a rectangle, which leads to significant distortion, especially near the Pole.}
	\label{Fig:scs}
\end{figure}

In summary, the space of $SO(3)$ is parametrized and relaxed to a 3D cube, thus in BnB algorithm, the cube is  recursively divided into eight sub-cubes. By contrast, the parametrizations of $\mathbb{S}^2_\ominus$ are both disks in exponential mapping and stereographic projection, after that the disk is relaxed to a solid square. In the BnB algorithm, we recursively subdivide it into four smaller squares and calculate the estimation of the upper bound and lower bound for the optimum in each sub-branch. Lastly, $\mathbb{S}^2_\ominus$  can be parameterized by a azimuth-elevation rectangle using spherical coordinate system. Similarly, in the BnB algorithm, the rectangle is recursively divided into four smaller rectangles.

%
%

For ease of understanding, we call the point in the solid disk/rectangle \textit{image-point}, meanwhile we call its corresponding point \textit{preimage-point}  in $\mathbb{S}^2$. 

\subsection{Estimating Bounds}
In this section, we show how to calculate the bounds with different  parametrizations in detail.

 \subsubsection{Bounds of  Rotation Search }
We first recall the bounds applied in rotation search. According to Lemma~\ref{Lemma 1} and Lemma~\ref{Lemma 2},
\begin{equation}
\angle(R_a\bm{x},R_b\bm{x})\leq d_\angle(R_a,R_b)=2\angle(\bm{q}_a,\bm{q}_b)\leq\Vert\bm{r}_a-\bm{r}_b\Vert
\end{equation}
\begin{equation}
\Rightarrow
\angle(R_a\bm{x},R_b\bm{x})\leq\Vert\bm{r}_a-\bm{r}_b\Vert
\end{equation}
Then, we have the following Lemma.

\begin{lemma}[rotation uncertainty angle bound]\label{Lemma:uncertainty angle bound}
Given a divided cube-shaped rotation branch $\mathbb{B}^{rot}$, whose center is $\bm{c}^{rot}$, half-side is $\sigma^{rot}$. For $\forall R\in \mathbb{B}^{rot},\forall\bm{v}_0\in\mathbb{S}^2 $,
\begin{equation}\label{Eq:uncertaintay angle bound}
\angle\big(R\bm{v}_0,R_c\bm{v}_0\big)\leq \sqrt{3}\sigma^{rot}\triangleq \psi^{rot}
\end{equation}
\end{lemma}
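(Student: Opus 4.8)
The plan is to obtain the bound as a direct consequence of the combined inequality
$\angle(R_a\bm{x},R_b\bm{x})\leq\Vert\bm{r}_a-\bm{r}_b\Vert$
established immediately before the statement, together with an elementary estimate on the half-diagonal of a cube. First I would set $R_a=R$ for an arbitrary rotation in the branch $\mathbb{B}^{rot}$, $R_b=R_c$ for the rotation corresponding to the branch center, and take $\bm{x}=\bm{v}_0$. Writing $\bm{r}$ and $\bm{c}^{rot}$ for the angle-axis representations of $R$ and $R_c$ respectively (the center of the cube being, by construction, the angle-axis vector of $R_c$), the combined inequality gives $\angle(R\bm{v}_0,R_c\bm{v}_0)\leq\Vert\bm{r}-\bm{c}^{rot}\Vert$.

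The remaining task is to bound $\Vert\bm{r}-\bm{c}^{rot}\Vert$ by the half-diagonal of the cube. Since $\mathbb{B}^{rot}$ is an axis-aligned cube with half-side $\sigma^{rot}$ centered at $\bm{c}^{rot}$, every coordinate of $\bm{r}$ obeys $\vert r_i-c_i^{rot}\vert\leq\sigma^{rot}$ for $i=1,2,3$. Hence $\Vert\bm{r}-\bm{c}^{rot}\Vert^2=\sum_{i=1}^3(r_i-c_i^{rot})^2\leq 3(\sigma^{rot})^2$, so that $\Vert\bm{r}-\bm{c}^{rot}\Vert\leq\sqrt{3}\,\sigma^{rot}$, with equality attained at the corners of the cube. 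Chaining this with the previous display yields $\angle(R\bm{v}_0,R_c\bm{v}_0)\leq\sqrt{3}\,\sigma^{rot}=\psi^{rot}$, which is exactly Eq.~\eqref{Eq:uncertaintay angle bound}.

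I expect the genuinely delicate point to be not the geometric estimate---which is routine---but the bookkeeping ensuring the angle-axis machinery applies: namely that every $R\in\mathbb{B}^{rot}$ admits an angle-axis representation $\bm{r}$ lying in the relaxed cube (guaranteed because the cube is taken inside the $\pi$-ball parametrization of $SO(3)$, on which the exponential/angle-axis map is well defined) and that $\bm{c}^{rot}$ is precisely the representation of $R_c$. Once these identifications are pinned down, Lemma~\ref{Lemma 1} supplies the transfer from the rotated-vector angle to the rotation angle $d_\angle$, Lemma~\ref{Lemma 2} transfers $d_\angle$ to the Euclidean distance of the angle-axis vectors, and the proof collapses to the one-line cube-diagonal computation above.
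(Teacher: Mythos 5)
Your proof is correct and follows exactly the route the paper intends: the lemma is stated as an immediate consequence of the chained inequality $\angle(R\bm{v}_0,R_c\bm{v}_0)\leq\Vert\bm{r}-\bm{r}_c\Vert$ (Lemma~\ref{Lemma 1} combined with Lemma~\ref{Lemma 2}) plus the half-diagonal bound $\Vert\bm{r}-\bm{c}^{rot}\Vert\leq\sqrt{3}\,\sigma^{rot}$ for the axis-aligned cube, which is precisely your argument. Your added bookkeeping remark---that the branch cube lives in the angle-axis parametrization so that $\bm{c}^{rot}$ is the representation of $R_c$ and every $R\in\mathbb{B}^{rot}$ has its angle-axis vector in the cube---is a point the paper leaves implicit, so nothing is missing.
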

\noindent where $R_c$ is matrix representation of $\bm{c}^{rot}$. 
Let initial vertical direction $\bm{v}_0=[0,0,1]^T$, $R\bm{v}_0=\bm{v}^{rot}$ and $R_c\bm{v}_0\triangleq \bm{v}_c^{rot} $. Then,
\begin{equation}
\angle(R\bm{v}_0,R_c\bm{v}_0 ) \leq\psi^{rot}
\Rightarrow\angle(\bm{v}^{rot},\bm{v}_c^{rot})\leq\psi^{rot}
\end{equation}
\noindent Observe that it satisfies the conditions of Proposition~\ref{Pro:General bounds-2}: $\bm{v}^{rot}\rightleftharpoons\bm{v}$, $\bm{v}_c^{rot}\rightleftharpoons\bm{v}_c$, $\psi^{rot}\rightleftharpoons\psi$ and $\mathbb{B}\rightleftharpoons \{R\bm{v}_0|R\in\mathbb{B}^{rot}\}$.

Then, given a divided cube-shaped rotation branch $\mathbb{B}^{rot}$, the bounds  can be 
\begin{subequations}
\begin{equation}
U^{rot}(\mathbb{B}^{rot})=\sum_{j=1}^N\mathbbm{1}\big(\overline{Q}_j^{\parallel{rot}}\vee\overline{Q}_j^{\perp{rot}}\big)
\end{equation}
\begin{align}
\overline{Q}_j^{\parallel{rot}}&=\mathbbm{1}\big(\vert\bm{n}_j^T {\bm{v}^{rot}_c} \vert\geq\cos(\lfloor\tau+\psi^{rot}\rfloor)\big)
\\
\overline{Q}_j^{\perp{rot}}&=\mathbbm{1}\big(\vert\bm{n}_j^T \bm{v}_c^{rot}\vert\leq\sin(\lfloor\tau+\psi^{rot}\rfloor)\big)
\end{align}
\end{subequations}
\begin{subequations}
\begin{equation}
L^{rot}(\mathbb{B}^{rot})=\sum_{j=1}^N\mathbbm{1}\big((\underline{Q}_j^{\parallel{rot}}\vee\underline{Q}_j^{\perp{rot}}\big)
\end{equation}
\begin{align}
\underline{Q}_j^{\parallel{rot}}&=\mathbbm{1}\big(\vert\bm{n}_j^T {\bm{v}^{rot}_c} \vert\geq\cos(\tau)\big)
\\
\underline{Q}_j^{\perp{rot}}&=\mathbbm{1}\big(\vert\bm{n}_j^T \bm{v}_c^{rot}\vert\leq\sin(\tau)\big)
\end{align}
\end{subequations}



Note that the bounds are widely used in many geometrical vision problems~\cite{joo2019globally,joo2019robust}, which are not our original contributions. Besides, it is worth noting that there seems a tighter bound than Eq.~(\ref{Eq:uncertaintay angle bound}) in \cite{campbell2018globally}, however, to calculate the bound efficiently, it is based on  two unproven assumptions.
\subsubsection{Bounds Using Exponential Mapping} 
According to Proposition~\ref{2D-exp-inequation}, we have,
\begin{proposition}\label{Pro:2-dim uncertainty bound}
Given a divided square-shaped branch $\mathbb{B}^{exp}$ in exponential mapping plane, whose center is $\bm{d}_c^{exp}$, and half-side is $\sigma^{exp}$. For $\forall \bm{d}\in \mathbb{B}^{exp}$,
\begin{equation}
\angle(\bm{v},\bm{v}_c^{exp})\leq\sqrt{2}\sigma^{exp}\triangleq \psi^{exp}
\end{equation}
\end{proposition}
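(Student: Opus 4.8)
The plan is to obtain this uncertainty bound directly from Proposition~\ref{2D-exp-inequation}, reducing the whole statement to an elementary geometric fact about squares. First I would fix an arbitrary image-point $\bm{d}\in\mathbb{B}^{exp}$ and write $\bm{v}$ for its preimage in $\mathbb{S}^2_\ominus$ under the exponential mapping of Eq.~\eqref{Eq:2D-Exp-mapping}, and $\bm{v}_c^{exp}$ for the preimage of the branch center $\bm{d}_c^{exp}$. Since both $\bm{v}$ and $\bm{v}_c^{exp}$ lie in $\mathbb{S}^2_\ominus$, Proposition~\ref{2D-exp-inequation} applies and gives
\begin{equation}
\angle(\bm{v},\bm{v}_c^{exp})\leq\Vert\bm{d}-\bm{d}_c^{exp}\Vert.
\end{equation}

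It then remains only to bound the Euclidean distance on the right-hand side. Because $\bm{d}$ lies in a square centered at $\bm{d}_c^{exp}$ with half-side $\sigma^{exp}$, each of the two coordinates of $\bm{d}-\bm{d}_c^{exp}$ has absolute value at most $\sigma^{exp}$, so
\begin{equation}
\Vert\bm{d}-\bm{d}_c^{exp}\Vert\leq\sqrt{(\sigma^{exp})^2+(\sigma^{exp})^2}=\sqrt{2}\,\sigma^{exp},
\end{equation}
with equality at the four corners. Chaining the two inequalities yields $\angle(\bm{v},\bm{v}_c^{exp})\leq\sqrt{2}\,\sigma^{exp}\triangleq\psi^{exp}$, which is the claim. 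I would note the close analogy with Lemma~\ref{Lemma:uncertainty angle bound}: there a cube of half-side $\sigma^{rot}$ has circumradius $\sqrt{3}\,\sigma^{rot}$, whereas here the ambient dimension drops from three to two, replacing $\sqrt{3}$ by $\sqrt{2}$.

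Because the substantive work is carried by Proposition~\ref{2D-exp-inequation}, no hard estimate remains in this step; the main point requiring care is applicability. Proposition~\ref{2D-exp-inequation} needs both $\bm{v}$ and $\bm{v}_c^{exp}$ to lie in $\mathbb{S}^2_\ominus$, equivalently both $\bm{d}$ and $\bm{d}_c^{exp}$ to lie inside the radius-$\pi/2$ disk on which the exponential mapping is defined. Since the BnB searching domain is the square \emph{circumscribing} that disk, a branch can in principle contain corner image-points outside the disk, and I would therefore state the bound for image-points that actually possess a valid preimage (equivalently, restricting to the intersection of the branch with the mapping disk), which is precisely the set of candidate vertical directions the algorithm ever evaluates. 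With that caveat in place the inequality holds verbatim, and I expect this applicability check, rather than any computation, to be the only delicate part of the argument.
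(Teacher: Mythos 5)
Your proposal is correct and follows essentially the same route as the paper's proof, which likewise chains Proposition~\ref{2D-exp-inequation} with the circumradius bound $\Vert\bm{d}-\bm{d}_c^{exp}\Vert\leq\sqrt{2}\sigma^{exp}$ for the square. Your extra caveat about restricting to image-points inside the radius-$\pi/2$ disk (the paper silently ignores the square-versus-disk discrepancy) is a fair point of rigor but does not change the argument.
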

\noindent where $\bm{v}_c^{exp},\bm{v}\in\mathbb{S}^2$ are \textit{preimage points} of $\bm{d}_c^{exp}$ and $\bm{d}$.
\begin{proof}
This proposition can be derived as follows:
\begin{align}
\angle(\bm{v},\bm{v}_c^{exp})&\leq \Vert\bm{d}-\bm{d}_c^{exp}\Vert\leq \sqrt{2}\sigma^{exp} 
\end{align}
which follows Proposition~\ref{2D-exp-inequation} (see Fig.~\ref{Fig:Visualization of Exp-Mapping}).
\end{proof}


Proposition~\ref{Pro:2-dim uncertainty bound} and Lemma~\ref{Lemma:uncertainty angle bound} have similar formulations. However, to the best of our knowledge, it is the first time  Proposition~\ref{Pro:2-dim uncertainty bound} has been explicitly introduced  to the computer vision field. Obviously, given divided square-shaped branch $\mathbb{B}^{exp}$ in exponential mapping plane, according to Proposition~\ref{Pro:General bounds-2}, the bounds can be: 

\begin{subequations}
	\begin{equation}
	U_q^{exp}(\mathbb{B}^{ exp})=\sum_{j=1}^N\mathbbm{1}\big(\overline{Q}^{\parallel exp}_j\vee \overline{Q}^{\perp exp}_j\big)
	\end{equation}
	\begin{align}
	\overline{Q}^{\parallel exp}_j&=\mathbbm{1}\big(\vert\bm{n}_j^T {\bm{v}^{exp}_c} \vert\geq\cos(\lfloor\tau+\psi^{exp}\rfloor)\big)
	\\
	\overline{Q}^{\perp exp}_j&=\mathbbm{1}\big(\vert\bm{n}_j^T \bm{v}_c^{exp}\vert\leq\sin(\lfloor\tau+\psi^{exp}\rfloor)\big)
	\end{align}
\end{subequations}
\begin{subequations}
	\begin{equation}
	L_q^{exp}(\mathbb{B}^{exp})=\sum_{j=1}^N\mathbbm{1}\big(\underline{Q}_j^{\parallel exp}\vee\underline{Q}_j^{\perp exp} \big)
	\end{equation}
	\begin{align}
		\underline{Q}_j^{\parallel exp}&=\mathbbm{1}\big(\vert\bm{n}_j^T {\bm{v}^{exp}_c} \vert\geq\cos(\tau)\big)
		\\
		\underline{Q}_j^{\perp exp}&=\mathbbm{1}\big(\vert\bm{n}_j^T \bm{v}_c^{exp}\vert\leq\sin(\tau)\big)
	\end{align}
\end{subequations}
\subsubsection{Bounds Using Stereographic Projection}

Stereographic projection has a crucial property that circles  are projected as circles (circle preserving~\cite{bustos2016fast,needham1998visual}). We use this   property to calculate the first bound based on stereographic projection.

\begin{proposition}\label{Pro:circle_bound}
Given a divided  square-shaped branch $\mathbb{B}^{ste}$ in stereographic projection plane, and its circumscribed circle is $\mathbb{C}^{ste}_{2D}$.  The preimage of $\mathbb{C}^{ste}_{2D}$  is $\mathbb{C}^{ste}$ in $\mathbb{S}^2$, whose radius is $\sigma^{ste}$ and the direction of its center point is $\bm{v}_c^{ste}$; $\forall\bm{k}\in\mathbb{B}^{ste}$, $\bm{v}$ is its \textit{preimage-point},
\begin{equation}
\angle(\bm{v},\bm{v}_c^{ste})\leq\arcsin (\sigma^{ste})\triangleq\psi^{ste}
\end{equation}
\end{proposition}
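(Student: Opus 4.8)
The plan is to exploit the circle-preserving property of stereographic projection together with the elementary identity relating the Euclidean radius of a circle on $\mathbb{S}^2$ to its angular radius. First I would invoke the circle-preserving property already cited above: the circumscribed circle $\mathbb{C}^{ste}_{2D}$ of the square branch $\mathbb{B}^{ste}$ maps under the inverse projection to a genuine circle $\mathbb{C}^{ste}$ on $\mathbb{S}^2$, that is, the intersection of the sphere with an affine plane. Such a spherical circle bounds a spherical cap whose axis is the direction $\bm{v}_c^{ste}$ of its center point, and every point of the circle lies at one common angular distance $\alpha$ from $\bm{v}_c^{ste}$.

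Next I would convert the Euclidean radius of $\mathbb{C}^{ste}$ into an angular radius. A circle on the unit sphere with axis $\bm{v}_c^{ste}$ consists precisely of the points making angle $\alpha$ with $\bm{v}_c^{ste}$, and the perpendicular distance from the axis to any such point is $\sin(\alpha)$. Hence the planar radius of $\mathbb{C}^{ste}$ equals $\sin(\alpha)$. Since by hypothesis this radius is $\sigma^{ste}$, I obtain $\sin(\alpha)=\sigma^{ste}$, so $\alpha=\arcsin(\sigma^{ste})=\psi^{ste}$. Thus every point lying on $\mathbb{C}^{ste}$ is at angular distance exactly $\psi^{ste}$ from $\bm{v}_c^{ste}$. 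Finally I would use that the projection is a homeomorphism carrying the interior of the planar disk bounded by $\mathbb{C}^{ste}_{2D}$ onto the interior of the spherical cap bounded by $\mathbb{C}^{ste}$ that contains $\bm{v}_c^{ste}$. Because the square $\mathbb{B}^{ste}$ is inscribed in its circumscribed circle, every $\bm{k}\in\mathbb{B}^{ste}$ lies in the closed planar disk, so its preimage-point $\bm{v}$ lies in the closed spherical cap, whence $\angle(\bm{v},\bm{v}_c^{ste})\leq\alpha=\arcsin(\sigma^{ste})=\psi^{ste}$, which is the claim.

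The main obstacle I anticipate is not the trigonometric identity but pinning down the geometric bookkeeping: one must verify the interior-to-interior correspondence with the \emph{correct} cap, and confirm that the ``center point'' on the sphere side is the cap axis. This matters because stereographic projection does \emph{not} preserve the centers of planar circles, so $\bm{v}_c^{ste}$ must be read as the direction of the center of the spherical circle $\mathbb{C}^{ste}$ itself, not as the preimage of the planar center of $\mathbb{C}^{ste}_{2D}$. Once it is established that $\bm{v}_c^{ste}$ is the axis of the cap and that $\sigma^{ste}$ is the Euclidean radius of $\mathbb{C}^{ste}$, the angular bound follows immediately from the $\sin(\alpha)$ relation; the monotonicity that the disk interior maps to the cap interior can be checked along a single radial ray or by a connectedness argument, since the projection restricted to $\mathbb{S}^2_\ominus$ is a bijection onto the disk and the pole $[0,0,-1]^T$ lies outside the relevant region.
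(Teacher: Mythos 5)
Your proposal is correct and follows essentially the same route as the paper's own (much terser) proof: the circle-preserving property carries the circumscribed circle $\mathbb{C}^{ste}_{2D}$ to a genuine spherical circle, containment $\bm{k}\in\mathbb{B}^{ste}\subset\mathbb{C}^{ste}_{2D}$ places $\bm{v}$ in the corresponding cap, and the cap's angular radius is $\arcsin(\sigma^{ste})$ via the identity $\sin(\alpha)=\sigma^{ste}$. Your extra bookkeeping --- that stereographic projection does not preserve circle centers, so $\bm{v}_c^{ste}$ must be the axis of the spherical circle itself, and that the disk interior maps to the correct cap --- supplies exactly the details the paper delegates to its figure and to the subsequent cross-product construction of $\bm{v}_c^{ste}$ from the vertex preimages (both you and the paper implicitly take $\alpha\leq\pi/2$ when solving $\sin(\alpha)=\sigma^{ste}$, which holds for the subdivided branches actually processed by the algorithm).
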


\begin{proof}
Because $\bm{k}\in \mathbb{B}^{ste}\subset\mathbb{C}^{ste}_{2D}$, then its \textit{preimage-point} $\bm{v}\in\mathbb{C}^{ste} $. The angle of $\bm{v}$ and $\bm{v}_c^{ste}$ must be no greater than the  maximum angle $\psi^{ste}$ (see Fig.~\ref{Fig:stereo-mapping-detail}).
\end{proof}

We then explain how to calculate $\psi^{ste}$ and $\bm{v}_c^{ste}$ in detail. Given a divided  square-shaped branch $\mathbb{B}^{ste}$ in mapped plane,  its four vertexes ($\bm{k}_1,\bm{k}_2,\bm{k}_3,\bm{k}_4$) must be in the edge of circumscribed circle (see Fig.~\ref{Fig:stereo-mapping-detail}). Then the\textit{ preimage-points} of the vertexes ($\bm{v}_1,\bm{v}_2,\bm{v}_3,\bm{v}_4$) must be in the edge of $\mathbb{C}^{ste}$, and the edge of $\mathbb{C}^{ste}$ is a circle. The direction of the center point $\bm{v}_c^{ste}$ is perpendicular to the plane crossing the circle. Hence, $\bm{v}_c^{ste}$ is perpendicular to any vector in the circle-plane, which means $\bm{v}_c^{ste}\perp(\bm{v}_1-\bm{v}_2)$ and $\bm{v}_c^{ste}\perp(\bm{v}_1-\bm{v}_3)$. Let $\bm{v}_{cross}\triangleq$ $(\bm{v}_1-\bm{v}_2)\times(\bm{v}_1-\bm{v}_3)$. Then, $\bm{v}_c^{ste}=\bm{v}_{cross}/\Vert\bm{v}_{cross}\Vert$ and $\psi^{ste}=\angle(\bm{v}_c^{ste},\bm{v}_1)=\arcsin (\sigma^{ste})$.

\begin{figure}
	\includegraphics[scale=0.5]{./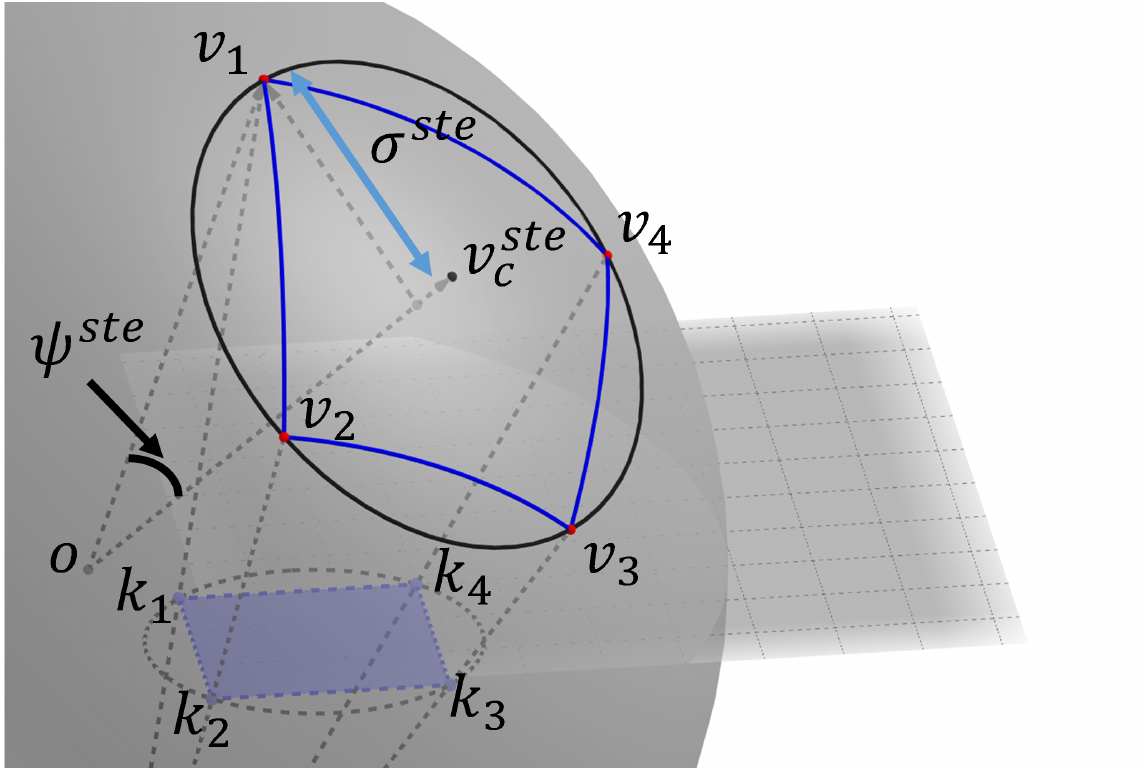}
	\caption{ The geometry of a divided square in stereographic projection plane. A divided branch ($\bm{k}_1,\bm{k}_2,\bm{k}_3,\bm{k}_4$) is projected to a domain ($\bm{v}_1,\bm{v}_2,\bm{v}_3,\bm{v}_4$) in $\mathbb{S}^2$. The radius of its circumscribed circle is $\sigma^{ste}$, and the direction of the center point is $\bm{v}_c^{ste}$, then $ \psi^{ste}=\angle(\bm{v}_1,\bm{v}^{ste}_c)=\arcsin(\sigma^{ste})$.}
	\label{Fig:stereo-mapping-detail}
\end{figure}

Intuitively, Proposition~\ref{Pro:circle_bound} shows that a divided square-shaped branch in the stereographic projection plane is relaxed to a circle, meanwhile the corresponding domain in the 3D sphere is also relaxed to a umbrella-shaped patch surrounded by a circle, whose radius is $\sigma^{ste}$.

Given a divided square-shaped branch $\mathbb{B}^{ste}$ in stereographic projection plane, according to Proposition~\ref{Pro:General bounds-2}, the bounds  can be

\begin{subequations}\label{Eq:circ-upper-bounds}
	\begin{equation}
	U_q^{ste}(\mathbb{B}^{ste})=\sum_{j=1}^N\mathbbm{1}\big(\overline{Q}_j^{\parallel ste}\vee\overline{Q}_j^{\perp ste} \big)
	\end{equation}
	\begin{align}
	\overline{Q}_j^{\parallel ste}&=\mathbbm{1}\big(\vert\bm{n}_j^T {\bm{v}^{ste}_c} \vert\geq\cos(\lfloor\tau+\psi^{ste}\rfloor)\big)
	\\
	\overline{Q}_j^{\perp ste}&=\mathbbm{1}\big(\vert\bm{n}_j^T \bm{v}_c^{ste}\vert\leq\sin(\lfloor\tau+\psi^{ste}\rfloor)\big)
	\end{align}
\end{subequations}

\begin{subequations}\label{Eq:circ-lower-bounds}
\begin{equation}
	L_q^{ste}(\mathbb{B}^{ste})=\sum_{j=1}^N\mathbbm{1}\big(\underline{Q}_j^{\parallel ste}\vee\underline{Q}_j^{\perp ste} \big)
\end{equation}

	\begin{align}
	\underline{Q}_j^{\parallel ste}&=\mathbbm{1}\big(\vert\bm{n}_j^T {\bm{v}^{ste}_c} \vert\geq\cos(\tau)\big)
	\\
	\underline{Q}_j^{\perp ste}&=\mathbbm{1}\big(\vert\bm{n}_j^T \bm{v}_c^{ste}\vert\leq\sin(\tau)\big)
	\end{align}
\end{subequations}


The bounds (Eq.~\eqref{Eq:circ-upper-bounds} and Eq.~\eqref{Eq:circ-lower-bounds}) are called circle-bounds using stereographic projection as the divided square is relaxed  to its circumscribed  circle.

\subsubsection{Tighter Bounds Using Stereographic Projection} 

For the stereographic projection, a tighter bound can be found without relaxing the divided square, and therefore, it does not apply the circle-preserving property.

Given a divided square-shaped branch $\mathbb{B}^{ste}$ in stereographic projection plane, the \textit{preimage} of its  center is $\bm{v}_t^{ste}$. $\bm{v}\in\mathbb{S}^2$ is the \textit{preimage-point} of $\bm{k}\in\mathbb{B}^{ste}$. 
\begin{align}
\underline{\phi}^{ste}_{j}&\triangleq\min\angle(\bm{v},\bm{n}_j)\\
\overline{\phi}^{ste}_{j}&\triangleq\max\angle(\bm{v},\bm{n}_j)
\end{align}

\noindent Considering the Proposition~\ref{Pro:General bounds-1}, the bounds can be 
\begin{subequations}
	\begin{equation}
	U_s^{ste}(\mathbb{B}^{ste})=\sum_{j=1}^N \mathbbm{1}\Big(\overline{\mathcal{S}}_j^{\parallel^+ ste}\vee \overline{\mathcal{S}}_j^{\parallel^-ste}\vee \overline{\mathcal{S}}_j^{\perp ste}\Big)
	\end{equation}
	\begin{align}
	\overline{\mathcal{S}}_j^{\parallel^+ ste}&=\mathbbm{1}\Big(\underline{\phi}_j^{ste}\leq\tau\Big)
	\\
	\overline{\mathcal{S}}_j^{\parallel^-ste}&=\mathbbm{1}\Big(\overline{\phi}_j^{ste}\geq\pi-\tau\Big)
	\\
	\overline{\mathcal{S}}_j^{\perp ste}&=\mathbbm{1} \Big(\underline{\phi}_j^{ste}-\tau\leq\frac{\pi}{2}\leq\overline{\phi}_j^{ste}+\tau\Big)
	\end{align}
\end{subequations}

\begin{subequations}
	\begin{equation}
	L_s^{ste}(\mathbb{B}^{ste})=\sum_{j=1}^N \mathbbm{1}\Big(\underline{\mathcal{S}}_j^{\parallel^+ ste}\vee \underline{\mathcal{S}}_j^{\parallel^- ste} \vee \underline{\mathcal{S}}_j^{\perp ste}\Big)
	\end{equation}
	\begin{align}
	\underline{\mathcal{S}}_j^{\parallel^+ ste}&=\mathbbm{1}\Big({\phi}_j^{ste}\leq\tau\Big)
	\\
	\underline{\mathcal{S}}_j^{\parallel^-ste}&=\mathbbm{1}\Big({\phi}_j^{ste}\geq\pi-\tau\Big)
	\\
	\underline{\mathcal{S}}_j^{\perp ste}&=\mathbbm{1} \Big(|{\phi}_j^{ste}-\frac{\pi}{2}|\leq\tau\Big)
	\end{align}
\end{subequations}

\noindent where $\phi_j^{ste}=\angle(\bm{v}_t^{ste},\bm{n}_j)$.

The detailed implementation for calculating the $\underline{\phi}_j^{ste}$ and $\overline{\phi}_j^{ste}$ can be found in appendix~E. Note that the  bounds are tighter than circle-bounds using stereographic projection. The reason is simple that a divided square is relaxed to a circle in the circle-bounds but no relaxation in the tighter bounds.  Additionally, because the bounds are based on the divided square, then we call the tighter bounds square-bounds using stereographic projection.  

\subsubsection{Bounds Using Sphere Coordinate System}
In this part, we introduce the upper and lower bounds using sphere coordinate system  according to Proposition~\ref{Pro:General bounds-1}. 

Given a divided rectangle-shaped branch $\mathbb{B}^{scs}$ in azimuth-elevation rectangle, the \textit{preimage} of its center is $\bm{v}_c^{scs}$. $\bm{v}\in\mathbb{S}^2$ is the \textit{preimage-point} of $\bm{h}\in\mathbb{B}^{scs}$. 
\begin{align}
\underline{\phi}^{scs}_{j}&\triangleq\min\angle(\bm{v},\bm{n}_j)\\
\overline{\phi}^{scs}_{j}&\triangleq\max\angle(\bm{v},\bm{n}_j)
\end{align}


\noindent Then, the bounds  can be 
\begin{subequations}
	\begin{equation}
	U_s^{scs}(\mathbb{B}^{scs})=\sum_{j=1}^N \mathbbm{1}\Big(\overline{\mathcal{S}}^{\parallel^+ scs}_j \vee\overline{\mathcal{S}}^{\parallel^- scs}_j \vee \overline{\mathcal{S}}^{\perp scs}_j \Big)
	\end{equation}
	\begin{align}
	\overline{\mathcal{S}}^{\parallel^+ scs}_j&=\mathbbm{1}\Big(\underline{\phi}_j^{scs}\leq\tau\Big)
	\\
	\overline{\mathcal{S}}^{\parallel^- scs}_j&=\mathbbm{1}\Big(\overline{\phi}_j^{scs}\geq\pi-\tau\Big)
	\\
	\overline{\mathcal{S}}^{\perp scs}_j &=\mathbbm{1} \Big(\underline{\phi}_j^{scs}-\tau\leq\frac{\pi}{2}\leq\overline{\phi}_j^{scs}+\tau\Big)
	\end{align}
\end{subequations}

\begin{subequations}
	\begin{equation}
	L_s^{scs}(\mathbb{B}^{scs})=\sum_{j=1}^N \mathbbm{1}\Big(\underline{\mathcal{S}}_j^{\parallel^+ scs}\vee \underline{\mathcal{S}}_j^{\parallel^- scs} \vee \underline{\mathcal{S}}_j^{\perp scs}\Big)
	\end{equation}
	\begin{align}
	\underline{\mathcal{S}}_j^{\parallel^+ scs}&=\mathbbm{1}\Big({\phi}_j^{scs}\leq\tau\Big)
	\\
	\underline{\mathcal{S}}_j^{\parallel^-scs}&=\mathbbm{1}\Big({\phi}_j^{scs}\geq\pi-\tau\Big)
	\\
	\underline{\mathcal{S}}_j^{\perp scs}&=\mathbbm{1} \Big(|{\phi}_j^{scs}-\frac{\pi}{2}|\leq\tau\Big)
	\end{align}
\end{subequations}

\noindent where $\phi_j^{scs}=\angle(\bm{v}_c^{scs},\bm{n}_j)$.

The implementation for calculating $\underline{\phi}_j^{scs}$ and $\overline{\phi}_j^{scs}$ can be found in appendix~F. 

\subsubsection{Comparison of the Bounds }
It is well known that the success of a BnB algorithm is mainly predicated on the quality of its bounds. To show the relaxation and the tightness, in this section, we compare these bounds (Table~\ref{table-1}) geometrically.


\textbf{Bounds of rotation search}. The searching domain is parametrized as a 3D cube. In BnB, for each divided sub-cube, it is first relaxed to its circumscribed ball and then relaxed to a region in quaternion sphere (Lemma~\ref{Lemma 2}). Lastly, it is relaxed to a spherical patch in $\mathbb{S}^2$ using Lemma~\ref{Lemma 1}.

\textbf{Bounds using exponential mapping(exp bounds)}. Similarly, the searching domain is parametrized as a 2D square. The divided sub-square is first relaxed to its circumscribed circle and then relaxed to a spherical patch in $\mathbb{S}^2$ (Proposition~\ref{Pro:2-dim uncertainty bound}). Therefore, it has a two-step geometrical relaxation.

\textbf{Circle-bounds using stereographic projection (ste-cirlce bounds)}. The searching domain is parametrized as a 2D square. The divided sub-square is first relaxed to its circumscribed circle, which is corresponding to a spherical patch in $\mathbb{S}^2$ (circle preserving). In geometric, it has only one relaxation processing.

\textbf{Square-bounds using stereographic projection (ste-square bounds)}.
The searching domain is the same as that of the ste-circle bounds, however, the ste-square bounds have no geometrical relaxations.

\textbf{Bounds using sphere coordinate system (SCS bounds)}.
The searching domain is parametrized as a 2D azimuth-elevation rectangle, which leads to significant distortions. Nonetheless, they have no geometrical relaxations.

Note that what we say about geometrical relaxation is only for one specific input. There is a relaxation for the objective, which relaxes the connections among the inputs. In other words, for a large branch, it hardly obtains the upper bound simultaneously for all inputs.

\textbf{Computational efficiency}. The exp-bounds and the ste-circle bounds are calculated more efficiently than the ste-square bounds and the scs-bounds. This is because that to estimate $\overline{\phi}_j$ and $\underline{\phi}_j$, it is needed to calculating the angle range between the $\bm{n}_j$ and four edges of the branch in the ste-square bounds and the scs-bounds. However, given a branch $\mathbb{B}$, all $\{\bm{n}_j\}_{j=1}^N$ share the same $\psi$ in the exp-bounds and the ste-circle bounds.

\begin{table}
	\caption{Different settings for different bounds in Algorithm~\ref{algorithm}. }
	\centering \label{table-1}
	\begin{tabular}{lccl}
		\toprule 
		{Methods}& upper &lower &searching domain\\
		\midrule 
		RS&$U^{rot}$ &$L^{rot}$ & 3D cube (side=$2\pi$)\\
		
		Exp-BnB& $U_q^{exp}$&$L_q^{exp}$& 2D square (side=$\pi$)\\
		
		Ste-circle-BnB &$U_q^{ste}$ &$L_q^{ste}$&2D square (side=$2$)\\
		
		Ste-square-BnB&$U_s^{ste}$&$L_s^{ste}$&2D square (side=$2$) \\
		SCS-BnB&$U_s^{scs}$ & $L_s^{scs}$ &2D rectangle($2\pi\times \pi/2 $)
		\\
		\bottomrule
		
	\end{tabular}
\end{table}

\section{Experiments}
In this section, we verify the validity of the proposed method on  challenging synthetic and real-world data. Firstly,  we compared our proposed methods with RANSAC and rotation search method to show robustness and efficiency. Then, full Atlanta frame estimation experiments were conducted to verify that estimating vertical direction was helpful to estimating all Atlanta frames. Lastly, we tested proposed methods in two real-world datasets to verify the practicality. All methods were implemented\footnote{{https://github.com/Liu-Yinlong/Globally-optimal-vertical-direction-estimation-in-Atlanta-world}} in Matlab 2019a and executed on an AMD Ryzen 7 2700X 3.7GHz CPU.

\subsection{Experimental Setting}
 The settings of approaches/pipelines run on experiments were as follows:
\begin{itemize}
\item \textbf{RANSAC}: The number of minimal sample subsets was 2. It could get three directions from two inlier-inputs (two inlier directions and its cross product direction), and one of them might be the vertical direction. Besides, the confidence level  $\zeta=0.99$ was used for the stopping criterion\cite{fischler1981random}. The number of iterations was typically taken as
\begin{equation}
\Omega=\Big\lceil\frac{\log(1-\zeta)}{\log(1-(1-\rho)^2)}\Big\rceil\label{Eq:omega}
\end{equation}
where $\rho$ was the outlier proportion,  $\lceil\cdot\rceil$ returned the nearest integer greater than or equal to the input.

\item \textbf{RS}: 
Algorithm~\ref{algorithm} with the rotation search bounds. Note that the bounds were also used in meta-BnB in~\cite{joo2019globally}.
We did not use the Extended Gaussian Image (EGI) and its integral image~\cite{joo2019globally,joo2019robust}, because we focused on the geometry and the validity of the proposed bounds. There might be more efficient bounds calculation methods for the proposed bounds but it is out of the scope of this paper.
\item \textbf{Exp-BnB}: Algorithm~\ref{algorithm} with the proposed bounds using exponential mapping.

\item \textbf{Ste-circle-BnB}: Algorithm~\ref{algorithm} with the proposed circle-bounds using stereographic projection.

\item \textbf{Ste-square-BnB}: Algorithm~\ref{algorithm} with the proposed square-bounds using stereographic projection.

\item \textbf{SCS-BnB}: Algorithm~\ref{algorithm} with the proposed bounds using sphere coordinate system.

\end{itemize}

\begin{figure*}
	\centering
	\includegraphics[scale=0.35]{./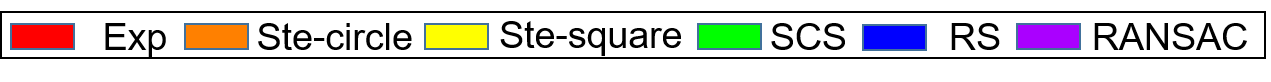}
	\begin{tabular}{ccc}
		$\kappa=0.005$&$\kappa=0.010$&$\kappa=0.020$
		\\

		\includegraphics[width=0.3\textwidth]{./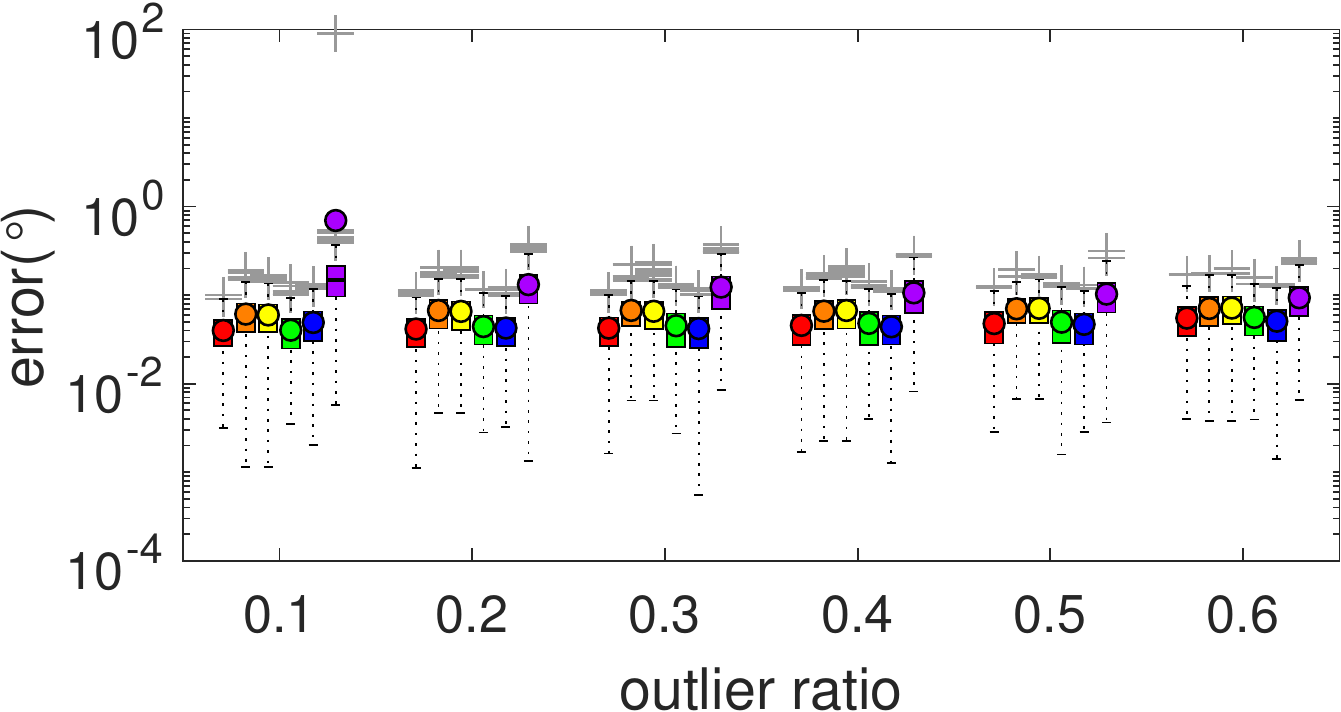}
		&\includegraphics[width=0.3\textwidth]{./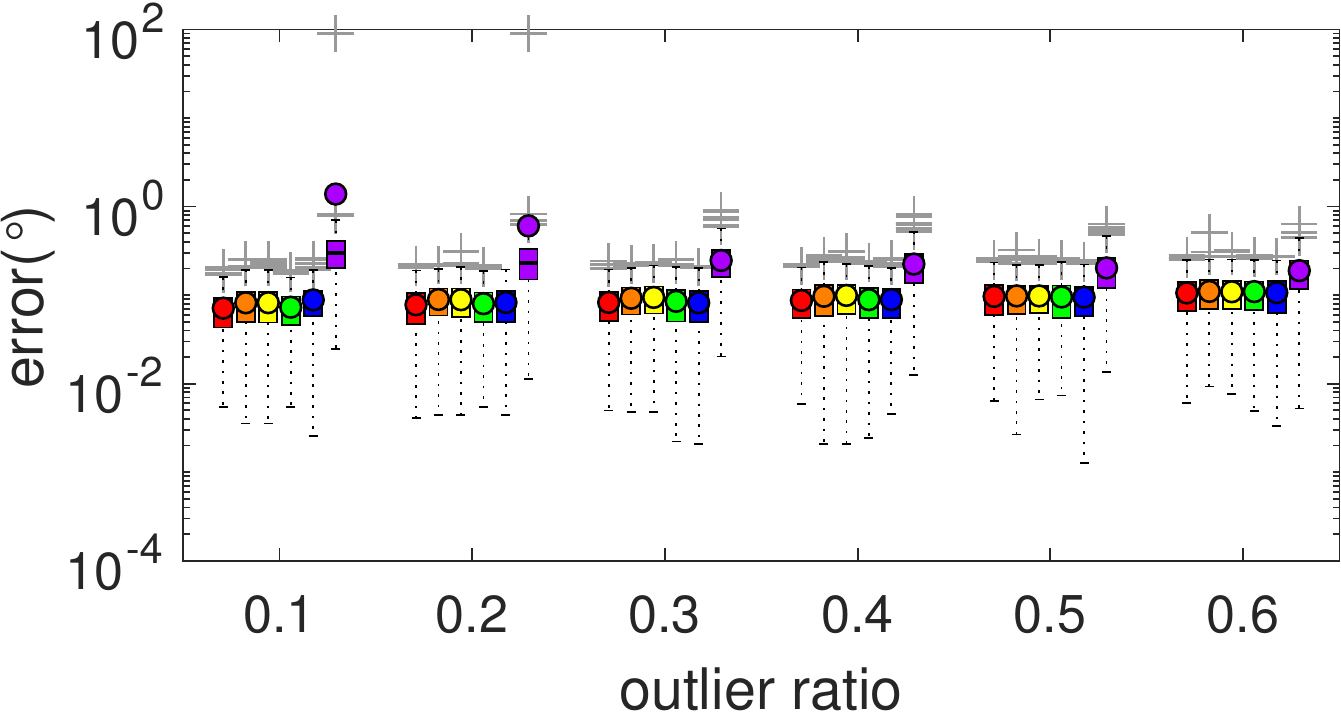}
		&\includegraphics[width=0.3\textwidth]{./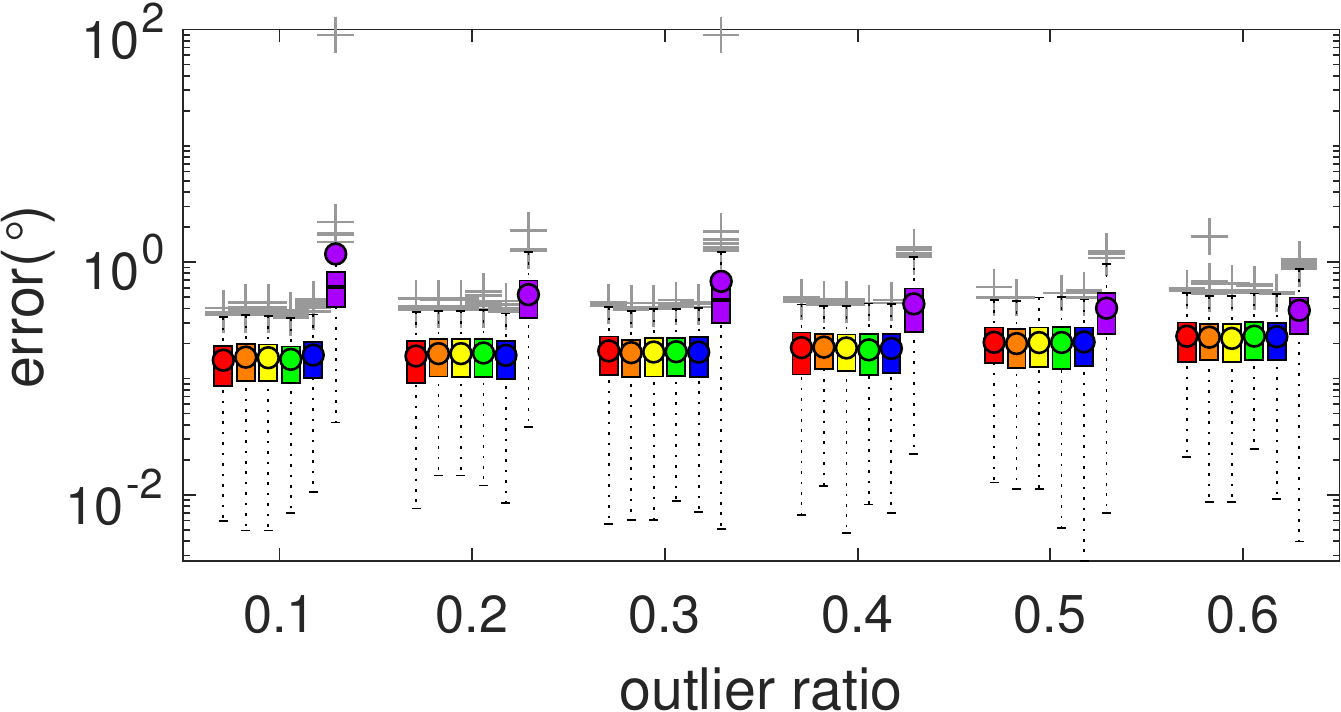}
		\\
		\includegraphics[width=0.3\textwidth]{./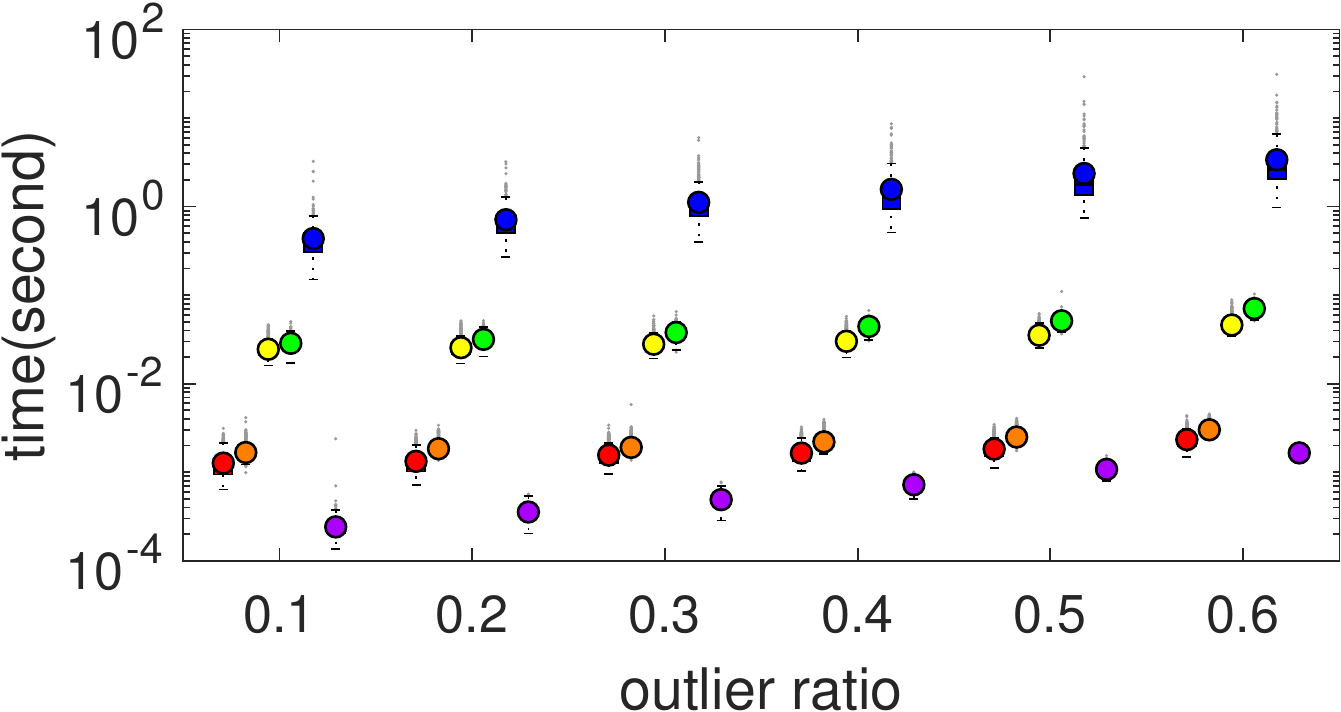}
		&\includegraphics[width=0.3\textwidth]{./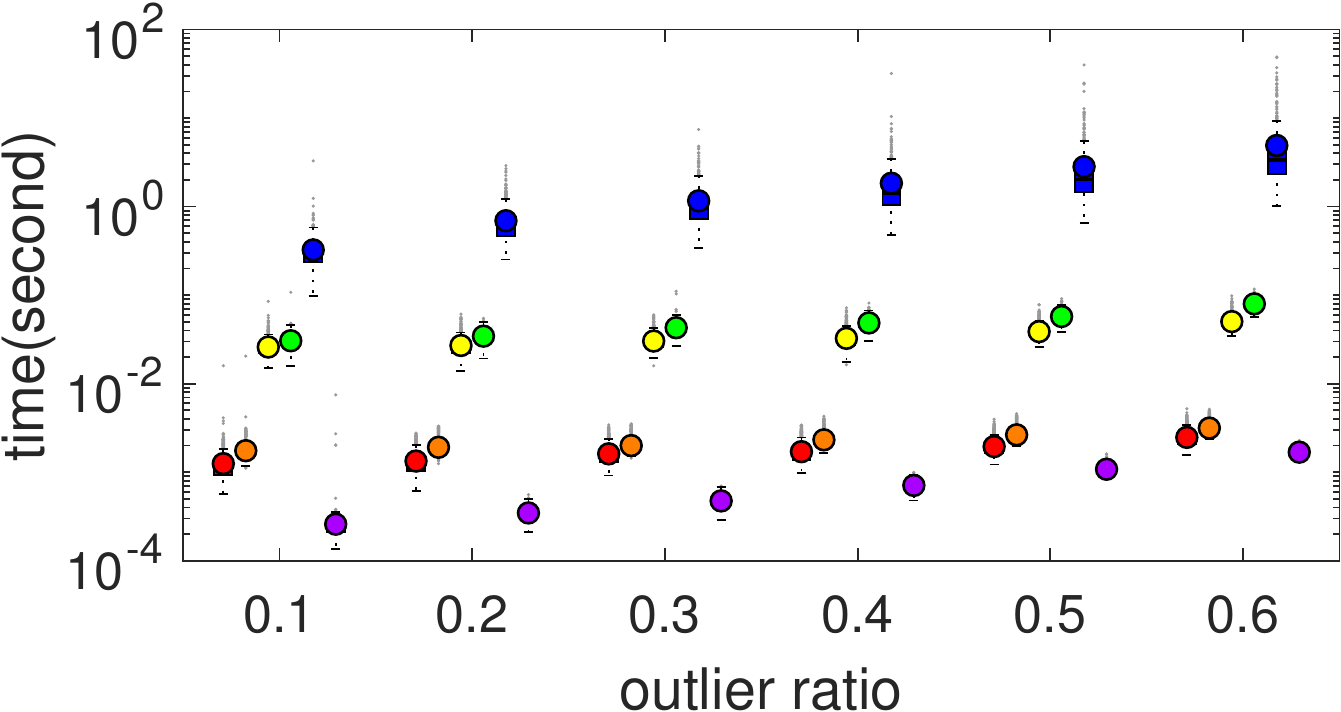}
		& \includegraphics[width=0.3\textwidth]{./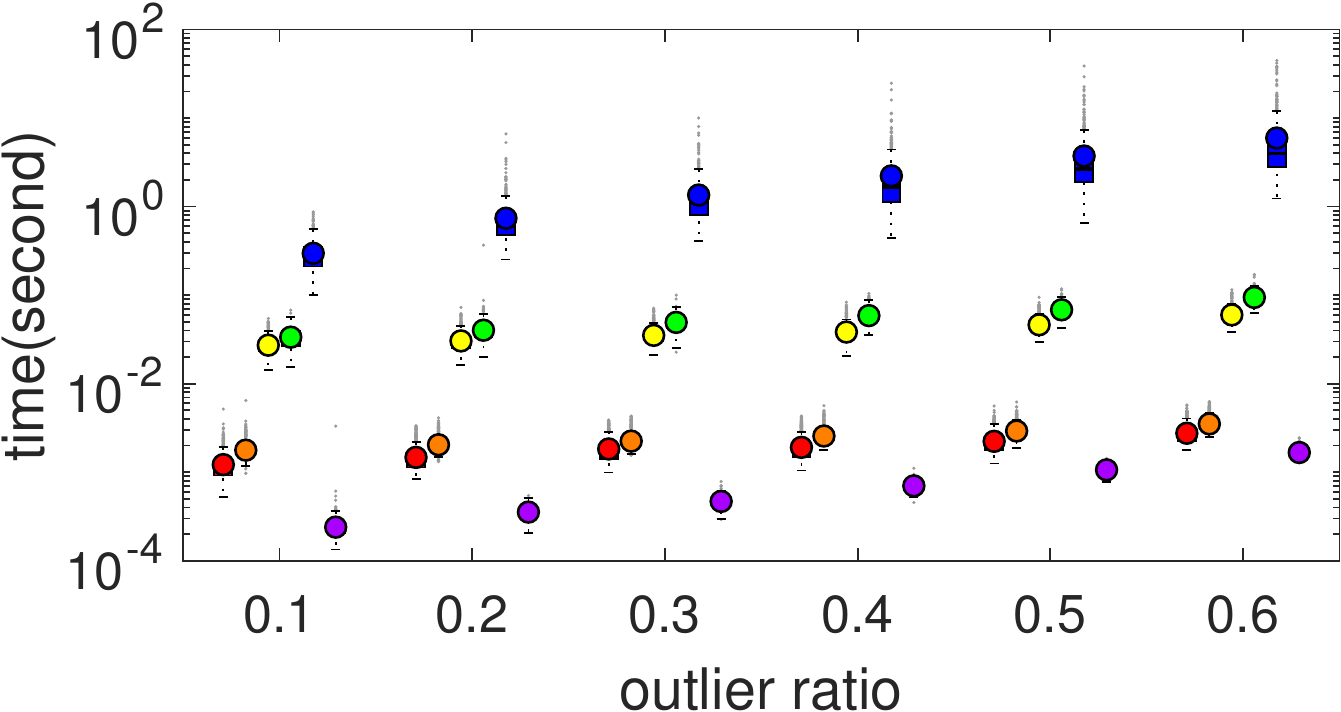}
		\\
		\includegraphics[width=0.3\textwidth]{./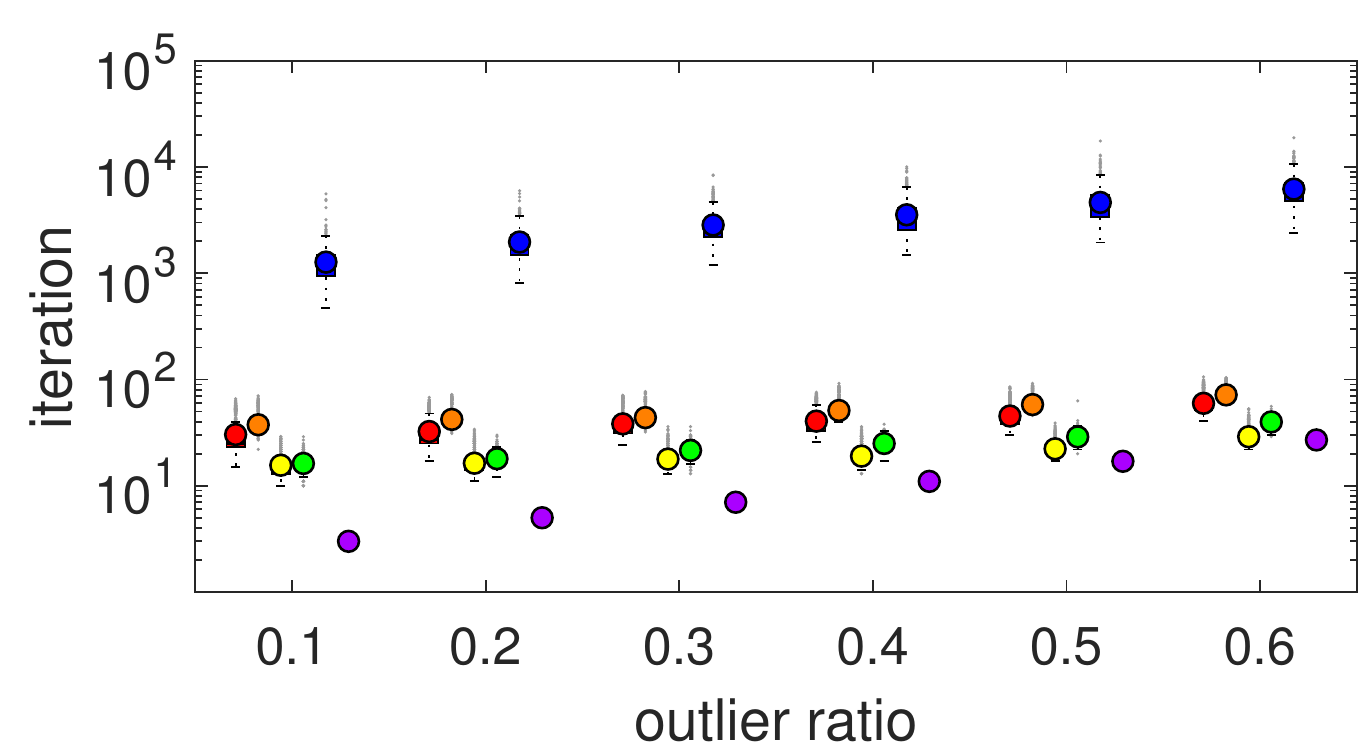}
		& \includegraphics[width=0.3\textwidth]{./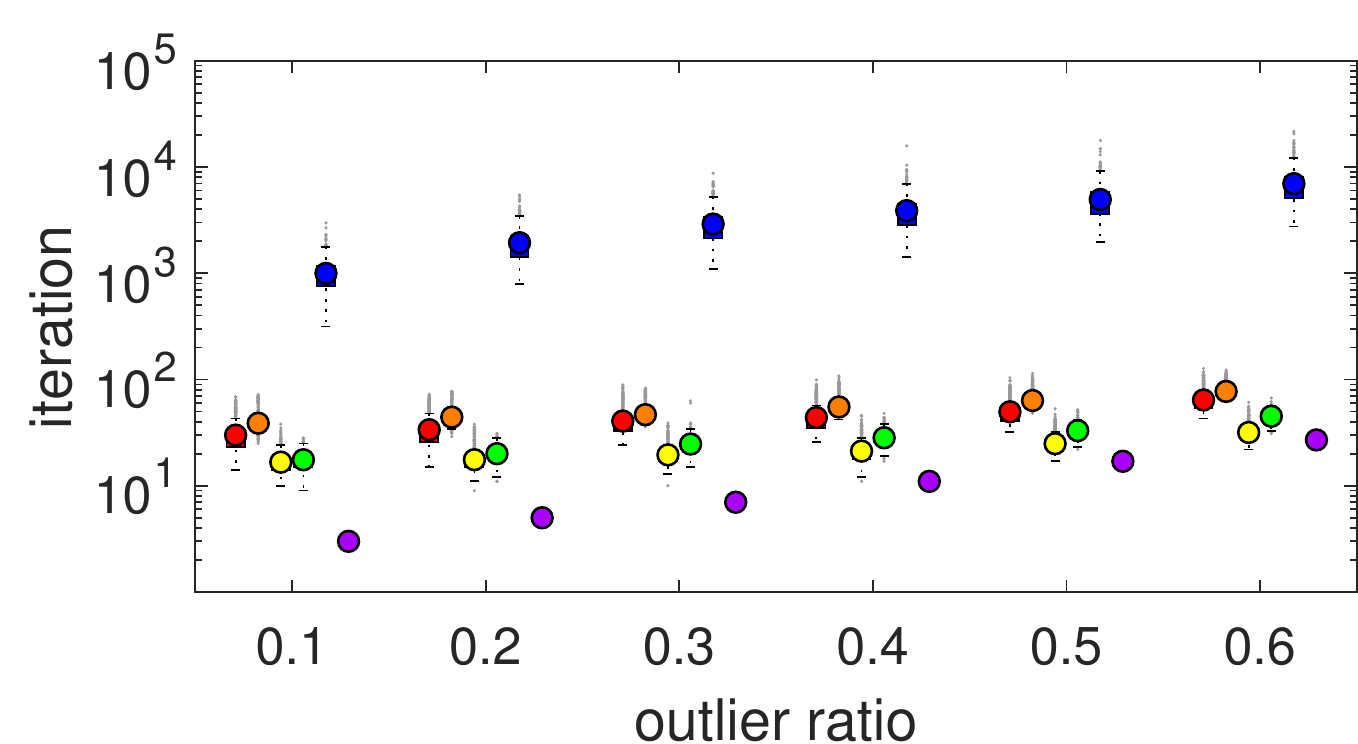}
		&\includegraphics[width=0.3\textwidth]{./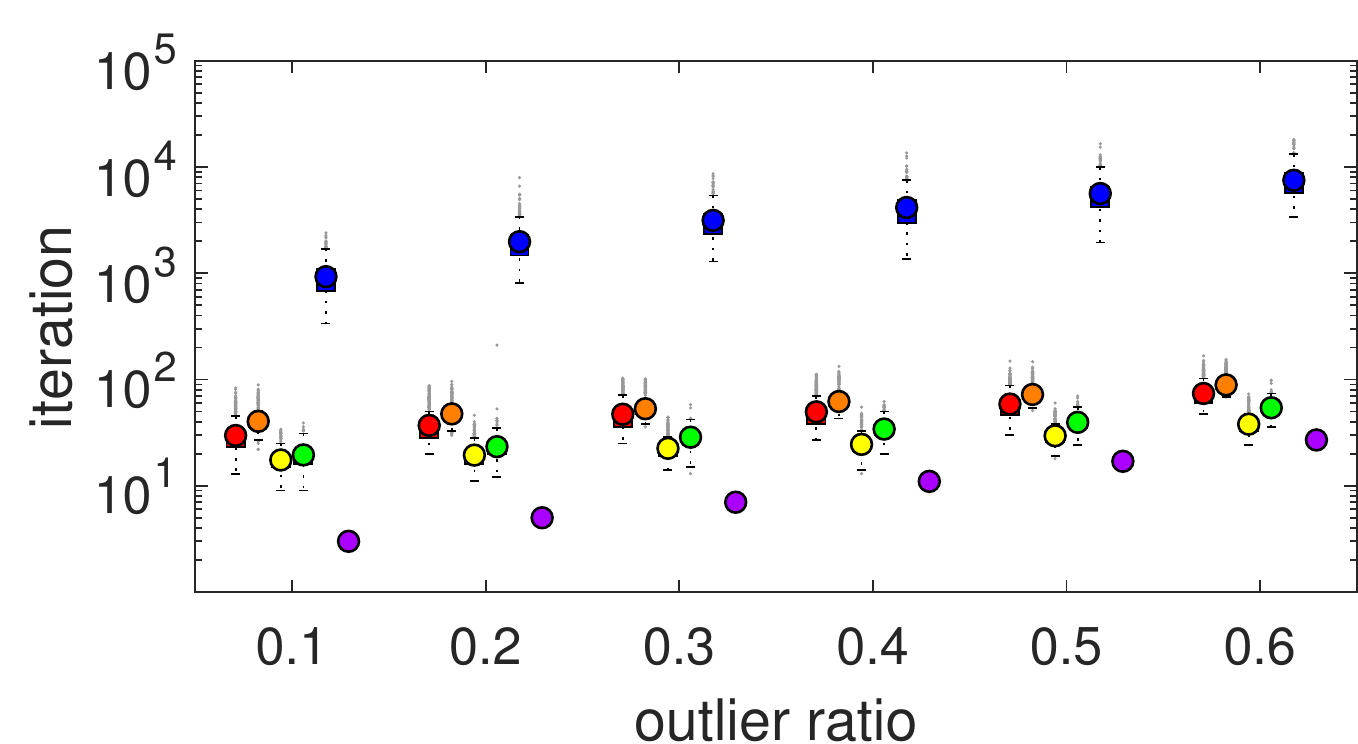}
	\end{tabular}
	\caption{Controlled experiments. The first row shows the vertical direction error $\varepsilon$ ($^\circ$) in different noise levels. The second row shows the runtime (second) in different noise levels. The third row shows the iteration count in different noise levels.  }
	\label{Fig:controlled-experiments}
\end{figure*}

In addition, to simulate the corrupted inputs in the synthetic experiments, noise and outlier were added. For noise, $\bm{e}_j\in \mathbb{R}^3$ was the $j$-th random vector, whose elements were randomly uniformly distributed in the interval $[-1,1]$. The noise was simulated by
\begin{equation}
\bm{n}_j \longleftarrow \dfrac{\bm{n}_j+\kappa\bm{e}_j}{\Vert\bm{n}_j+\kappa\bm{e}_j\Vert}
\end{equation}
where $\kappa$ was the amplitude of noise. For outliers, random orientations were added into the inputs. The total number of inputs was denoted $N$ and the number of outlier inputs  was denoted $N_o$, then $ \rho=N_{o}/N$ was the outlier proportion.

\subsection{Synthetic Data Experiments}

\subsubsection{Synthetic Atlanta World}

To simulate synthetic Atlanta world data, a random orientation was generated as the vertical direction ($\bm{v}_{gt}$). Except where otherwise specified, 20\% inlier inputs were parallel to vertical direction, and the other 80\% inlier inputs were randomly generated to be perpendicular to the vertical direction and thus in the "horizontal plane". Note that the number of the horizontal frames were not specified. The inlier threshold was   $\tau=\arctan(\kappa)$ according to the noise level in all the synthetic experiments. Once the vertical direction was estimated as $\bm{v}^*$, the error was calculated by 
\begin{equation}\label{Eq:error}
\varepsilon=\arccos(abs(\bm{v}_{gt}^T \bm{v}^*)) 
\end{equation}
To evaluate the results of the experiment, the vertical error and runtime were recorded. Additionally, because the iteration of the BnB algorithm reflected the tightness of the bounds, the iterations of BnB algorithm with different bounds were also recorded. Moreover, to reduce the randomness, 500 trials were repeated in each setting.

\textbf{Controlled experiments.} We first tested all the methods with different outlier ratios $\rho=\{0.1,\cdots,0.6\}$ and different noise levels $\kappa=\{0.005,0.010,0.020\}$. The number of input was   $N=500$.  The results are shown in Fig.~\ref{Fig:controlled-experiments}. From the results, we can draw the following conclusions:
\begin{itemize}
	\item All the four types of bounds in $\mathbb{S}^2$ and the bounds  of rotation search could be nested into the BnB algorithm to estimate the vertical direction globally in Atlanta world.  
	
	\item The four bounds in $\mathbb{S}^2$ had different efficiency. Nevertheless, the proposed bounds in $\mathbb{S}^2$ were more efficient than the bounds of rotation search. 
	
	\item Broadly, the exp-BnB and the ste-circle-BnB had similar efficiency.	The ste-square-BnB and the SCS-BnB had similar efficiency. More specifically, the first two were more efficient than the last two.  
	
	\item  Generally, the ste-square-BnB and the SCS-BnB had fewer iterations than  the exp-BnB and the ste-circle-BnB. It revealed the  ste-square bounds and the SCS bounds were tighter, which was consistent with the previous theoretical analysis.
\end{itemize}

\begin{figure*}[htp]
	\centering
	\includegraphics[scale=0.35]{./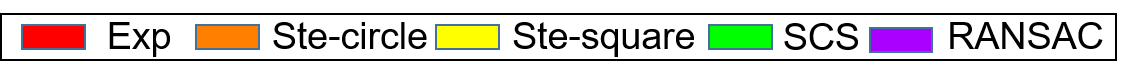}
	\begin{tabular}{ccc}
		
		$\kappa=0.005$&$\kappa=0.010$&$\kappa=0.020$
		
		\\
		
		\includegraphics[width=0.3\textwidth]{./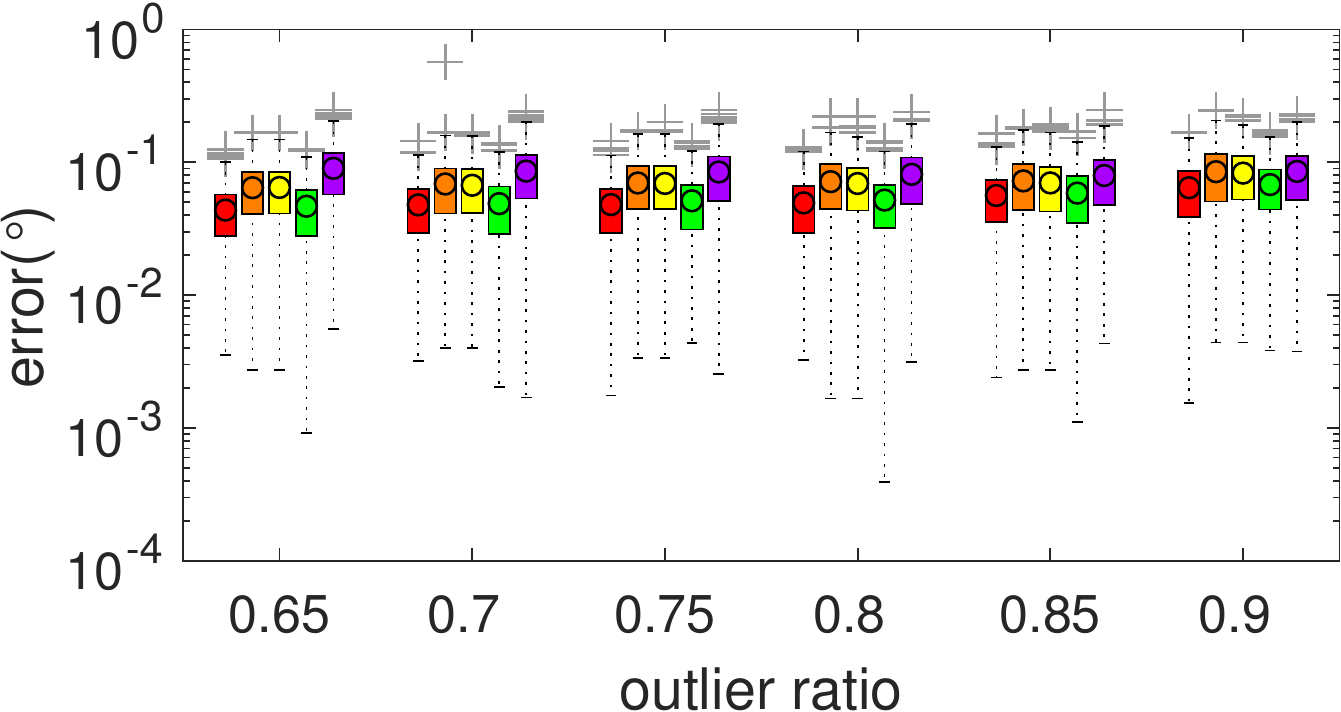}
		&\includegraphics[width=0.3\textwidth]{./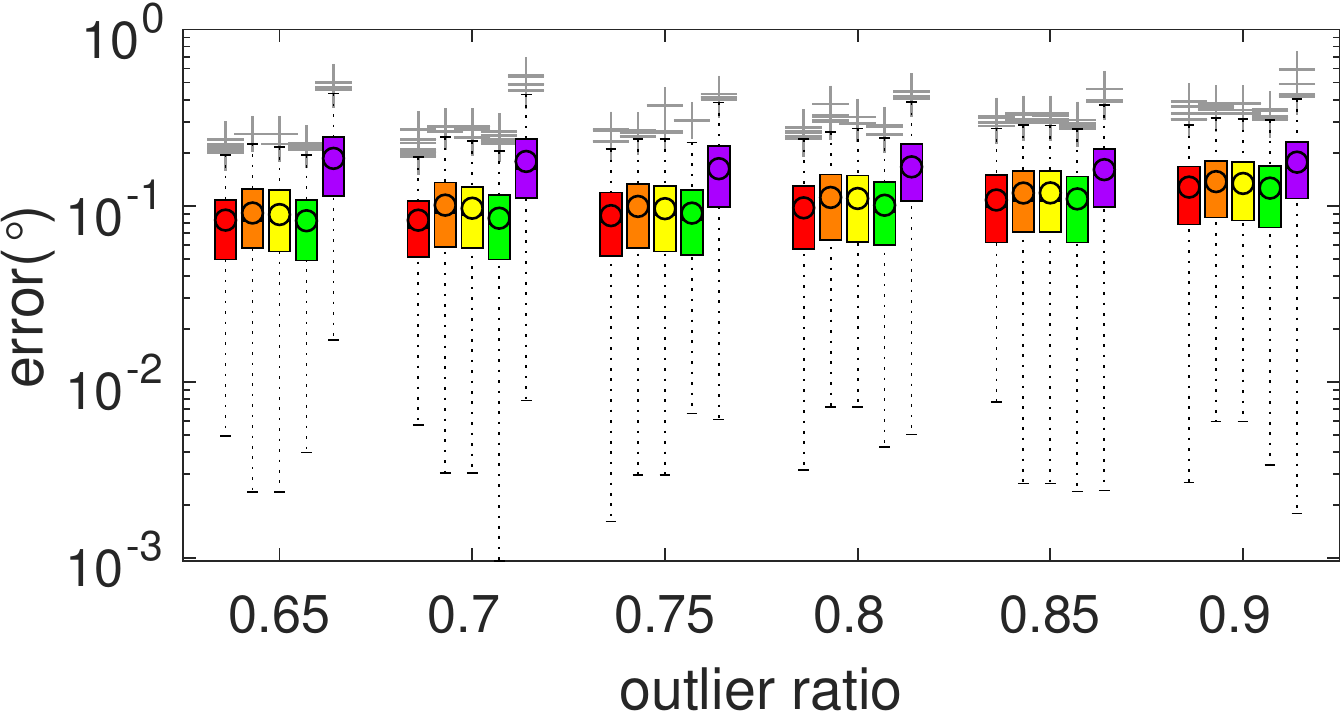}
		&\includegraphics[width=0.3\textwidth]{./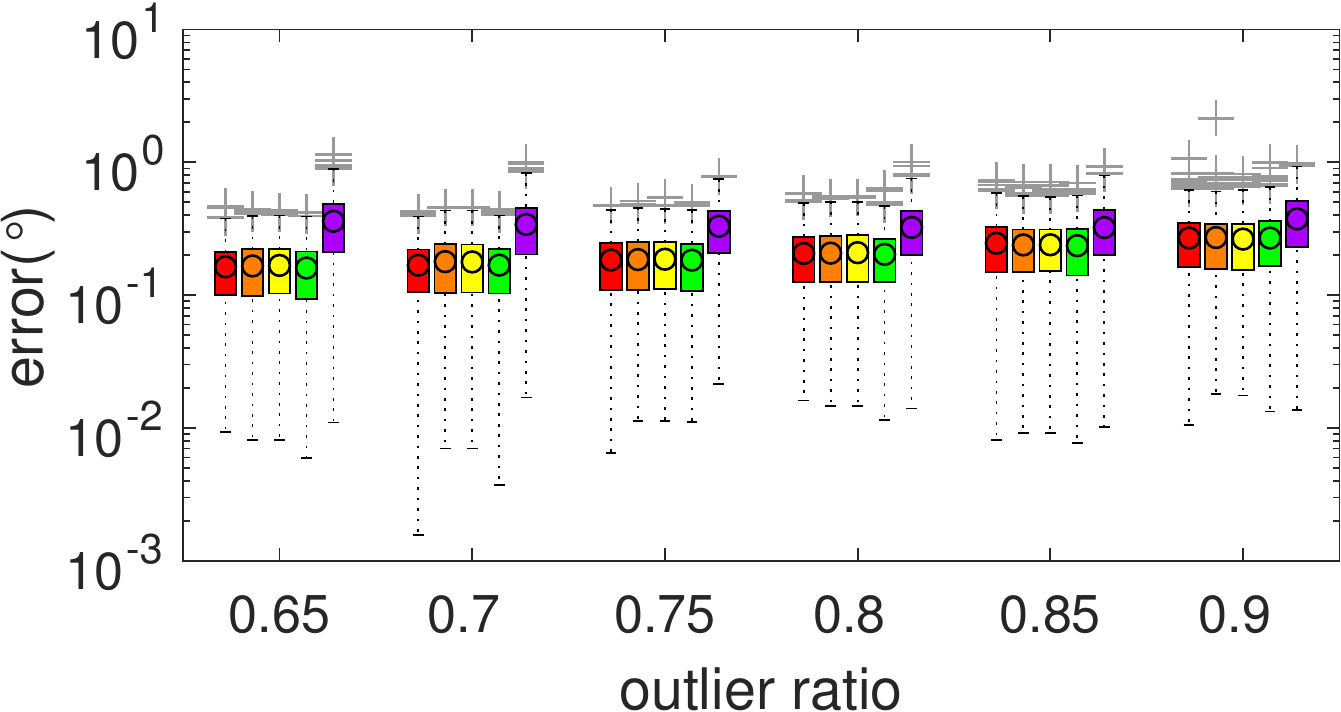}
		\\
		\includegraphics[width=0.3\textwidth]{./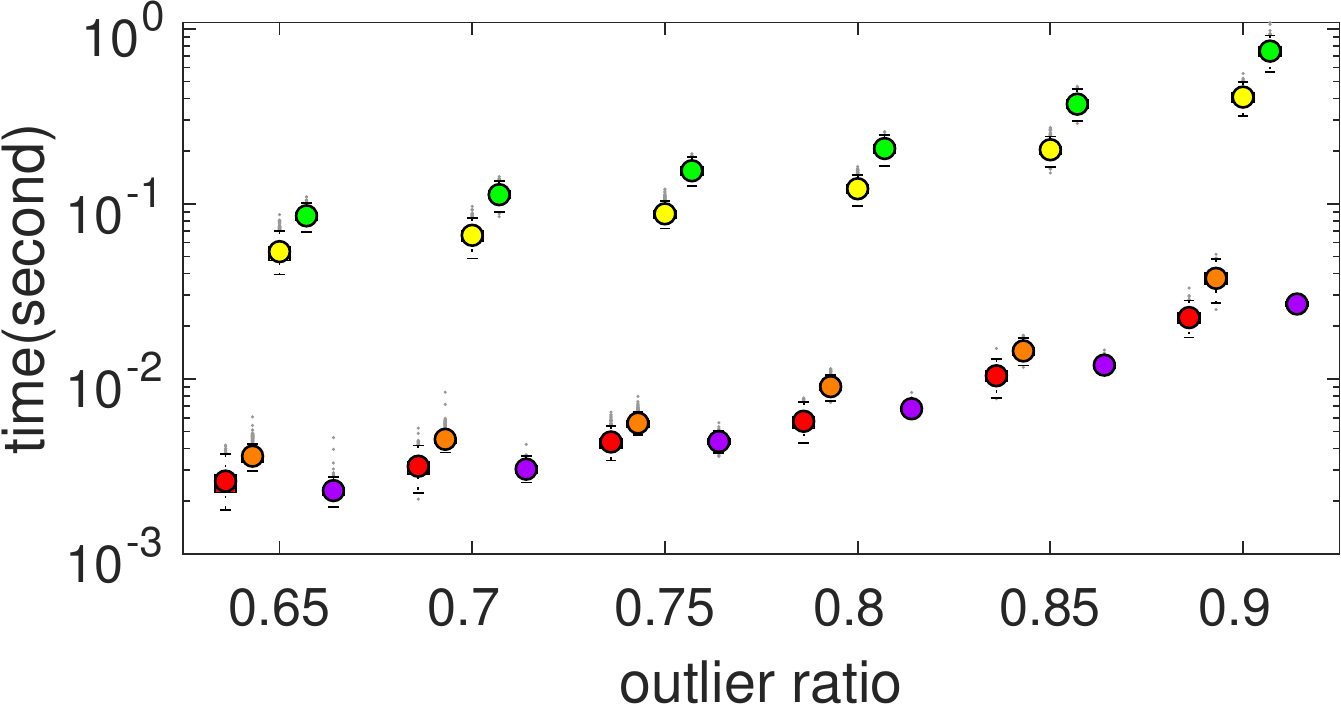}
		&\includegraphics[width=0.3\textwidth]{./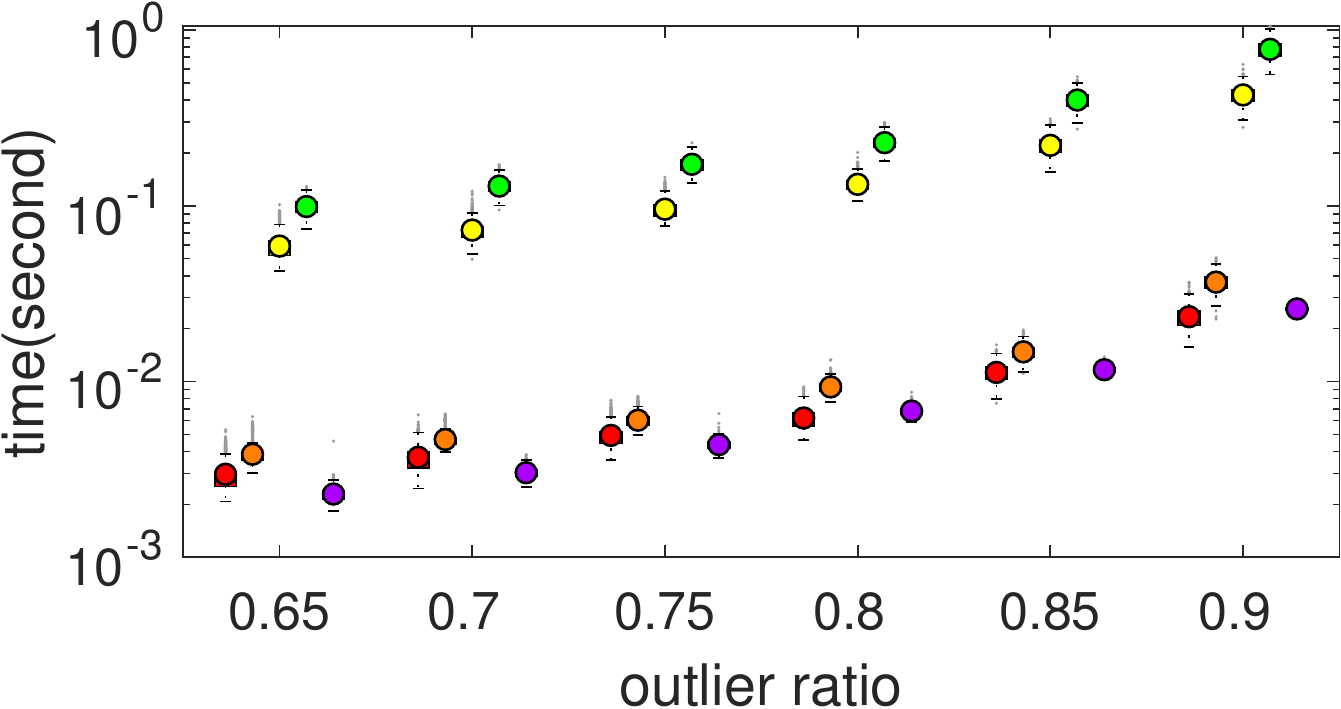}
		& \includegraphics[width=0.3\textwidth]{./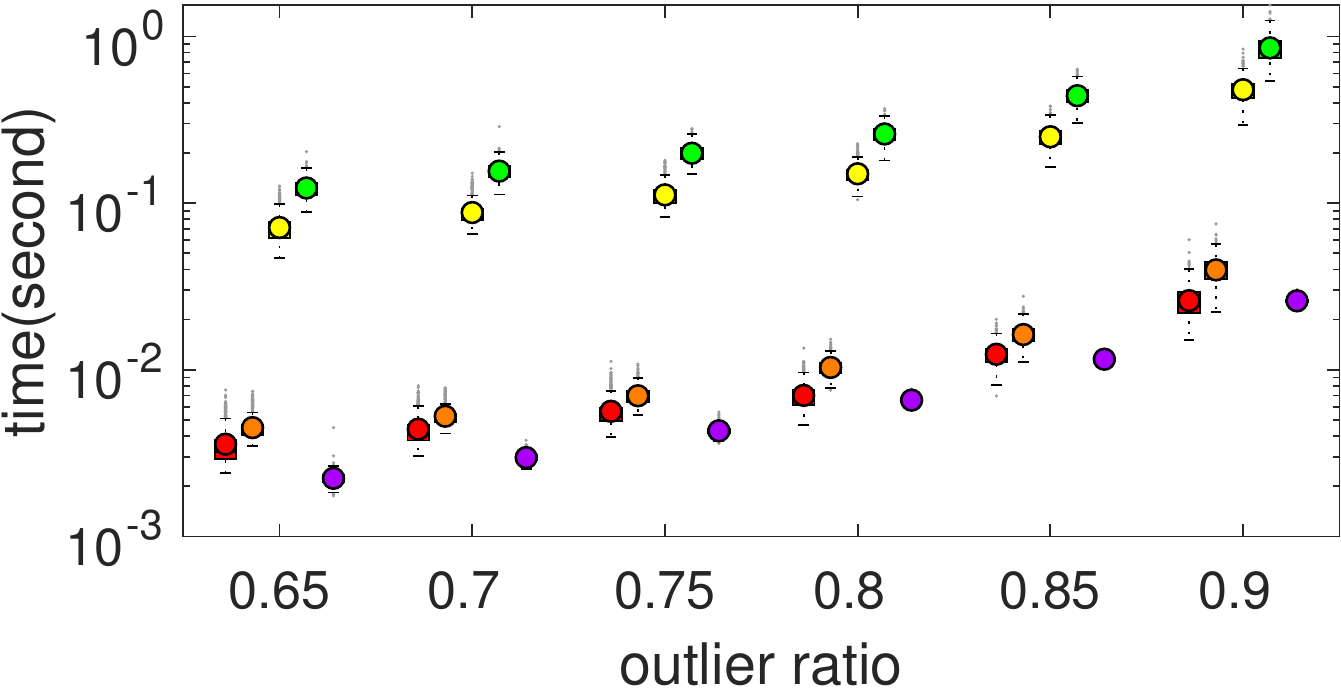}
		\\
		\includegraphics[width=0.3\textwidth]{./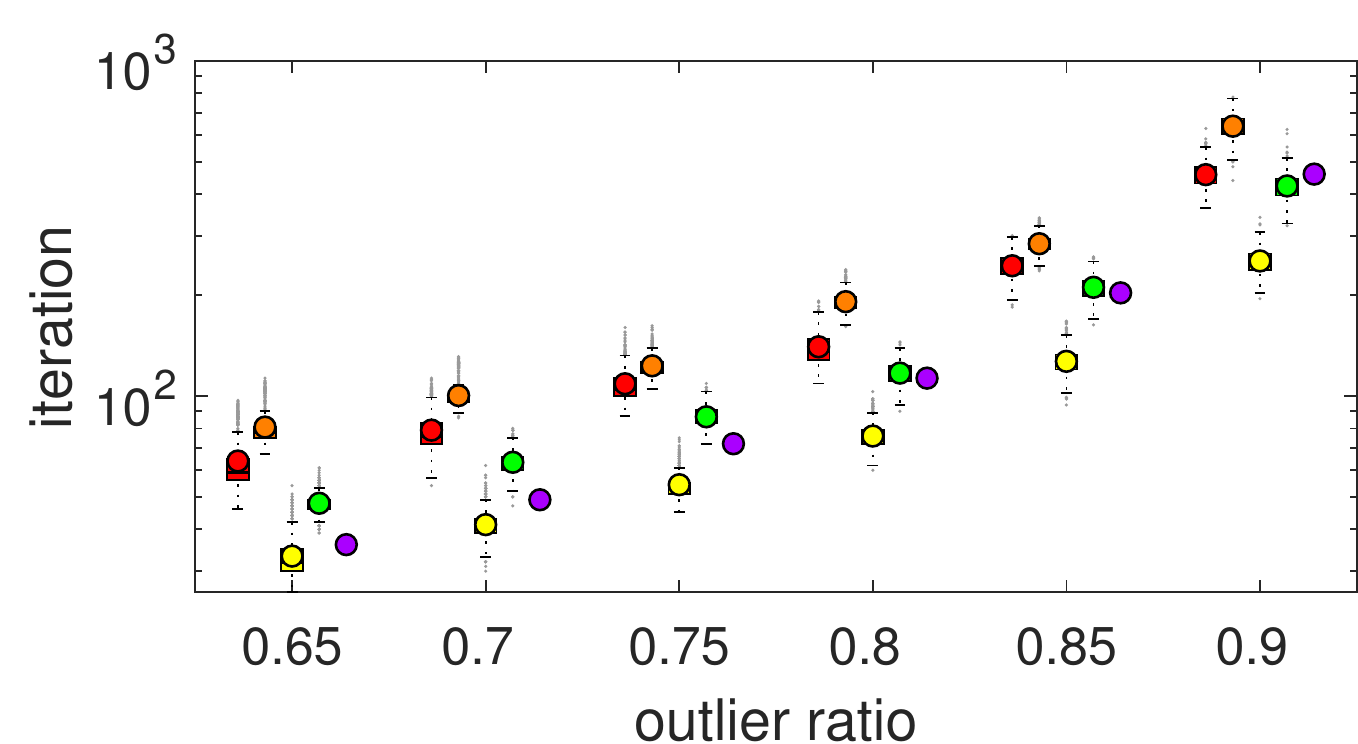}
		& \includegraphics[width=0.3\textwidth]{./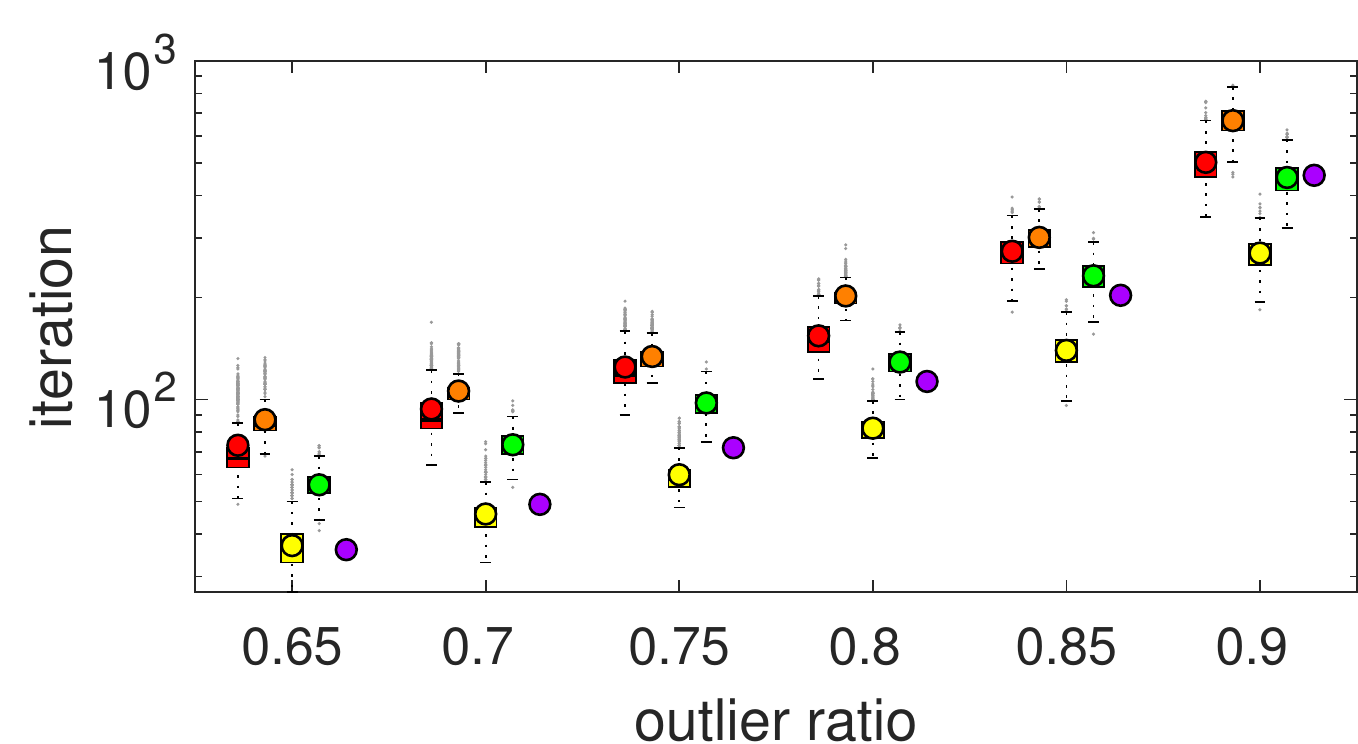}
		&\includegraphics[width=0.3\textwidth]{./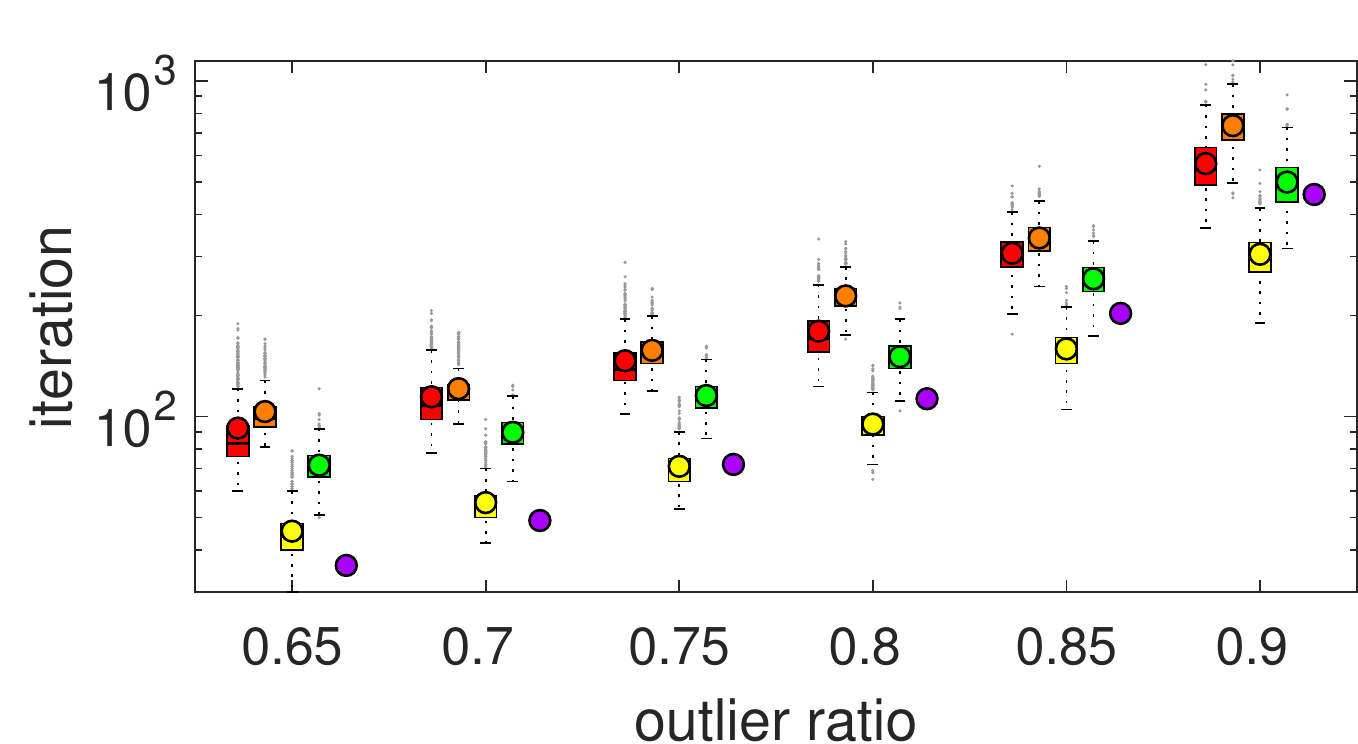}
	\end{tabular}
	\caption{High outlier ratio experiments.The first row shows the vertical direction error $\varepsilon$ ($^\circ$) in different noise levels. The second row shows the runtime (second) in different noise levels. The third row shows the iteration count in different noise levels.}\label{Fig:high-outlier}
\end{figure*}

\begin{figure*}
	\centering
	\includegraphics[scale=0.35]{./img-eps/all-RS.png}
	\begin{tabular}{ccc}
		$\kappa=0.050$&$\kappa=0.100$&$\kappa=0.200$
		\\
		\includegraphics[width=0.3\textwidth]{./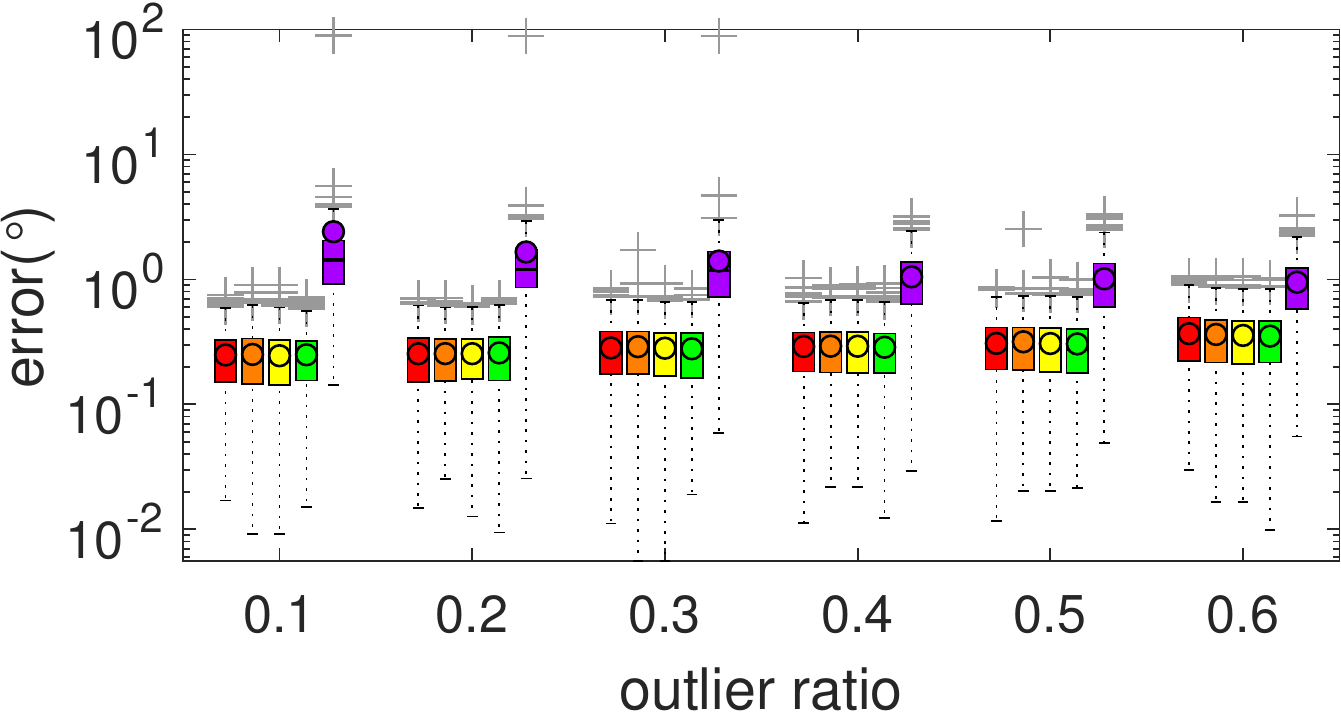}
		&\includegraphics[width=0.3\textwidth]{./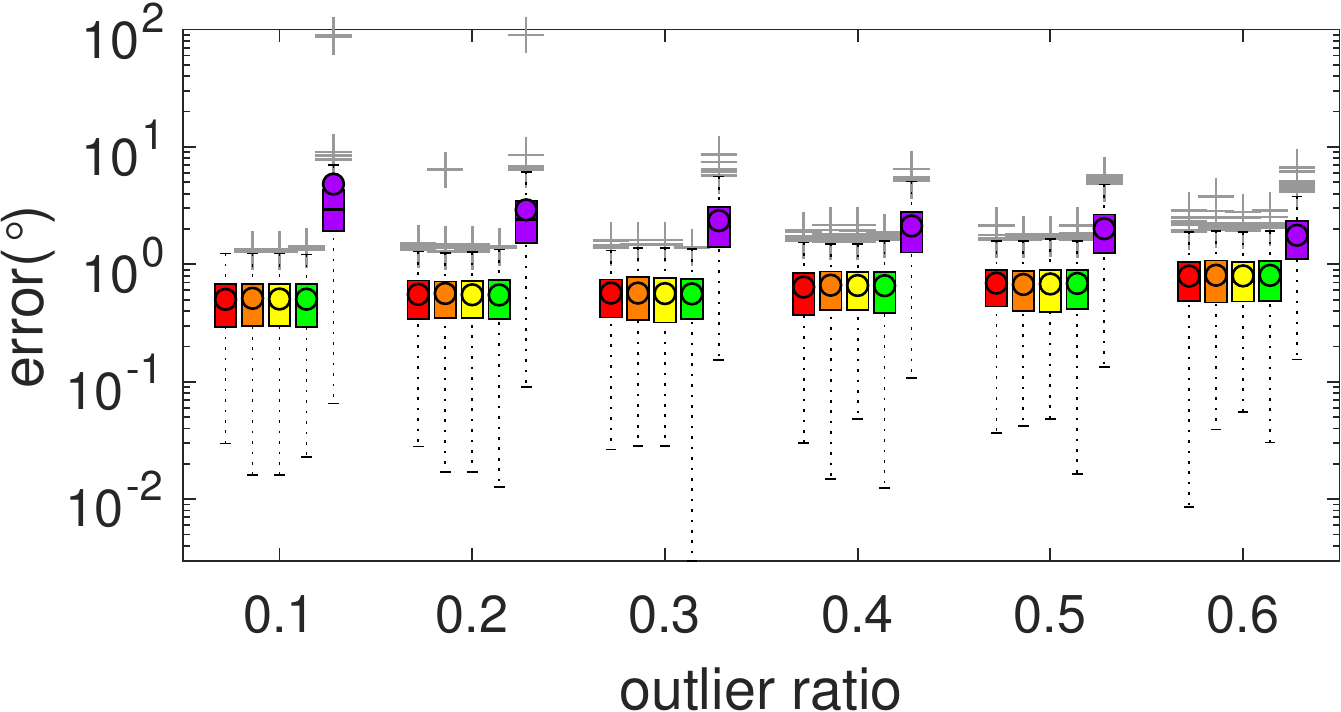}
		&\includegraphics[width=0.3\textwidth]{./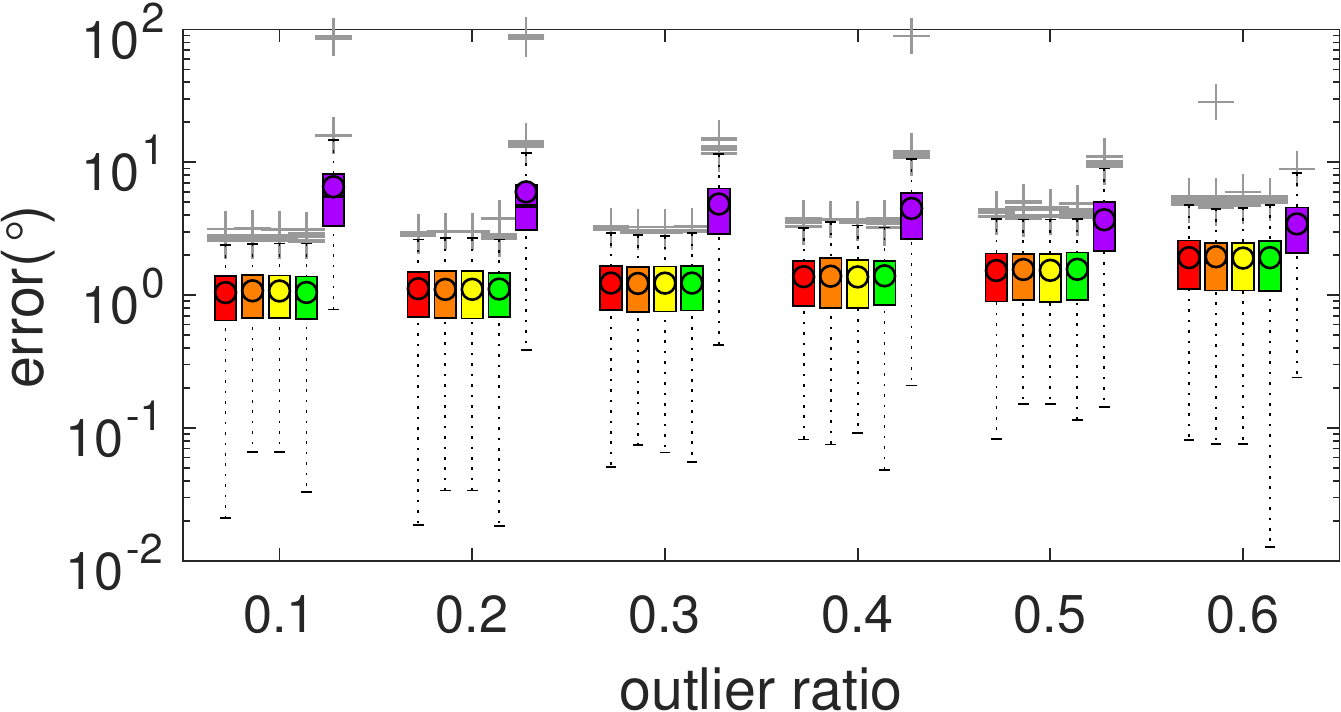}
		\\
		\includegraphics[width=0.3\textwidth]{./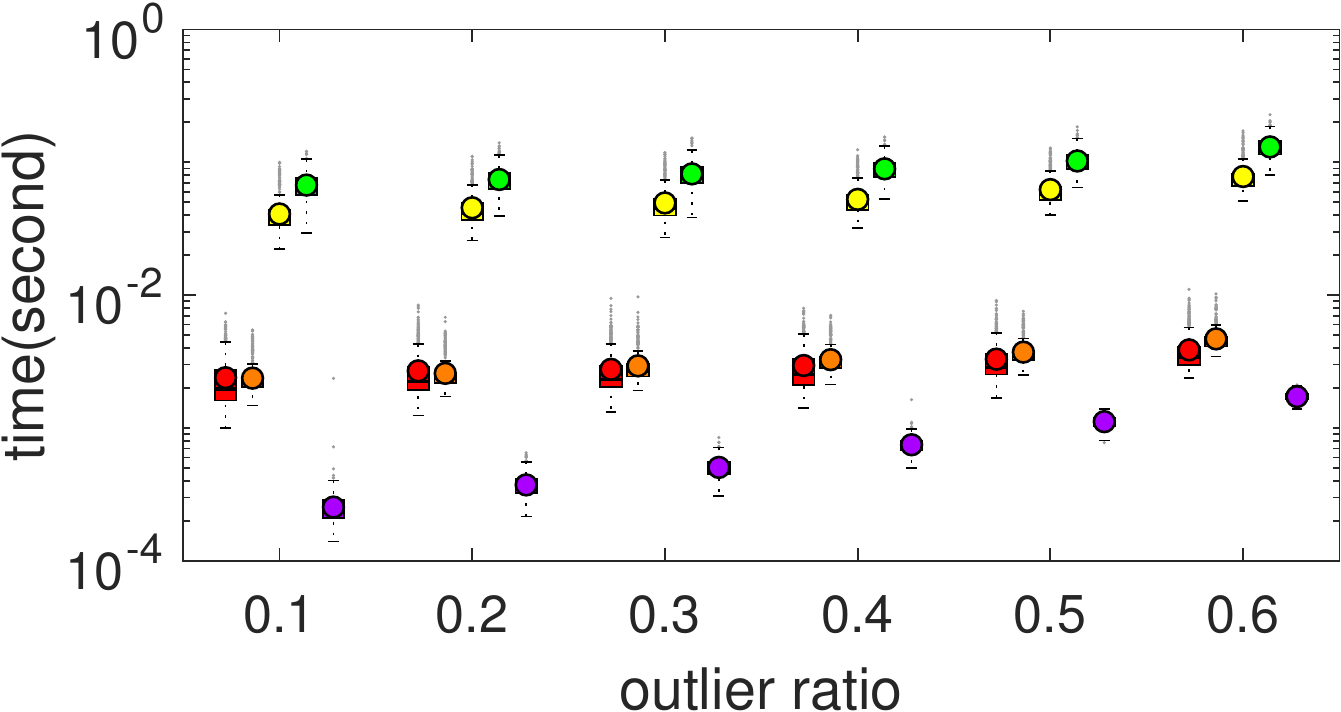}
		&\includegraphics[width=0.3\textwidth]{./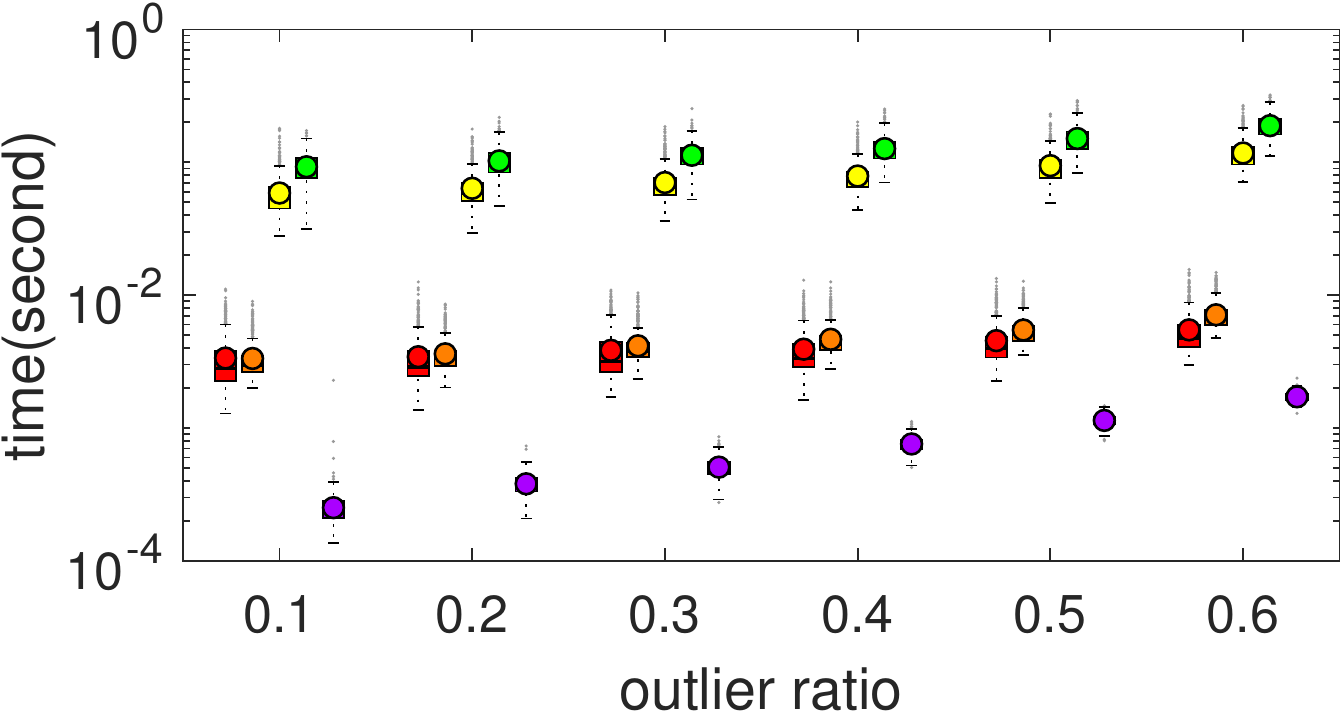}
		& \includegraphics[width=0.3\textwidth]{./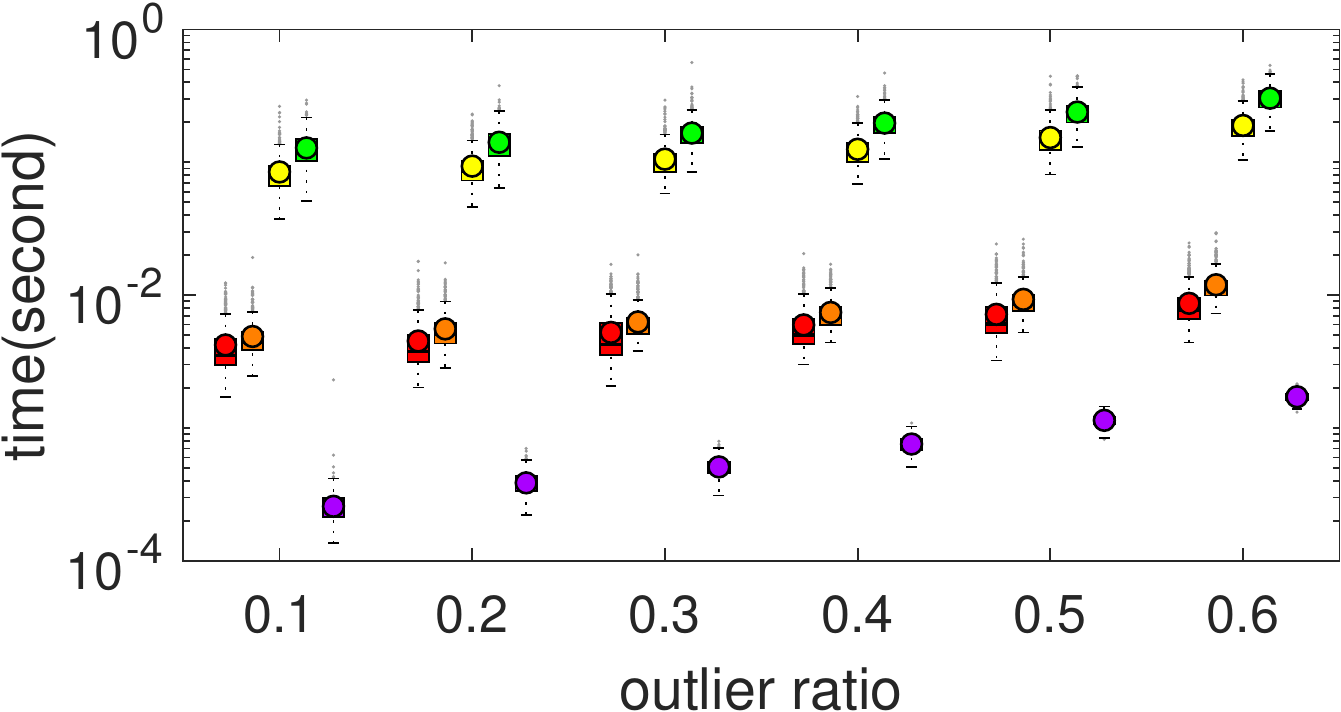}
		\\
		\includegraphics[width=0.3\textwidth]{./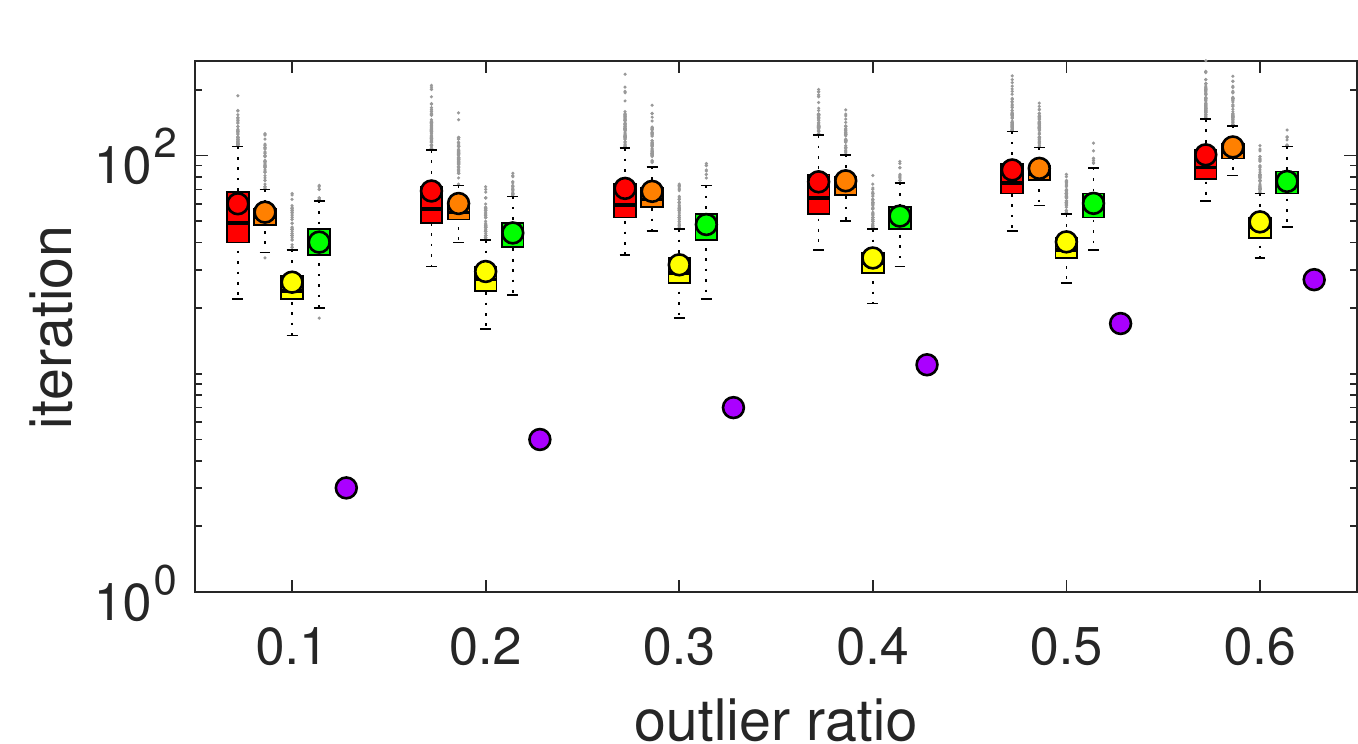}
		& \includegraphics[width=0.3\textwidth]{./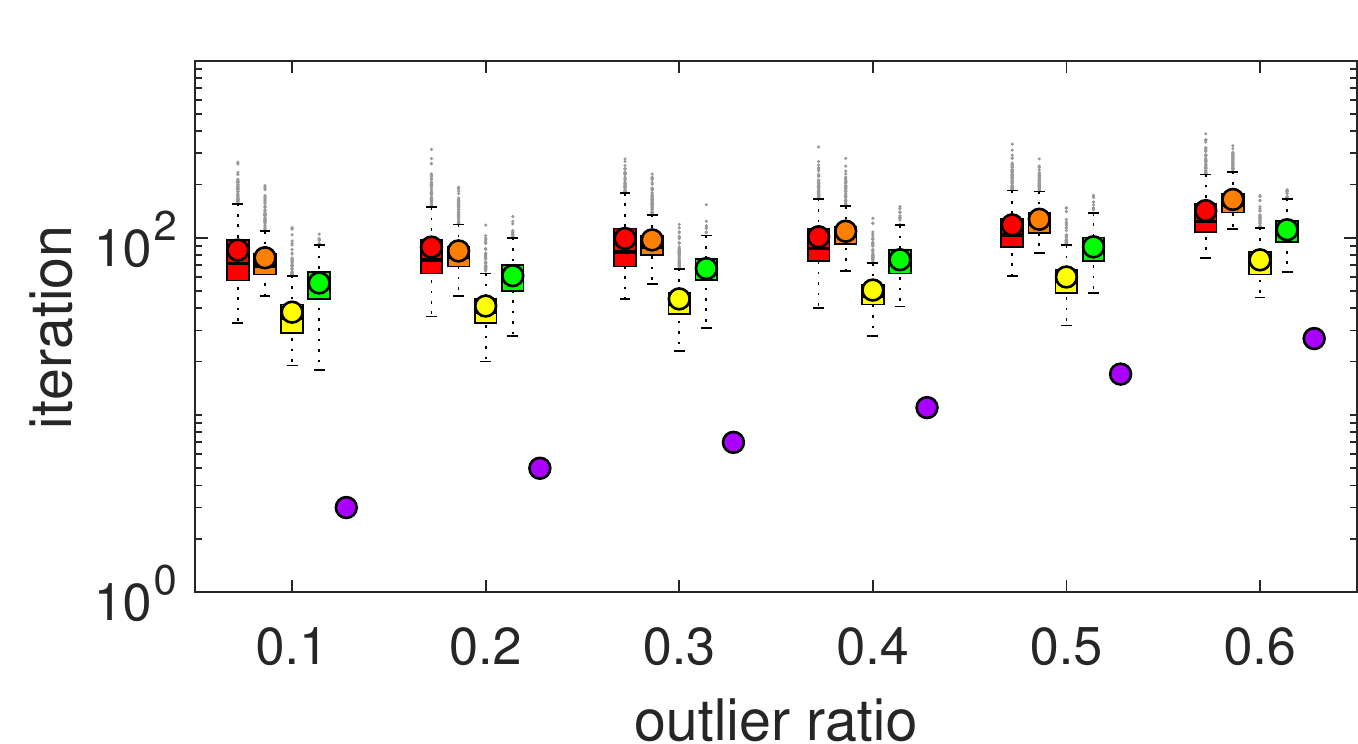}
		&\includegraphics[width=0.3\textwidth]{./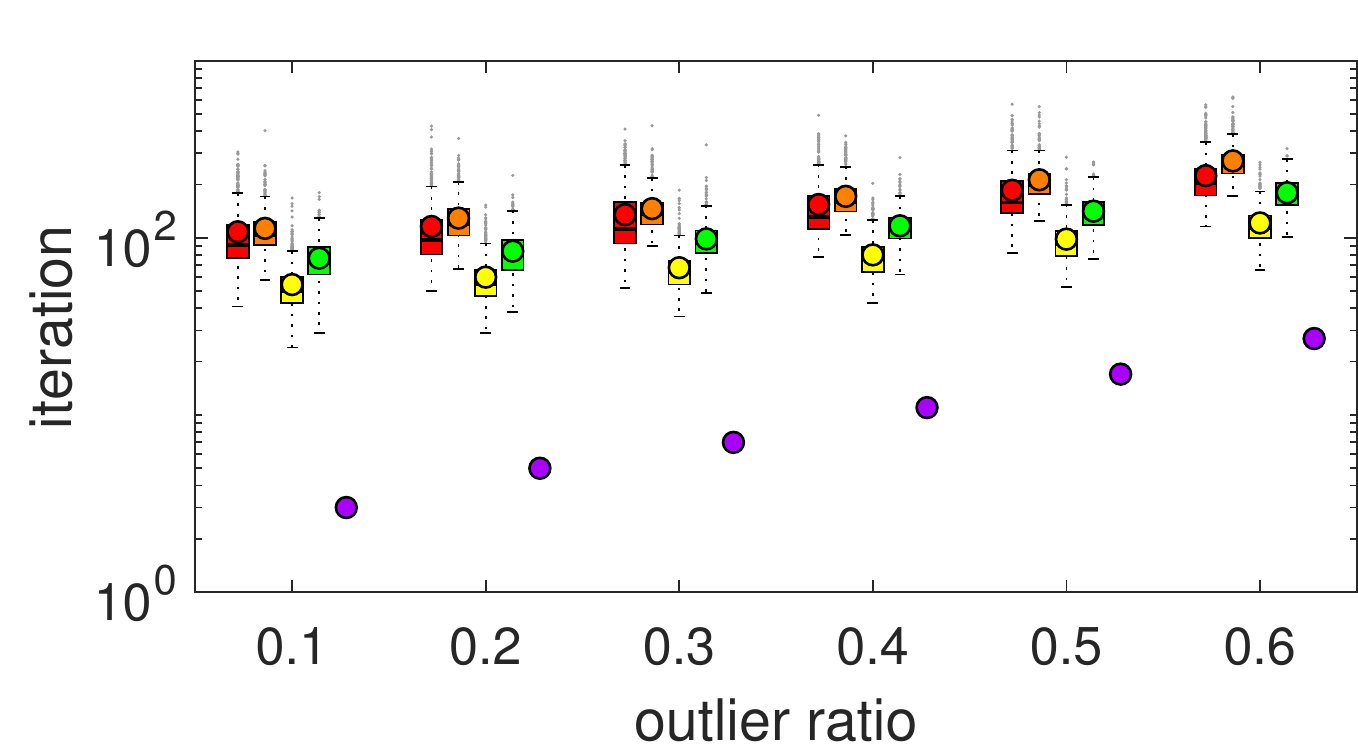}
	\end{tabular}
	\caption{Large noise experiments. The first row shows the vertical direction error $\varepsilon$ ($^\circ$) in different noise levels. The second row shows the runtime (second) in different noise levels. The third row shows the iteration count in different noise levels.}\label{Fig:large-noise}
\end{figure*}

\begin{figure*}
	\centering
	\includegraphics[scale=0.35]{./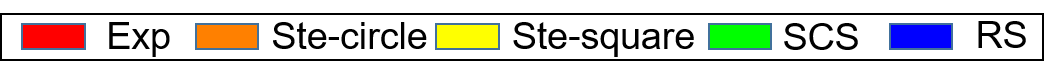}
	\begin{tabular}{ccc}
		$\kappa=0.005$&$\kappa=0.010$&$\kappa=0.020$
		\\
		\includegraphics[width=0.3\textwidth]{./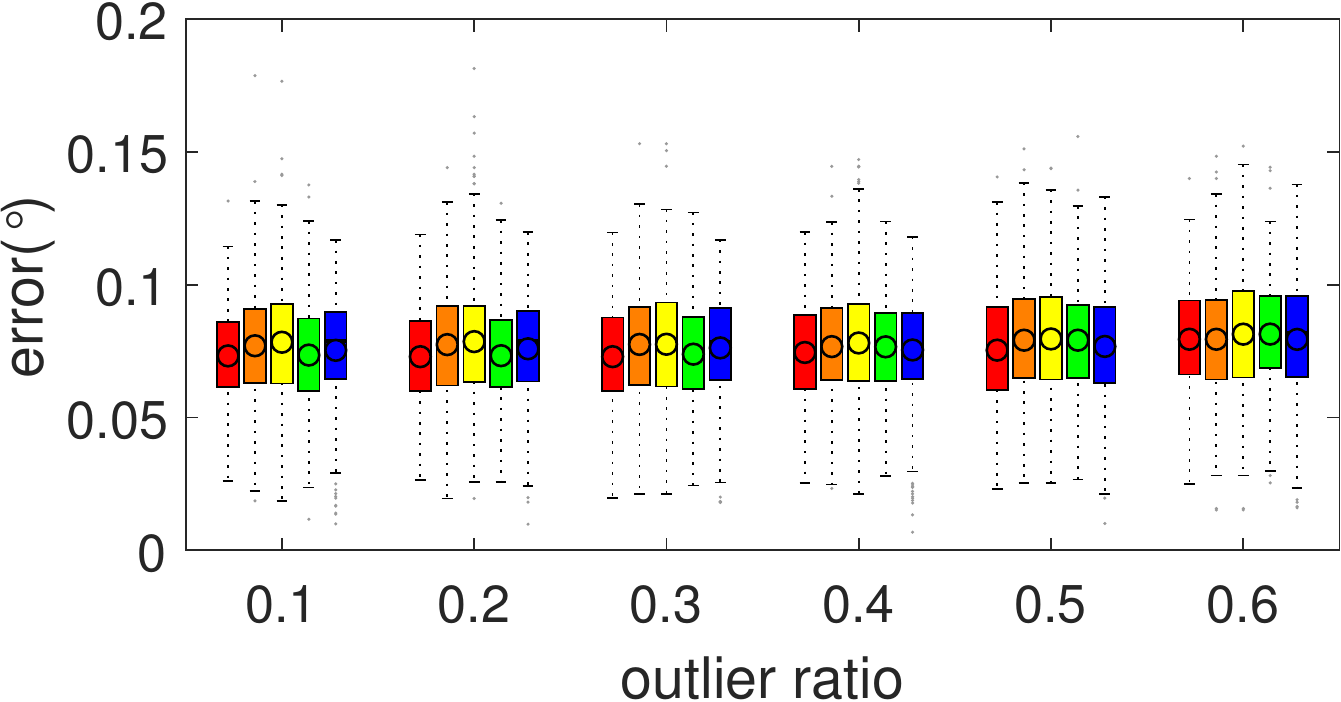}
		&\includegraphics[width=0.3\textwidth]{./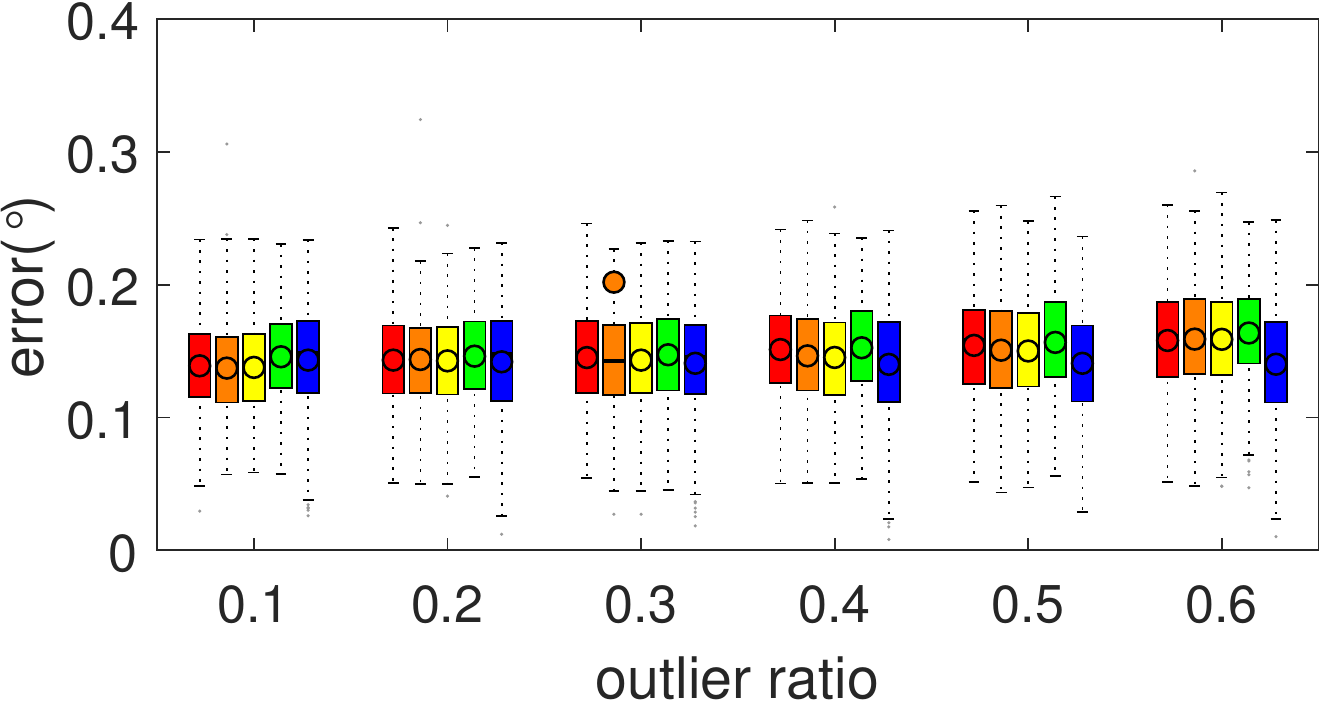}
		&\includegraphics[width=0.3\textwidth]{./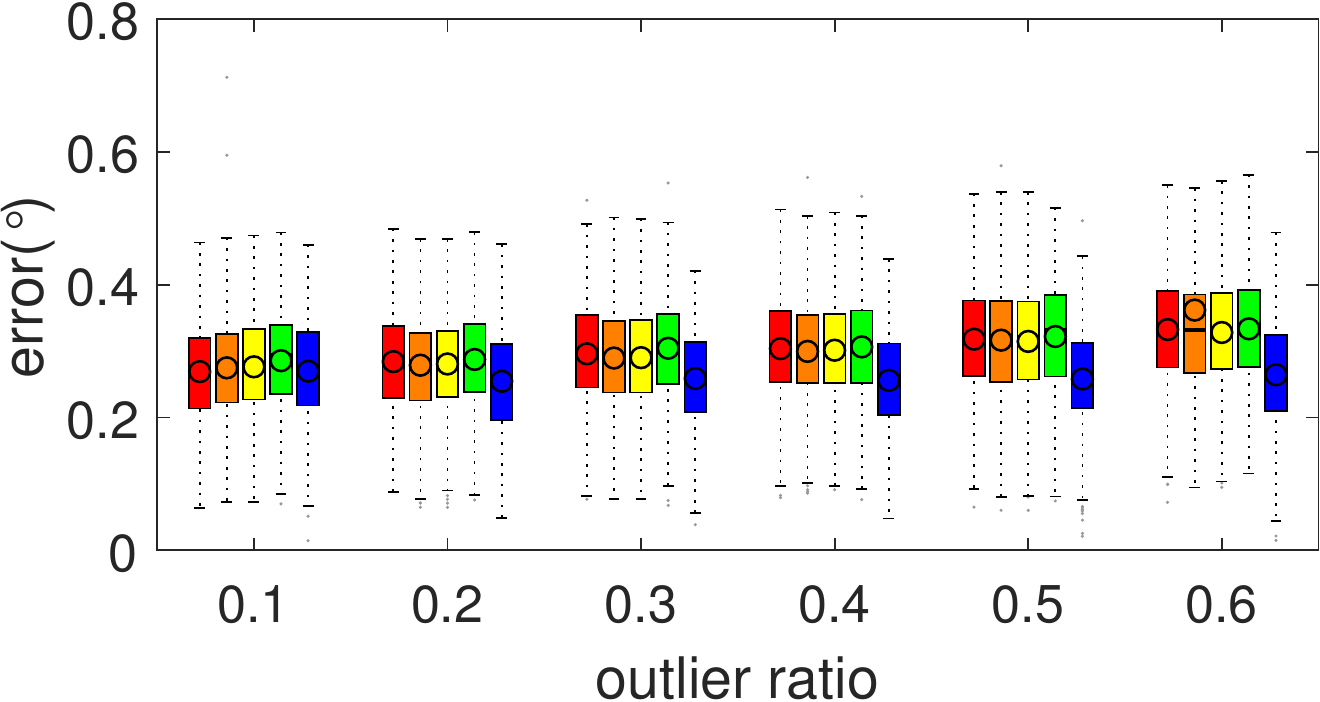}
		\\
		\includegraphics[width=0.3\textwidth]{./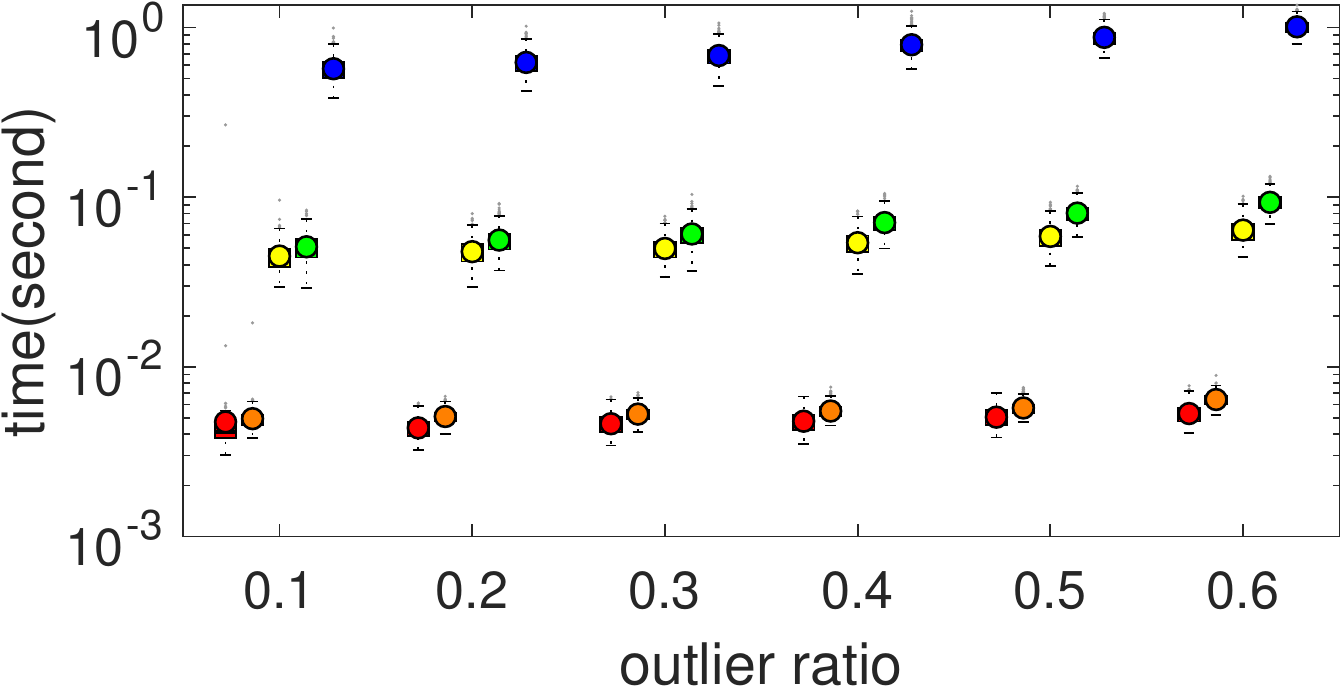}
		&\includegraphics[width=0.3\textwidth]{./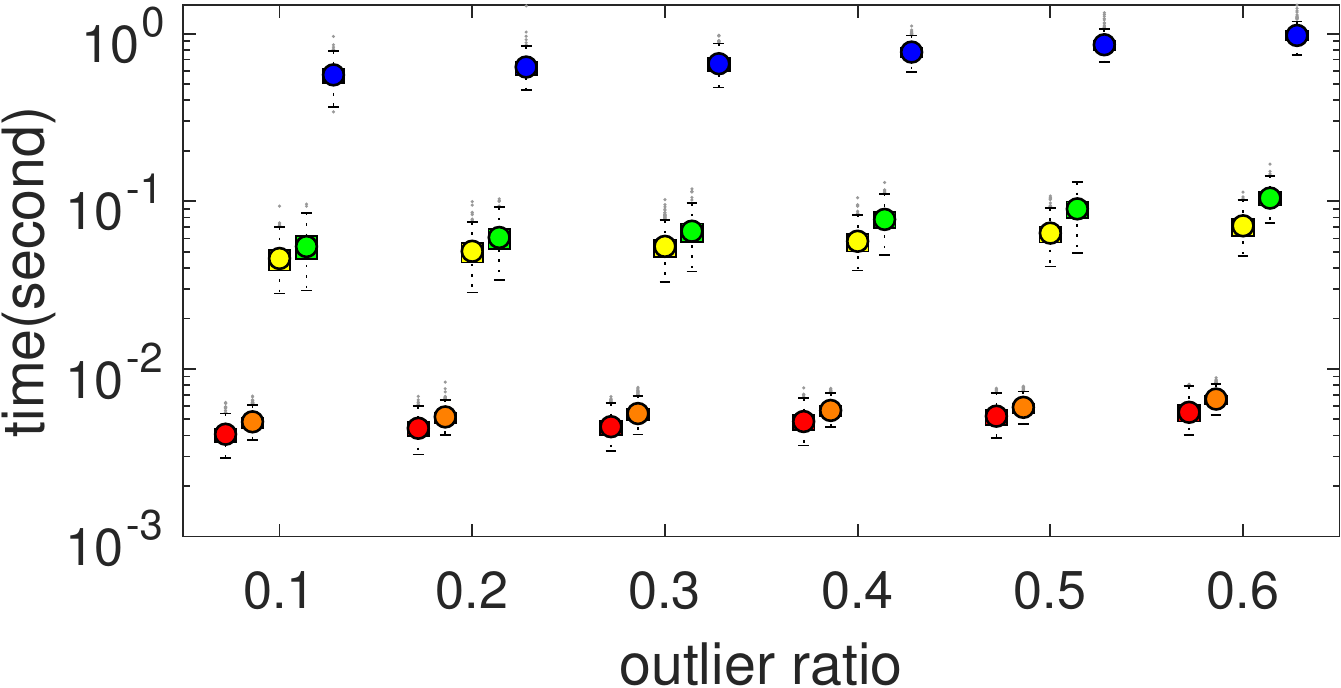}
		&\includegraphics[width=0.3\textwidth]{./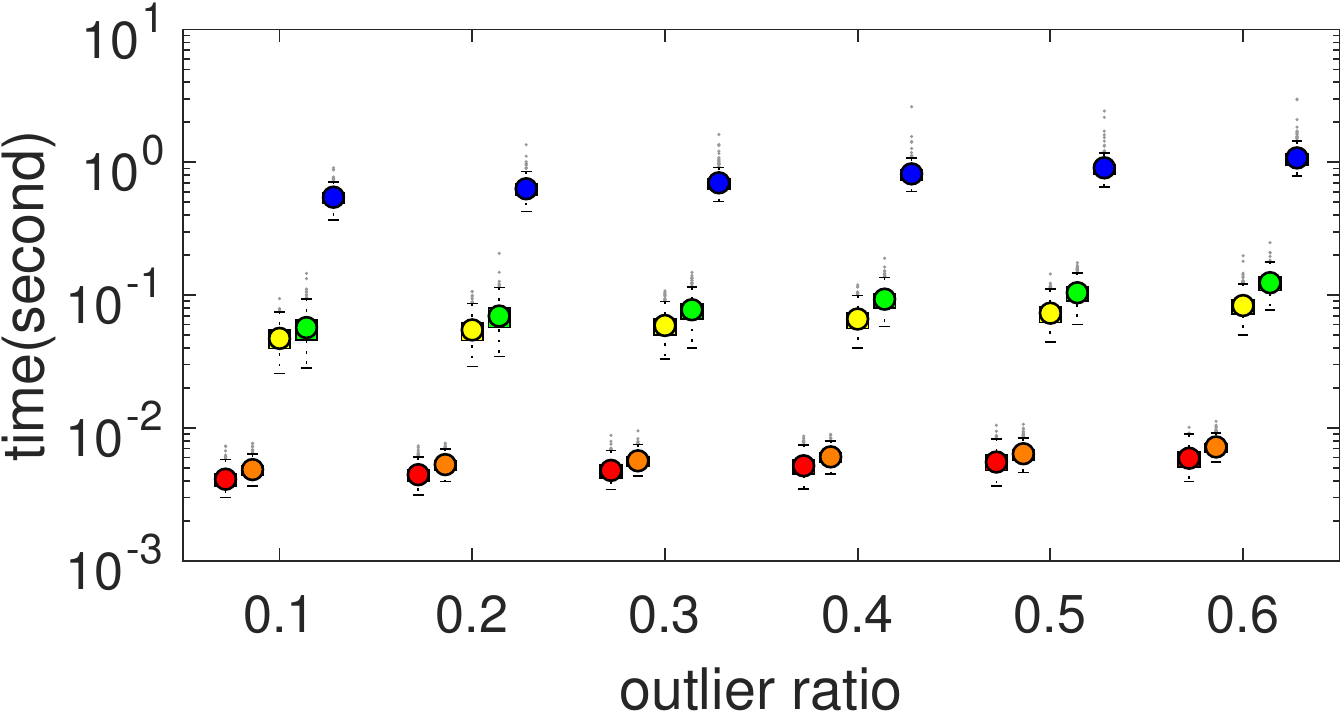}
		\\
		\includegraphics[width=0.3\textwidth]{./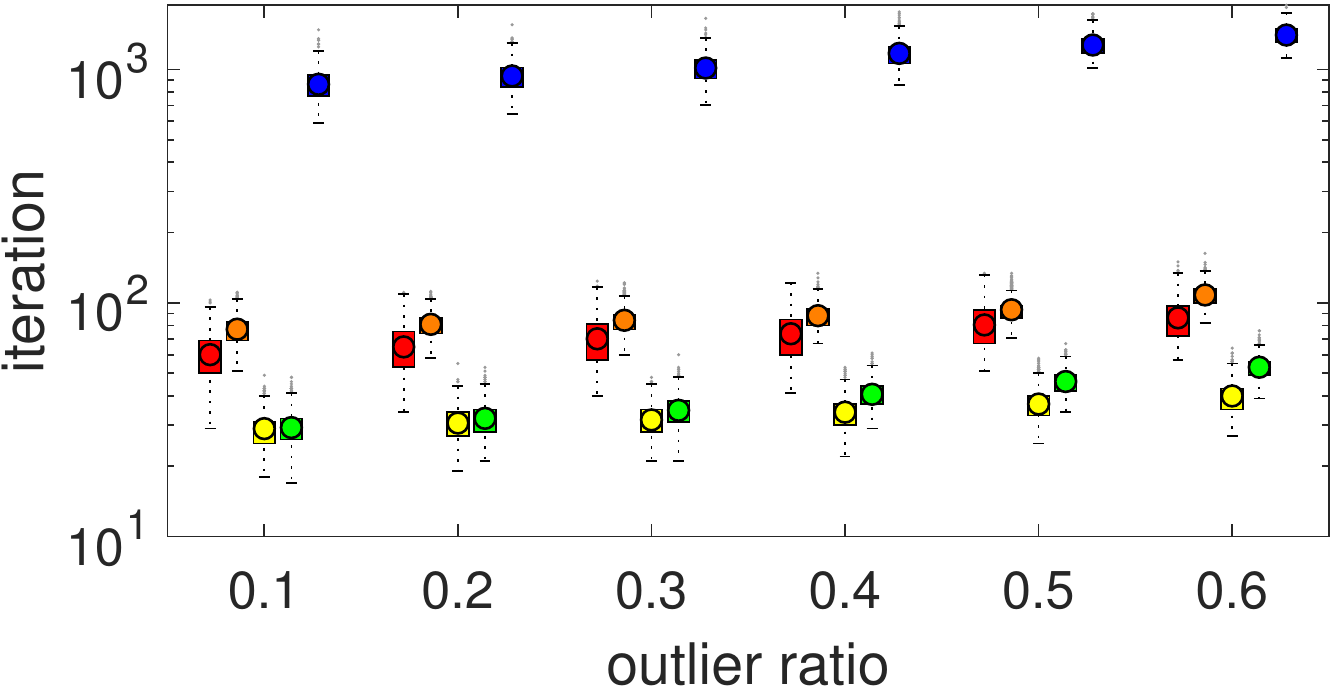}
		&\includegraphics[width=0.3\textwidth]{./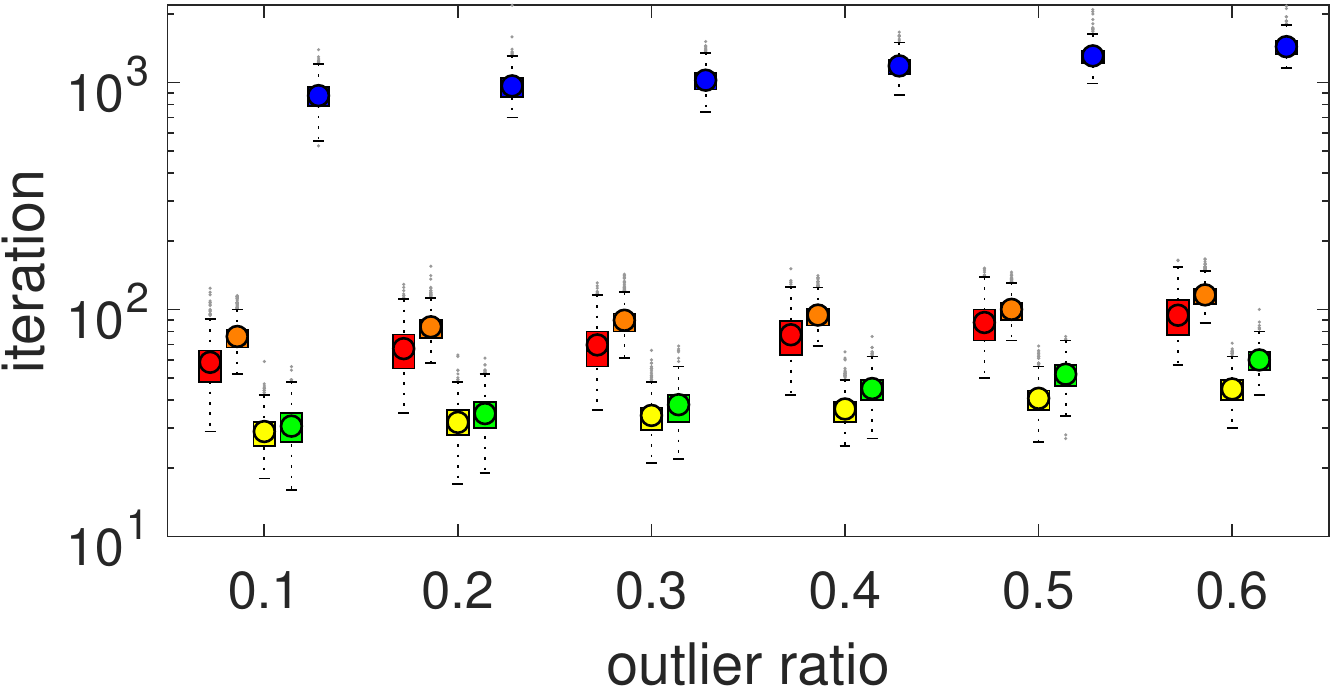}
		&\includegraphics[width=0.3\textwidth]{./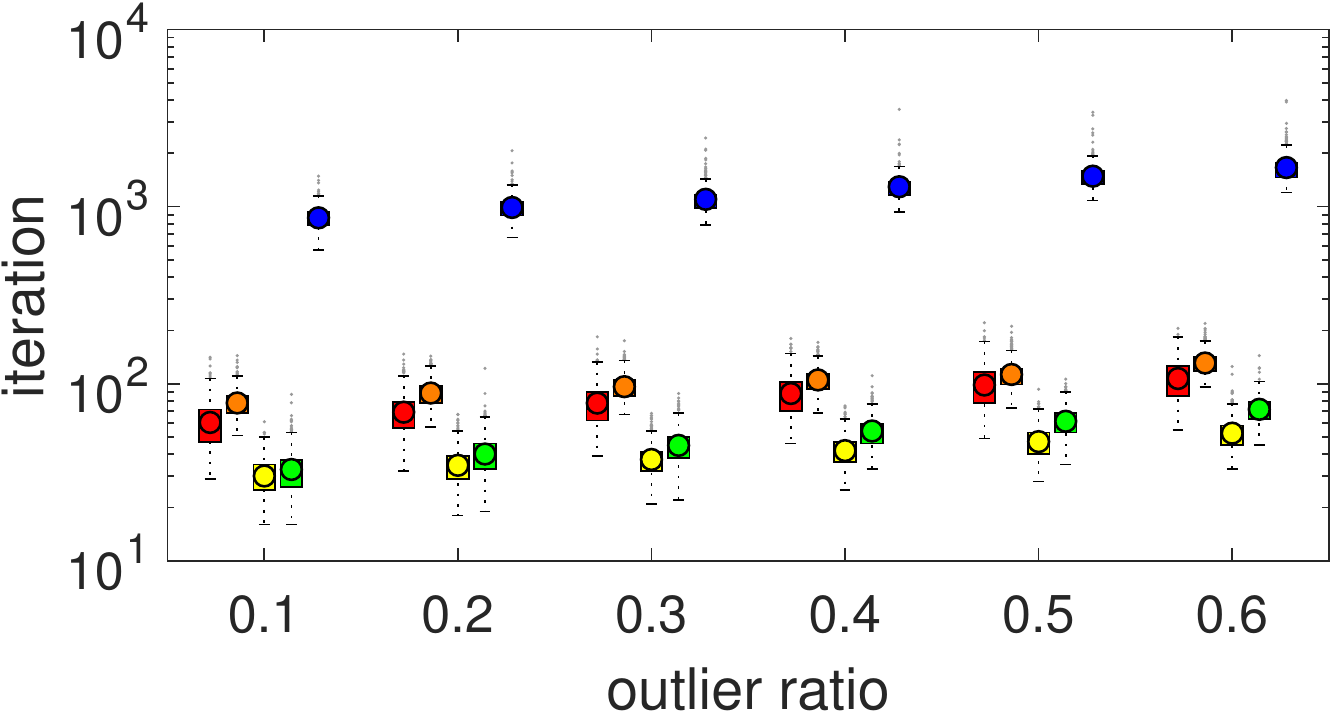}
	\end{tabular}
	\caption{Full Atlanta frame estimation experiments. The first row shows the frame error $\varepsilon$ ($^\circ$) in different noise levels. The second row shows the runtime (second) in different noise levels. The third row shows the iteration count in different noise levels.}\label{Fig:Manhattan}
\end{figure*}

There are three main reasons why rotation search is rather inefficient for vertical direction estimation.
\begin{enumerate}
	\item Multiple solutions. Since $R\bm{v}_0=\bm{v^*}$, if the initial  direction $\bm{v}_0$ and the optimal vertical direction $\bm{v^*}$ are fixed, there are numerous solutions for $R$~\cite{bustos2017guaranteed}. In other words, if $R\bm{v}_0=\bm{v^*}$ is correct, then $R_vR\bm{v}_0=R_v\bm{v^*}=\bm{v^*}$ is also correct, where $R_v$ is a rotation  about axis $\bm{v^*}$. Therefore, all $R_vR$ are solutions.  For the BnB algorithm, if there are multiple solutions, there are many branches which are very close to one of the solutions and their objective value are also close to the ground truth, then the BnB algorithm must spend lots of time pruning the branches.
	\item Higher dimensionality. Since the vertical direction is inherently a two dimensional problem, searching in higher dimension leads to lower efficiency.
	\item Conservative bound. Since rotation search bounds have a three-step geometrical relaxation, the bounds are relatively conservative.
\end{enumerate}

Furthermore, why exp-BnB and ste-circle-BnB  algorithms had more iterations  while they still run faster? This was because on one hand, tighter bounds would remove more aggressively and yield fewer iterations. However, on the other hand, using  tighter bounds in BnB might be counter-productive if calculating the bound itself took significant time.

\textbf{Challenging experiments.} We conducted more experiments on challenging data. In this part, we only tested the bounds in $\mathbb{S}^2$, as the  bounds of rotation search were obviously less efficient.  The number of input was fixed $N=500$. First, all methods were tested on different high outlier ratios $\rho=\{0.65,\cdots,0.9\}$ and different noise levels $\kappa=\{0.005,0.010,0.020\}$. The results are shown in Fig.~\ref{Fig:high-outlier}. Second, all methods were tested on different large noise levels $\kappa=\{0.050,0.100,0.200\}$ and different outlier ratios $\rho=\{0.1,\cdots,0.6\}$. The results are shown in Fig.~\ref{Fig:large-noise}. From the all the results, we can draw the following conclusions:

\begin{itemize}
	\item The exp-BnB had the highest efficiency among all BnB-based methods in such experimental settings. It is worth noting that in large outlier ratio cases ($\rho\geq 0.8$), the exp-BnB algorithm even had comparable efficiency with RANSAC.
	\item The ste-square-BnB  had the least iterations among all BnB-based methods, which showed the bounds were very tight. 
\end{itemize} 

Theoretically, both ste-square-bounds  and SCS-bounds  have no geometrical relaxations, then why SCS-bounds needed more iterations than ste-square-bounds in the
challenging experiments? This was due to the large distortion of the searching domain. For example, the domain near the optimal direction in $\mathbb{S}^2$ might be expanded to a scale-up region in azimuth-elevation rectangle, therefore, the BnB algorithm needed more iterations to prune the near-optimal branches. 

\subsubsection{Full Atlanta frame estimation}

In this part, we verified the performance of our proposed bounds in full Atlanta frame estimation problem.  For the sake of fairness, the experiments were conducted on synthetic Manhattan world and the rotation search method was from~\cite{bazin2012globally-ACCV} without EGI-acceleration. Our proposed methods first estimated the vertical frame direction, and then estimated the horizontal frames  by a one-dimension clustering method, which can be called sequential methods (see appendix~G for more details). 

To generate the input normals in Manhattan world, we randomly selected a point $R_{gt}$ in $SO(3)$ as the Manhattan frames. In other words, each column of $R_{gt}$ was corresponding to a Manhattan frame. The experimental settings were $N=500,  \kappa=\{0.005,0.010,0.020\}$ and $\rho$ was from 0.1 to 0.6. Once the frame directions had been estimated as $R^*$, the estimation error was measured by
\begin{equation}
\varepsilon_{m}=mean\big(\arccos\big(max(abs(R_{gt}^TR^*))\big)\big)
\end{equation}
where $mean({\cdot})$ was average function; $\arccos(\cdot)$ was element-wise arccosine function; $max(\cdot)$ was column-wise max function. It computed the average error of the three frames. 
Note that the solution of rotation search method inherently satisfies the $SO(3)$ constraint, while the solutions of our sequential methods were built without this  constraint, as they were formulated for general Atlanta frame estimation.   

The results are in Fig.~\ref{Fig:Manhattan}. The accuracy of rotation search method was slightly better than  that of the sequential methods in large noise level. This was because the rotation search method considered the three orthogonal constraints. Nevertheless, the runtime of rotation search was much longer than that of the sequential methods, due to the fact that  the sequential methods had lower dimensionality, tighter bounds  and fewer iterations in the BnB framework.

\subsection{Real Data Experiments}

\subsubsection{NYUv2 Data}

\begin{figure}
	\includegraphics[width=0.45\linewidth]{./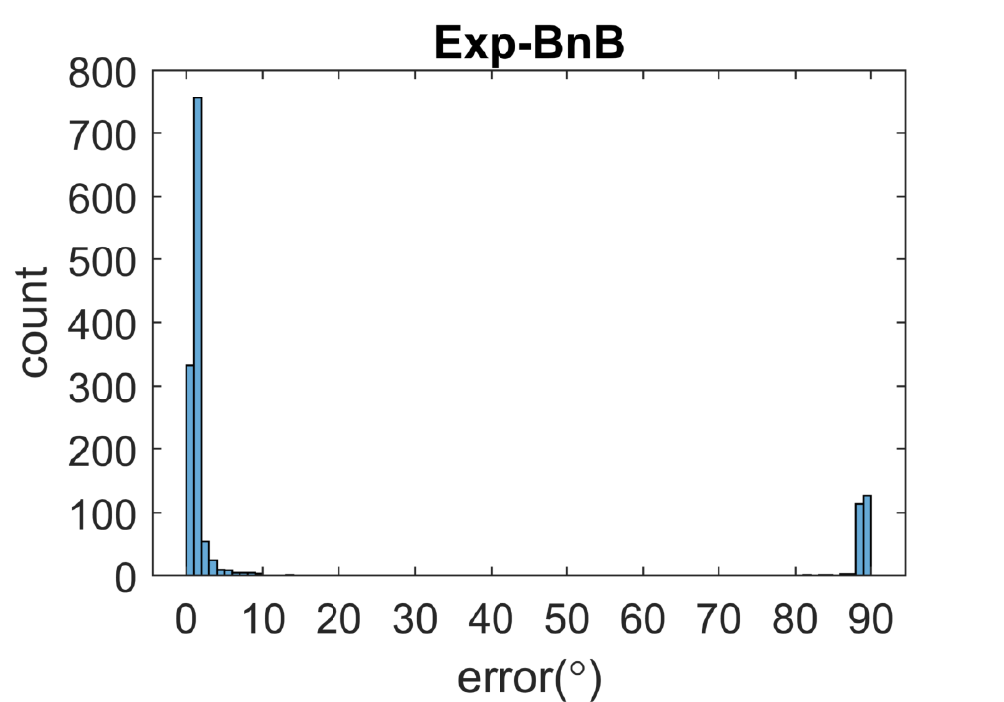}
	\includegraphics[width=0.45\linewidth]{./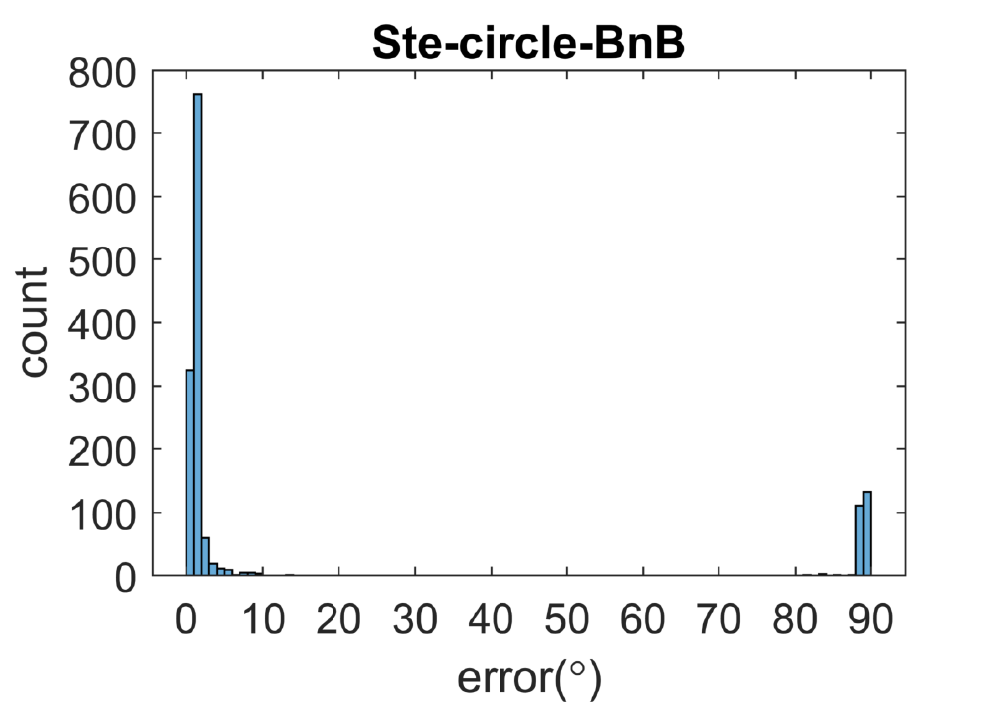}
	\\
	\includegraphics[width=0.45\linewidth]{./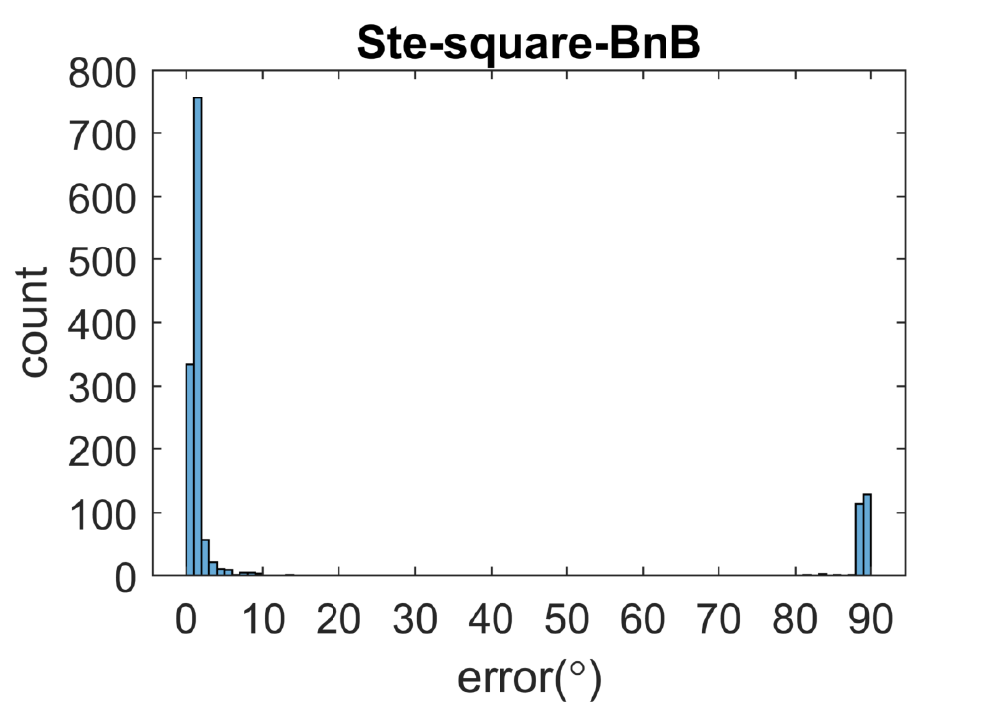}
	\includegraphics[width=0.45\linewidth]{./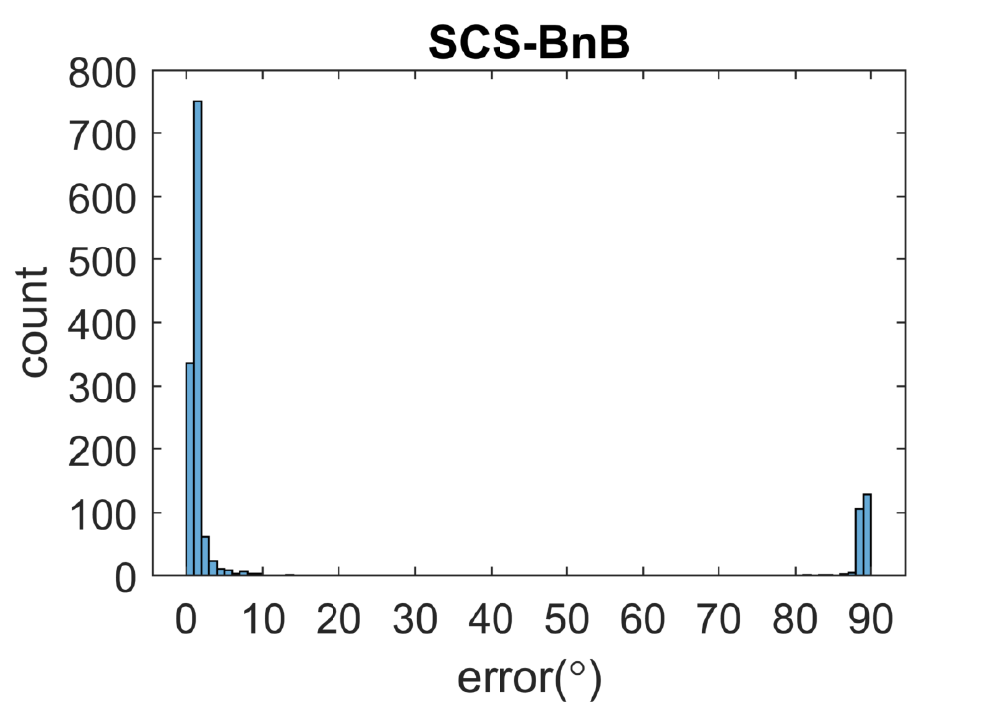}
	\\
	\includegraphics[width=0.45\linewidth]{./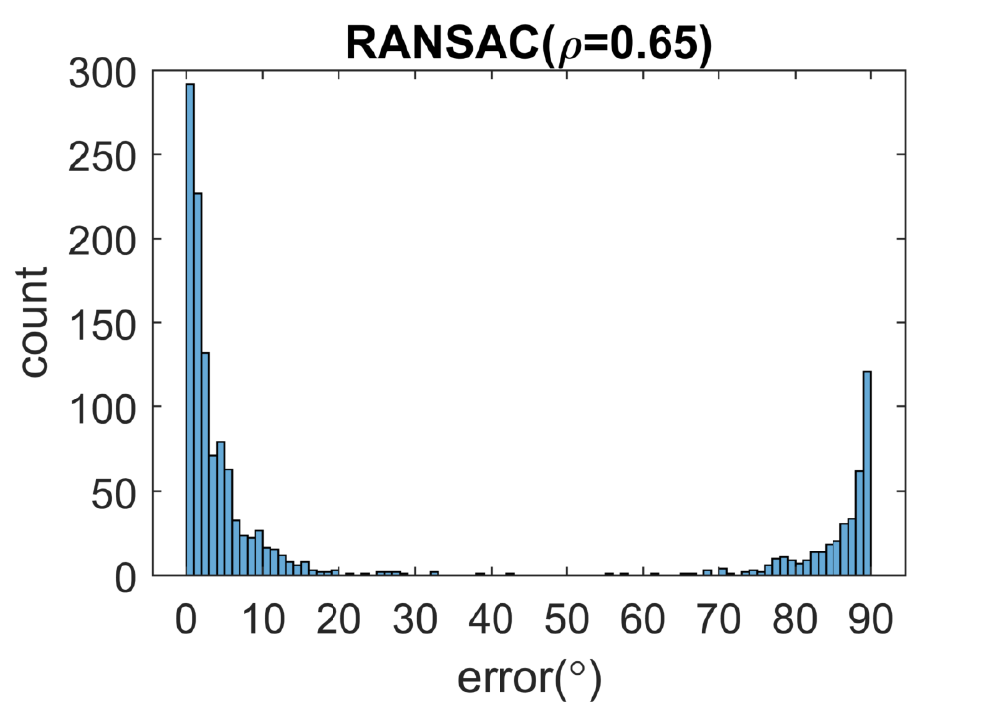}
	\includegraphics[width=0.45\linewidth]{./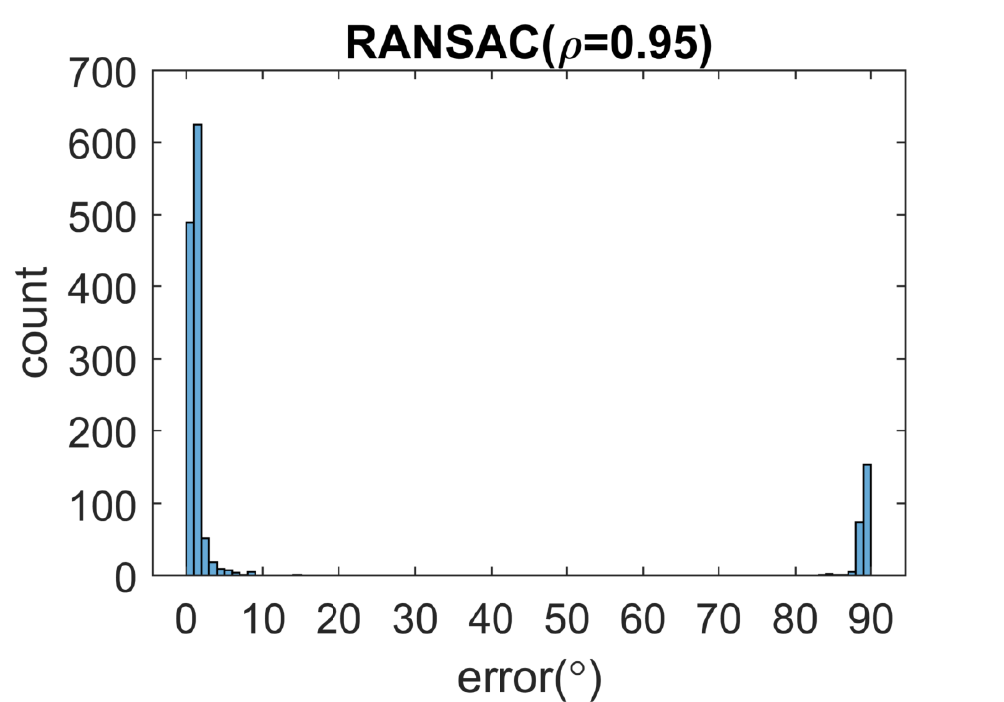}
	\caption{The distribution of error for different methods in NYUv2 data. }\label{Fig:NYU-error}
\end{figure}

\begin{figure}
	~\includegraphics[width=0.8\linewidth]{./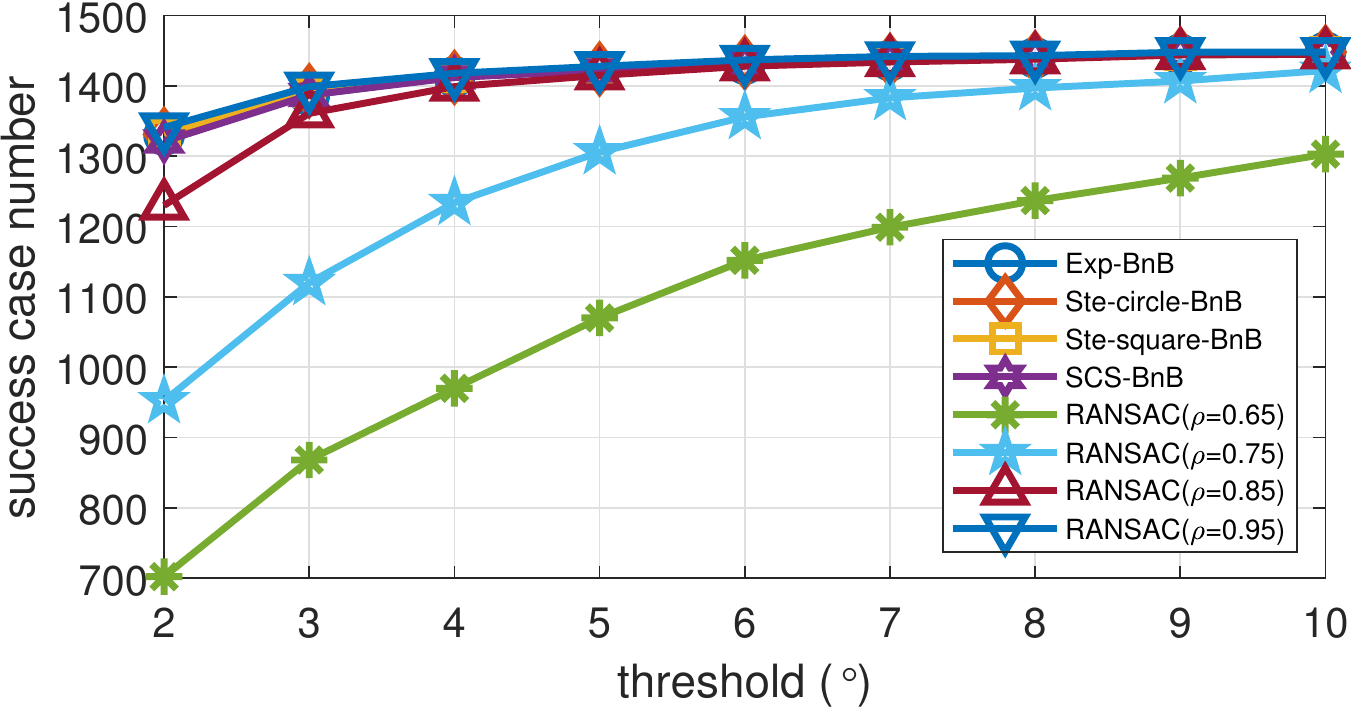}
	\caption{$\tau$-recall curve in NYUv2 data. (Higher is better) }
	\label{Fig:recall}
\end{figure}

We tested our method on the NYUv2 Dataset\cite{Silberman2012IndoorSA}, which contained 1449 RGB images, along with the corresponding depths, as well as camera information.  The data involved a variety of indoor scenes that were considered to be man-made structural world. In our experiments, we utilized the data to estimate the vertical direction of the scenes. Concretely, we generated the normals from the depth image by the Matlab built-in function \textit{pcnormals} and estimated the vertical direction from the downsampled normal data ($N\approx 3000$) for all scenes. The threshold was set to $2^\circ$ in all methods.  For RANSAC, $\rho=\{0.65,0.75,0.85,0.95\}$ were tested since the ground truth of the outlier ratio of each scene was unknown, and the sample iteration was $\Omega$, which was determined by $\rho$ (Eq.(\ref{Eq:omega})).

The distribution of error ($\varepsilon$, see~Eq.\eqref{Eq:error})  is shown in Fig.\ref{Fig:NYU-error}. The results revealed that the estimation errors of the BnB algorithms were all concentrated at $0^\circ$ and $90^\circ$. Because there were some degenerate scenes in the data set, which were degenerated into Manhattan assumption, or even worse, only two main orthogonal frames (Fig.\ref{Fig:failure case}). Estimating vertical direction in such degenerate scenes might return a frame-direction in horizontal plane. Consequently, some errors were concentrated at $90^\circ$. Furthermore, when the outlier proportion $\rho$ was set low, the estimation errors of RANSAC algorithm were not concentrated. When the outlier proportion $\rho$ was set high, the errors were also concentrated. Moreover, to demonstrate the results quantitatively, the $\tau$-recall curve was presented in Fig.~\ref{Fig:recall}, where the success case was satisfied $\varepsilon<\tau$ or $\varepsilon>90^\circ-\tau$.

\begin{figure}
	\centering
	\begin{tabular}{cc}
		\includegraphics[width=0.38\linewidth]{./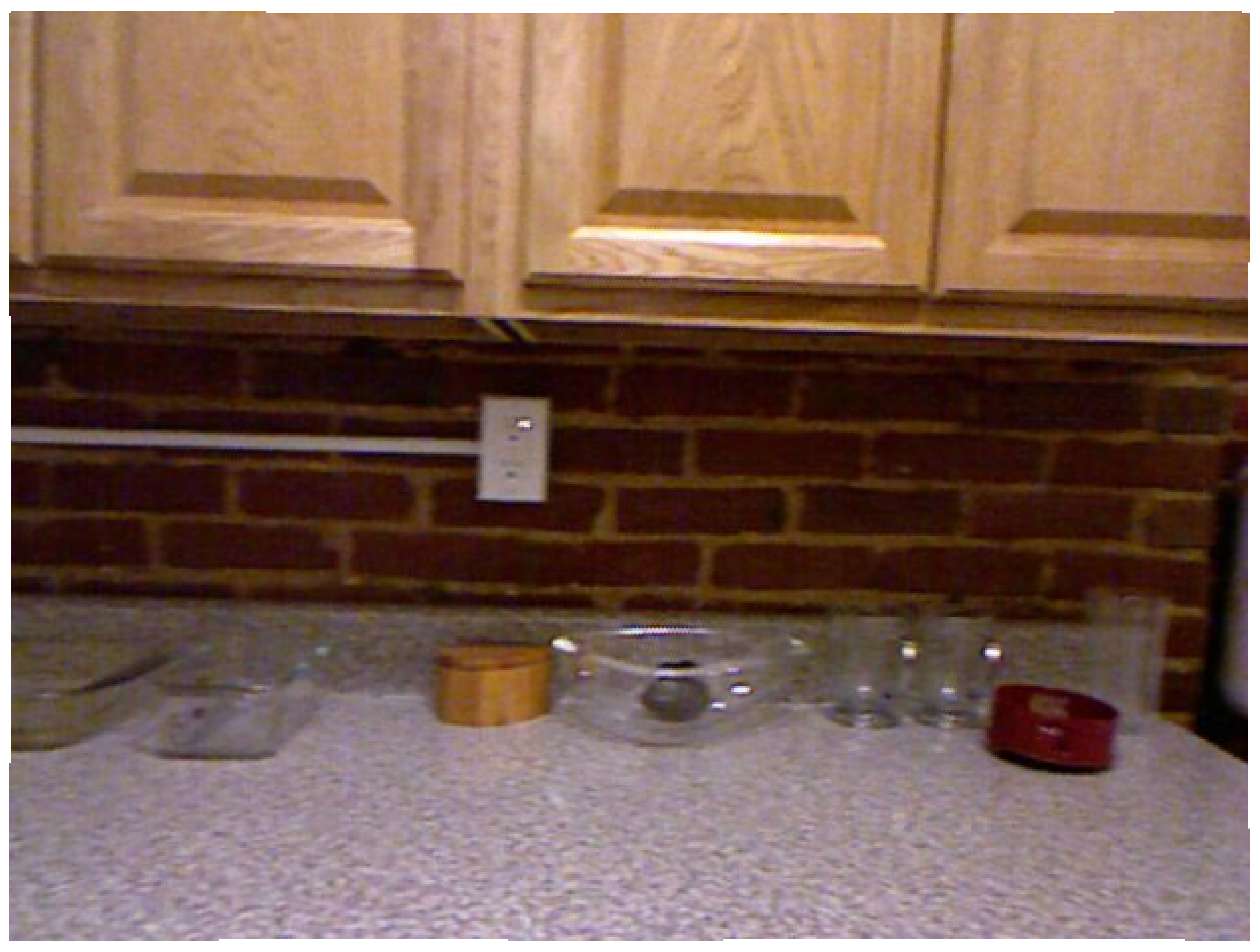}
		
		&\includegraphics[width=0.30\linewidth]{./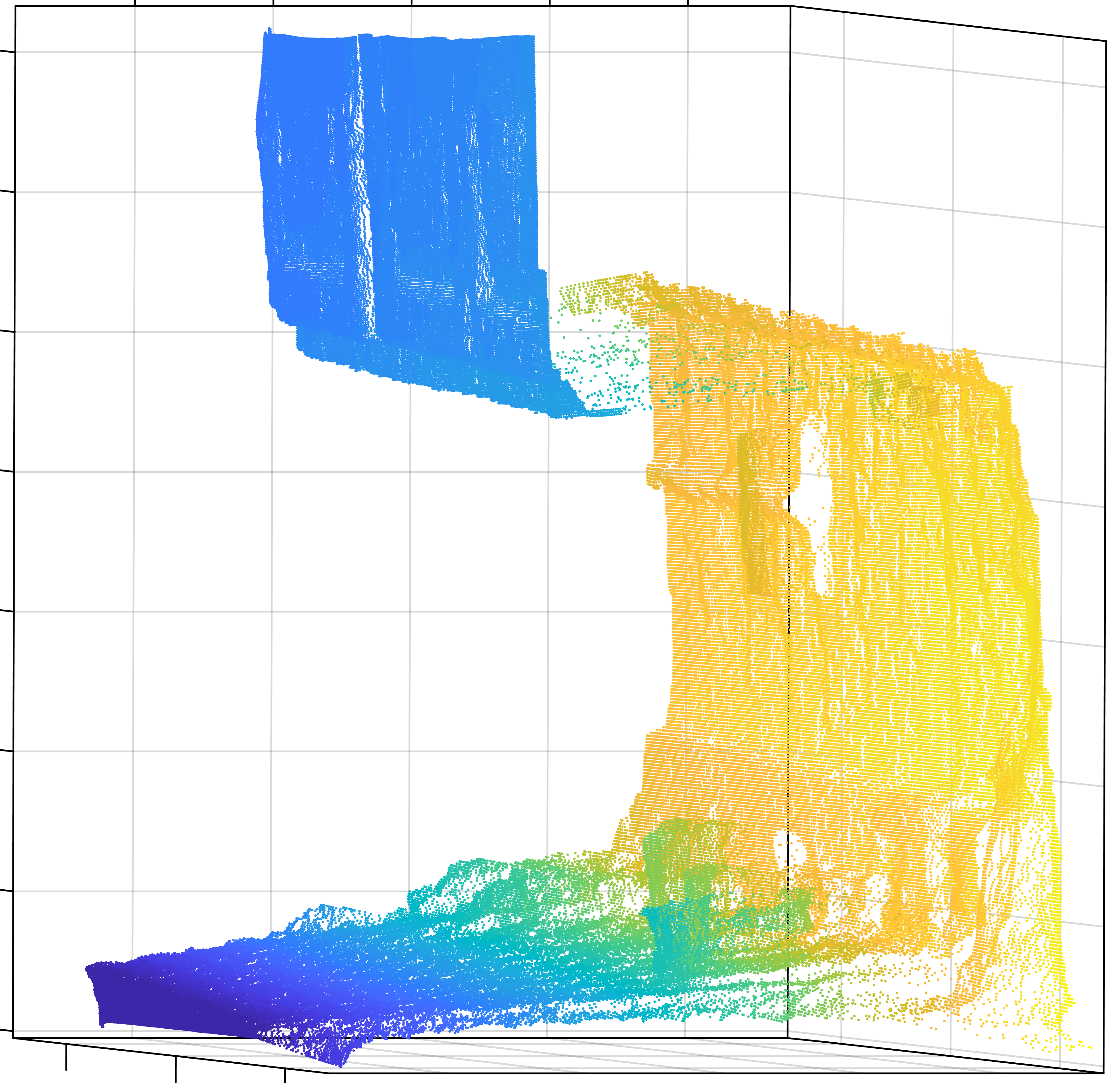}
		\\
		(a)&(b)
	\end{tabular}
	\caption{Degenerate case in NYUv2 data. (a) a degenerate scene that has only two main frames. (b) the point cloud of left scene, which is  viewed from the right side. }
	\label{Fig:failure case}
\end{figure}

\begin{figure}
	
		\includegraphics[width=0.45\linewidth]{./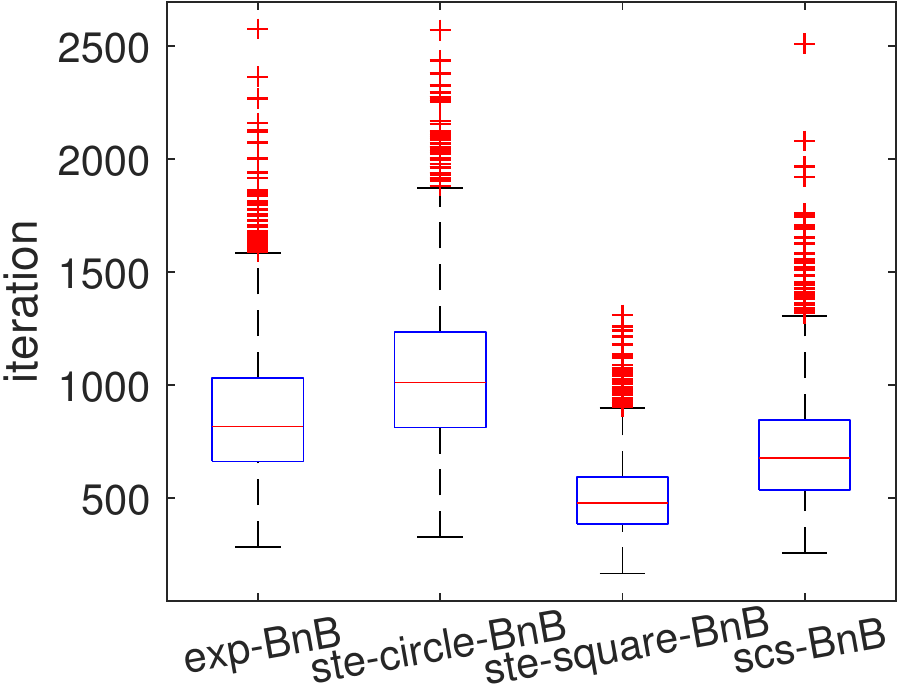}
		\includegraphics[width=0.45\linewidth]{./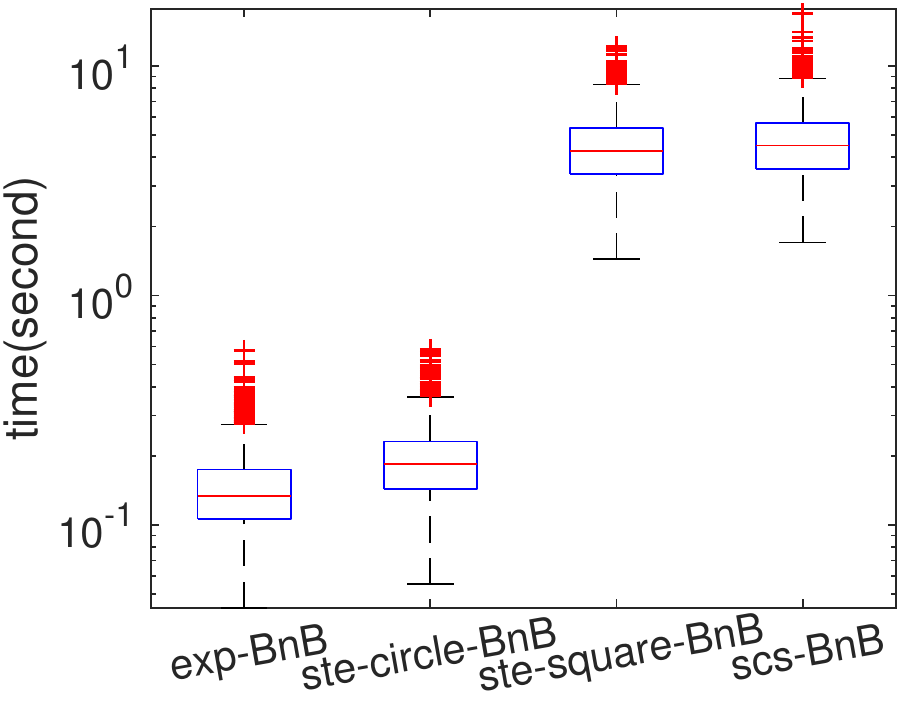}
	
	\caption{The distribution of iteration and runtime in NYUv2 data. }\label{Fig:NYU-iter}
\end{figure}

\begin{table}
\centering
\caption{Median runtime and iteration of different methods in NYUv2 data.}
\begin{tabular}{lcc}
\toprule
{Methods}& median time(s)&median iteration\\
\midrule
Exp-BnB&0.134&816\\
Ste-circle-BnB&0.184&1010\\
Ste-square-BnB&4.256&478\\
SCS-BnB&4.500&675\\
RANSAC($\rho=0.65$)&0.007&36\\
RANSAC($\rho=0.75$)&0.013&72\\
RANSAC($\rho=0.85$)&0.038&203\\
RANSAC($\rho=0.95$)&0.344&1840\\

\bottomrule
\end{tabular}
\label{TABLE:talbe-2}
\end{table}

Furthermore, the four bounds in $\mathbb{S}^2$ had different efficiency. Specifically, the distribution of iteration and runtime in NYUv2 data are in Fig.~\ref{Fig:NYU-iter}. More specifically, the median runtime and iteration can be found in Table~\ref{TABLE:talbe-2}. Obviously, the exp-BnB algorithm was the most efficient.  On the other hand, RANSAC ran very fast when the outlier proportion $\rho$ was set low, however, it might return incorrect results. If the outlier proportion $\rho$ was set high ($\rho=0.95$), its runtime was longer than that of  the exp-BnB algorithm. Besides, to compare with rotation search, we directly quote the results from~\cite{joo2019robust}. With rotation search bounds, it needs 117.06s averagely to estimate Manhattan frames for each scene without input sampling. However, with an efficient bounds computation method proposed in~\cite{joo2019robust}, it needs only 0.07s on average.

\subsubsection{Outdoor Data}

In this part, we verified the validity of our methods with the outdoor scene.
The data set~\cite{Robotic_3D_Scan_Repository} was recorded in the old town of Bremen, Germany (see Fig.(\ref{Fig:bremen})). It
contained 13 3D scans, each with up to 22,500,000 points. Estimating the vertical direction first might be useful to register the scenes~\cite{cai2019practical}. For each scene, it was considered as an Atlanta world  and $[0,0,-1]^T$ was set as the ground truth of vertical direction roughly. We firstly down-sampled the inputs using Matlab built-in function \textit{pcdownsample}. More specifically, a box grid filter, whose input \textit{gridStep} was 0.25, was used to reduce the inputs ($N\approx400,000$). After that their normals were computed by \textit{pcnormals}, and lastly the vertical direction was estimated from the obtained normals. $\rho=\{0.8,0.9\}$ were set in RANSAC and inlier threshold $\tau=1^\circ$ for all methods in this experiments. 

%

The results can be found in Table~\ref{Table-3} (see appendix~H for each scene results). Note that the ground truth for vertical direction was roughly set, and the errors were only indicating that the vertical direction estimation results were roughly correct. In this outdoor settings, all bounds in $\mathbb{S}^2$ can be nested into the BnB algorithm to globally estimate the vertical direction. Furthermore, the results showed that exp-BnB and ste-circle-BnB algorithm had similar performance and were more efficient. Ste-square-BnB had the least iterations among all the methods, however, it needed more time to calculate the bounds. Furthermore, SCS-BnB algorithm needed much more time to estimate the vertical direction in this experiments. Note that RANSAC could also obtain similar results in this experimental settings. Besides, rotation search method could not terminate in 1800s (30min) in each scene. However, according to the results in~\cite{joo2019globally}, with the help of accelerating method, it takes about 80s to estimate Atlanta frames in the whole scene.
\begin{figure}
	\centering
	\includegraphics[width=0.65\linewidth]{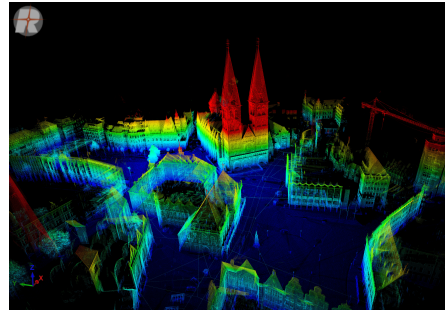}
	\caption{The whole scene of the Bremen city data, which are merged using markers as tie points.~\cite{Robotic_3D_Scan_Repository}}
	\label{Fig:bremen}
\end{figure}

 \begin{table}
 	\centering
 	\caption{Vertical direction estimation results in outdoor data.}
 	\begin{tabular}{lccc}
 		\toprule
 		Methods&median time(s)& iteration& median error($^\circ$)\\
 		\midrule
 		Exp-BnB&1.378&239&1.167\\
 		Ste-circle-BnB&1.182&223&1.141\\
 		Ste-square-BnB&88.470&118&1.141\\
 		SCS-BnB&153.356 &219&1.129 \\
 		RANSAC($\rho=0.8$)&0.941&113&1.140\\
 		RANSAC($\rho=0.9$)&3.792&459&1.173\\
 		\bottomrule
 	\end{tabular}
 \label{Table-3}
 \end{table}

\section{Conclusion}
In this paper, we propose a novel method  for estimating the vertical direction in Atlanta world. It can get the globally optimal solution by applying BnB algorithm, without requiring any prior knowledge of the number of frames. Since estimating vertical direction is inherently a two dimensional problem, we propose new bounds in $\mathbb{S}^2$ for BnB which are different from the conventional bounds in rotation search.


The experimental results show that all the bounds (in $\mathbb{S}^2$ or $SO(3)$) can be nested inside the BnB algorithm to obtain the global solution, and the bounds in $\mathbb{S}^2$ outperform the bounds in $SO(3)$, which is the state-of-the-art technique, for estimating vertical direction globally. Furthermore, these four  bounds in $\mathbb{S}^2$ have different performance. Generally, exp-BnB and ste-circle-BnB have similar performance and are more efficient. Moreover, although ste-square-BnB and SCS-BnB have tighter bounds, they are rather inefficient because of the heavy computational burden.  In addition to the quality of the bounds, appropriate parametrization of searching domain is also an important factor of the efficiency of the BnB algorithm. This is why ste-square-BnB is more efficient than SCS-BnB algorithm.

Lastly, since the ste-square-BnB  has the least iterations, there may be a hope to accelerate the calculation of the  ste-square bounds  to obtain a faster BnB algorithm in further work. In addition, since the  ste-square bounds are very tight in $\mathbb{S}^2$ according to the experimental results, similarly, there may be tighter provable bounds in rotation search ($\mathbb{S}^3$)~\cite{yang2016go,campbell2018globally}.


\ifCLASSOPTIONcompsoc
  \section*{Acknowledgments}
\else
  \section*{Acknowledgment}
\fi

The research leading to these results has partially received funding from the European Unions Horizon 2020 Research and Innovation Program under Grant Agreement No. 785907 (HBP SGA2), from the program of Tongji Hundred Talent Research Professor 2018, and from the Shanghai AI Innovative Development Project 2018. Yinlong, Liu
is funded by Chinese Scholarship Council (CSC).

\ifCLASSOPTIONcaptionsoff
  \newpage
\fi



\bibliographystyle{IEEEtran}
%

\bibliography{egbib}
%
%

%




\end{document}